\title{\papertitleArXiv}
\author{
Hisham Husain$^{\ddagger,\dagger}$ $\:\:\:$ Zac
Cranko$^{\ddagger,\dagger}$ $\:\:\:$ Richard Nock$^{\dagger,\ddagger,\clubsuit}$\\\\
{\small $^\dagger$Data61, $^\ddagger$the Australian National
  University, $^\clubsuit$the University of Sydney
 }\\\\
  \texttt{firstname.lastname@$\{$data61.csiro.au,anu.edu.au$\}$}
}
\date{}
\begin{document}
\thispagestyle{empty}
\maketitle

%

%

\begin{abstract}
$\varepsilon$-differential privacy is a leading protection setting, focused by design on individual privacy. Many applications, in medical / pharmaceutical domains or social networks, rather posit privacy at a \textit{group} level, a setting we call \textit{integral privacy}. We aim for the strongest form of privacy: the group size is in particular \textit{not known} in advance.
We study a problem with related applications in domains cited above that have recently met with substantial recent press: sampling.

Keeping correct utility levels in such a strong model of statistical indistinguishability looks difficult to be achieved with the usual differential privacy toolbox because it would typically scale in the worst case the sensitivity by the sample size and so the noise variance by up to its \textit{square}. We introduce a trick specific to sampling that bypasses the sensitivity analysis. Privacy enforces an information theoretic barrier on approximation, and we show how to reach this barrier with guarantees on the approximation of the target non private density. We do so using a recent approach to non private density estimation relying on the original boosting theory, learning the sufficient statistics of an exponential family with classifiers. Approximation guarantees cover the mode capture problem. In the context of learning, the sampling problem is particularly important: because integral privacy enjoys the same closure under post-processing as differential privacy does, \textit{any} algorithm using integrally privacy sampled data would result in an output equally integrally private. We also show that this brings fairness guarantees on post-processing that would eventually elude classical differential privacy: any decision process has bounded data-dependent bias when the
data is integrally privately sampled.
Experimental results against private kernel density estimation and private GANs displays the quality of our results.
\end{abstract}

\section{Introduction}\label{sec:intro}

Over the past decade, ($\varepsilon$-)differential privacy (DP) has evolved as the leading statistical protection model for individuals \citep{drTA}, as it guarantees plausible deniability regarding the presence of an individual in the input of a mechanism, from the observation of its output.

DP has however a limitation inherent to its formulation regarding \textit{group} protection: what if we wish to extend the guarantee to \textit{subsets} of the input, not just individuals ? Several recent work have started to tackle the problem in different settings where privacy naturally occurs at a feature level, either by inclusion (protect buyers of psychiatric drugs \citep{plkGD}) or by exclusion (protect non-targeted individuals \citep{krwyPA,wDP}). When the group size is limited, it is a simple textbook matter to extend the privacy guarantee by using the subadditivity of the classical sensitivity functions, see \textit{e.g.} \cite[Proposition 1.13]{gTI}, \cite[p 192]{drTA}. This is however not very efficient to retain information as standard randomized mechanisms would get their variance \textit{scaled up to the square of} the maximal group size \cite[Section 3]{drTA} (see also Figure \ref{fig:Princ}, left).

There is fortunately a workaround which we develop in this paper from a recent boosting algorithm for non private density estimation \citep{cnBD}, and grants protection for \textit{all} subgroups of the population (and not just singletons as for DP), a setting we refer to as \textit{integral privacy}. Just as DP, integral privacy is not a binary notion of privacy: it comes with a budget whose relaxation can allow for better approximations of the non-private objective. The take-home message from our paper
challenges the misleading intuition that integral privacy would push too far the constraints on statistical indistinguishability to allow for efficient learning: there is indeed an information-theoretic barrier --- which we give --- for solutions to be integrally private, but we show how to reach it (Theorem \ref{kl-upper-lower}) while delivering guaranteed approximations of the target under just slightly stronger assumptions than those of the boosting model \citep{cnBD} (Theorem \ref{kl-upper}); furthermore, we are also able to give approximation guarantees on a crucial problem for sampling and generative models: mode capture (Theorem \ref{first-mode-capture}). As the integral privacy constraint vanishes, approximations converge to the best possible results, inline with \cite{cnBD}. In the other direction, as the integral privacy constraint is reinforced, stronger guarantees hold on the relative independence of the output of any sensitive algorithm (\textit{e.g.} deciding a loan or hire for a particular input individual) with respect to any group input data. In other words, we get guarantees on unbiasedness or fairness that can elude individual privacy mechanisms like classical differential privacy (Section \ref{sec:disc}). This is an important by-product of our model, considering the recent experimental evidence of the potential negative impact of differential privacy on fairness \citep{bsDPH}.

\begin{figure}[t]
  \centering
\begin{tabular}{lcc"c"c|ccc}
 & \multirow{2}{*}{\makecell{\vspace{-2cm}{\Large $\frac{\var(m)}{\var(1)}$}}} & \multirow{2}{*}{\makecell{\vspace{-2cm}{\Large $\frac{\varepsilon(k)}{\varepsilon(1)}$}}} & \multirow{3}{*}{\makecell{\includegraphics[trim=90bp 410bp 480bp
                                                                         10bp,clip,width=.24\linewidth]{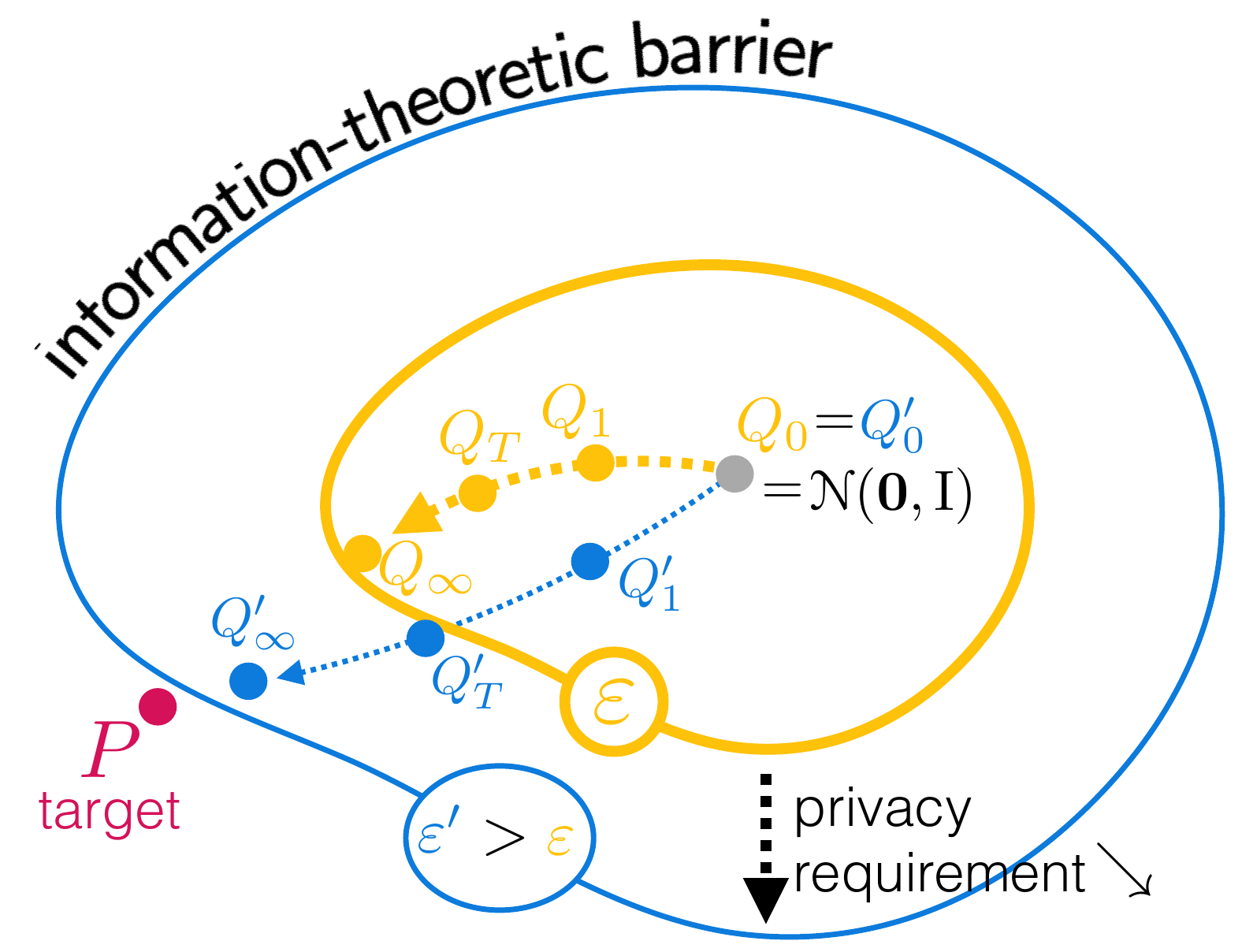}}} & \hspace{-0.35cm} target $P$ \hspace{-0.35cm} & \multicolumn{3}{c}{\hspace{-0.2cm} $Q$ learned} \\
  & & & & \hspace{-0.30cm} \includegraphics[trim=20bp 20bp 20bp
20bp,clip,width=0.115\textwidth,height=0.115\textwidth]{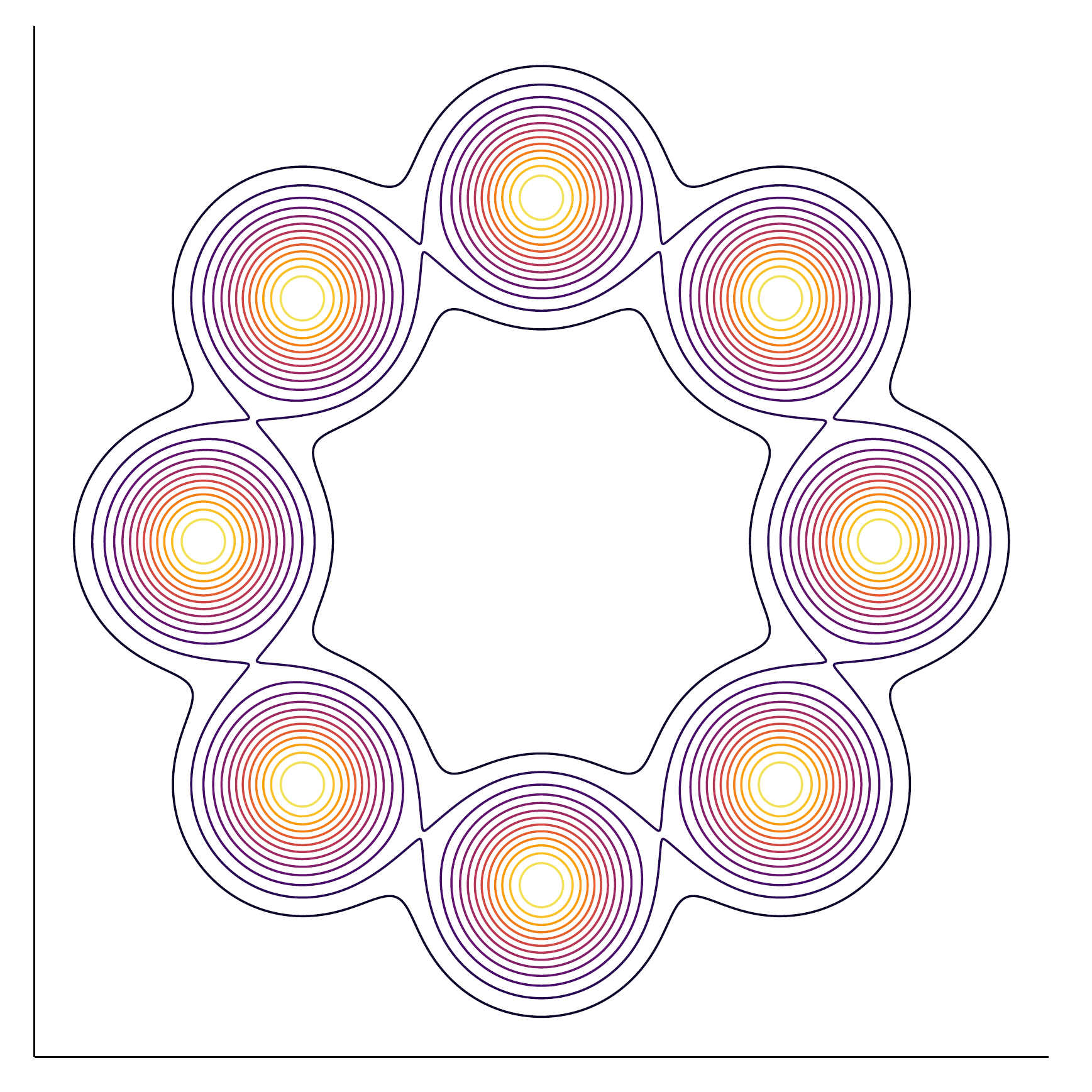}
\hspace{-0.35cm} & \hspace{-0.2cm}  \includegraphics[trim=20bp 20bp 20bp
20bp,clip,width=0.115\textwidth,height=0.115\textwidth]{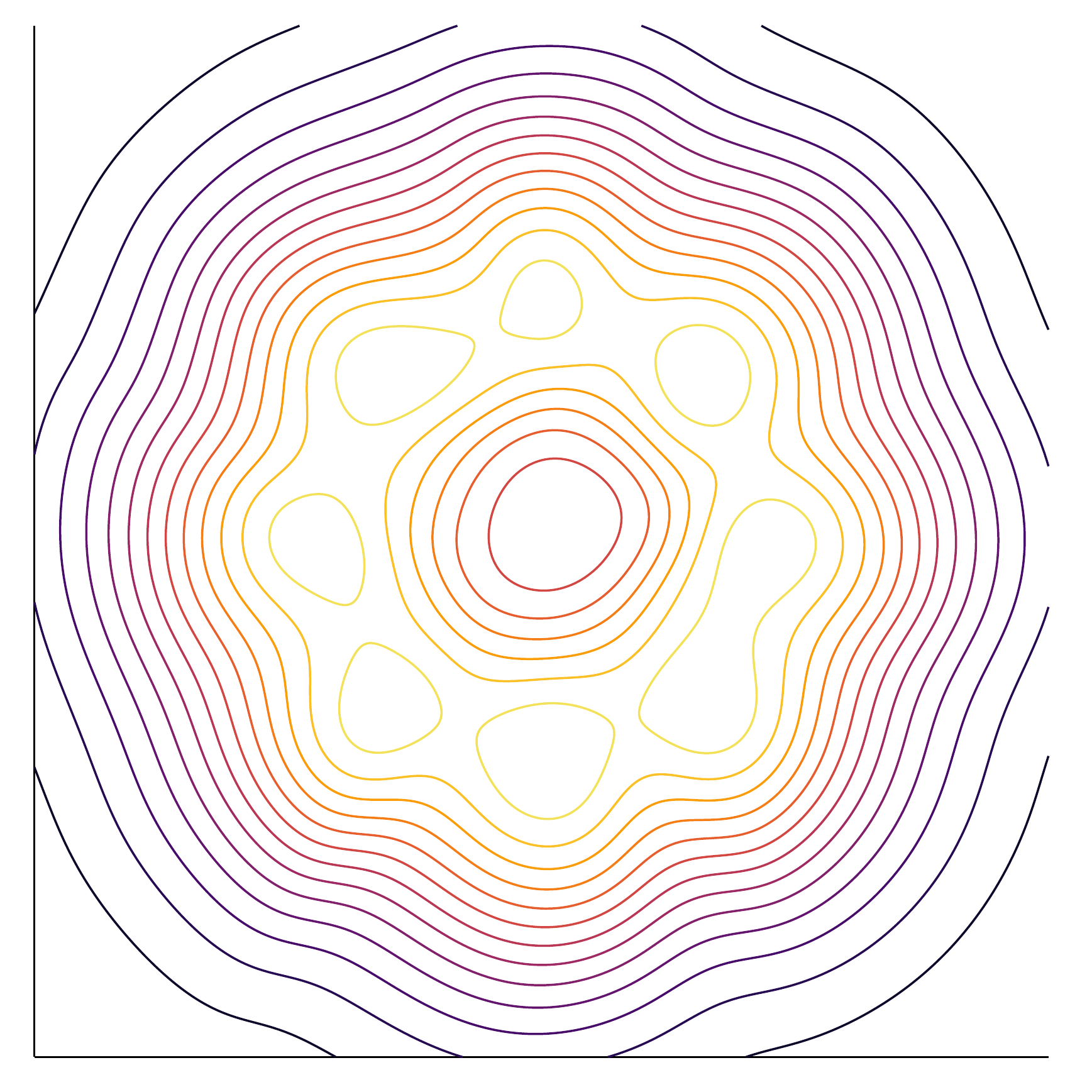}
\hspace{-0.2cm} & \hspace{-0.2cm}  \includegraphics[trim=15bp 15bp 15bp
15bp,clip,width=0.115\textwidth,height=0.115\textwidth]{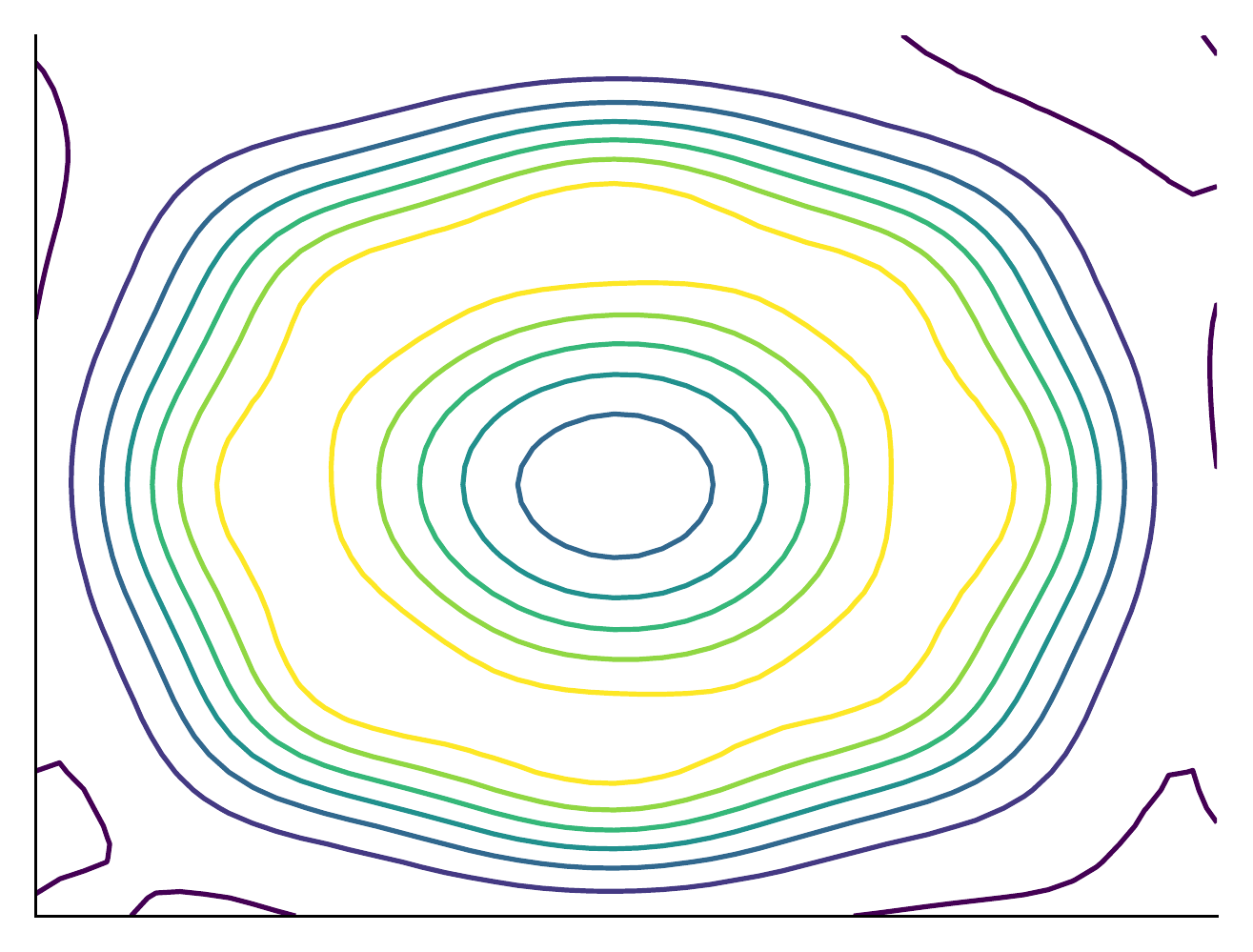}
\hspace{-0.35cm} & \hspace{-0.35cm} 
\imagetop{\includegraphics[trim=15bp 80bp 15bp
120bp,clip,width=0.125\textwidth,height=0.05\textwidth]{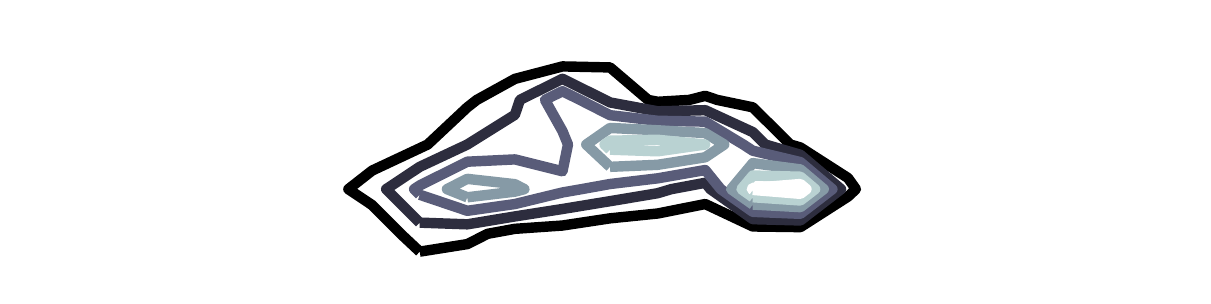}}\\
  {AR} & {\Large $m^2$} & {\Large $1$} & &  \hspace{-0.35cm} & \hspace{-0.35cm}  Us \hspace{-0.2cm} & \hspace{-0.2cm}  Private KDE \hspace{-0.35cm} & \hspace{-0.35cm}  DPGAN\\
  {  us} & {\Large $1$} & {\Large $k$} & Us: $Q_T \rightarrow P$ as $T, \varepsilon \nearrow$ & \hspace{-0.2cm} & \hspace{-0.2cm}   $\epsilon = 0.25$              \hspace{-0.2cm} & \hspace{-0.2cm}  $\epsilon = 100$ \hspace{-0.35cm} & \hspace{-0.35cm} $\epsilon = 5000$\\
\end{tabular}
\caption{\textit{Center}: our method is guaranteed to get a $Q_T$ that converges to $P$ as privacy constraint is relaxed and the number of boosting iterations increases, under a weak learning assumption. \textit{Left}: the variance of the integrally private sampling in \cite{arTB} (AR) ($\var(m)$) over the differentially private sampling ($\var(1)$) scales quadratically with the number of points $m$ to train the private sampler. We do not suffer this drawback because our distributions do not change. However, our privacy budget to sample $k$ points ($\varepsilon(k)$) scales linearly compared to the privacy budget to release one ($\varepsilon(1)$); in comparison, \cite{arTB}'s does not change. \textit{Right}: Our method vs private KDE \citep{arTB} and DPGAN \citep{xlwwzDP} on a ring Gaussian mixture (see Section \ref{sec:exp}, $m = k = 10 000$). Remark that the GAN is subject to mode collapse.\label{fig:Princ}}
\vspace{-0.4cm}
\end{figure}

\noindent $\triangleright$ \textbf{The trick we exploit} bypasses sensitivity analysis which would risk blowing up the noise variance (Figure \ref{fig:Princ}). This trick considers subsets of densities with prescribed range, that we call \textit{mollifiers}\footnote{It bears superficial similarities with functional mollifiers \cite[Section 7.2]{gtEP}.}, which \textit{directly} grants integral privacy when sampling. Related tricks, albeit significantly more constrained and/or tailored to weaker models of privacy, have recently been used in the context of private Bayesian inference, sampling and regression \citep[Section 3]{dnmrRA}, \citep[Chapter 5]{mDP}, \citep[Theorem 1]{wfsPF}, \citep[Section 4.1]{wzAS}. We end up with a bound on the density ratio as in DP but \textit{we do not require anymore samples to be neighbors nor even have related size}; because integral privacy trivially enjoys the same closure under post-processing as differential privacy does, focusing on the upstream task of sampling has a major interest for learning: \textit{any} learning algorithm using integrally private data would be equally integrally private in its output;\\[0.2\baselineskip]
\noindent $\triangleright$ in this set of mollifier densities, we show how to modify the boosted density estimation algorithm of \cite{cnBD} to learn in a mollifier exponential family --- we in fact learn its sufficient statistics using \textit{classifiers} ---, with new guarantees on the approximation of the non private target density and the covering of its modes that degrade gracefully as the privacy requirement increases.\\[0.2\baselineskip]
\noindent $\triangleright$ Our approach comes with a caveat: the privacy budget spent is proportional to the size of the output (private) sample $k$, see Figure \ref{fig:Princ} (left), which makes our technique worth typically when $k \ll m$. Such was the setting of Australia's Medicare data hack for which $k/m = 0.1$ \citep{rtcUT,crtHD}, and could be the setting of many others \citep{nlTT}. Such a case makes our technique highly competitive against traditional DP techniques that would be scaled for integral privacy by scaling the sensitivity: Figure \ref{fig:Princ} (right) compares with \cite{arTB}'s technique. It is clear that our total integrally private budget ($k\epsilon$) is much smaller than \cite{arTB}'s ($m\epsilon$). The technique of Ald{\`a} and Rubinstein has the advantage to compute a private density: it can generate any number of points keeping the same privacy budget. It however suffers from significant drawbacks that we do not have: (i) its sensitivity is \textit{exponential} in the domain's dimension and (ii) is does not guarantee to output positive measures. Finally, our results' quality would be kept even by dividing $k$ by order of magnitudes, which we did not manage to keep for \cite{arTB}.

The rest of this paper is organized as follows. $\S$ \ref{sec:related} presents related work. $\S$ \ref{sec:def} introduces key definitions and basic results. $\S$ \ref{sec:algo} introduces our algorithm, \pkde, and states its key privacy and approximation properties. $\S$ \ref{sec:exp} presents experiments and two last Sections respectively discuss and conclude our paper.
Proofs are postponed to an \supplement.

\section{Related work}\label{sec:related}

A broad literature has been developed early for discrete distributions \citep{mkagvPT} (and references therein). For a general $Q$ not necessarily discrete, more sophisticated approaches have been tried, most of which exploit randomisation and the basic toolbox of differential privacy \citep[Section 3]{drTA}: given non-private $\tilde{Q}$, one compute the \textit{sensitivity} $s$ of the approach, then use a standard mechanism $M(\tilde{Q}, s)$ to compute a private $Q$. If mechanism delivers $\epsilon$-DP, like Laplace mechanism \citep{drTA}, then we get an $\epsilon$-DP density. Such general approaches have been used for $Q$ being the popular kernel density estimation (KDE, \citep{ghCS}) with variants \citep{arTB,hrwDP,raPF}. 
A convenient way to fit a private $Q$ is to \textit{approximate} it in a specific \textit{function space}, being Sobolev \citep{djwLP,hrwDP,wzAS}, Bernstein polynomials \citep{arTB}, Chebyshev polynomials \citep{tuvFA}, and then compute the coefficients in a differentially private way. This approach suffers several drawbacks. First, the sensitivity $s$ depends on the quality of the approximation: increasing it can blow-up sensitivity in an exponential way \citep{arTB,raPF}, which translates to a significantly larger amount of noise. Second, one always pays the price of the underlying function space's assumptions, even if limited to smoothness \citep{djwLP,djwLPD,hrwDP,wCF,wzAS}, continuity or boundedness \citep{arTB,djwLP,djwLPD,tuvFA}. We note that we have framed the general approach to private density estimation in $\epsilon$-DP. While the state of the art we consider investigate privacy models that are closely related, not all are related to ($\epsilon$) differential privacy. Some models opt for a more \textit{local} (or "on device", because the sample size is one) form of differential privacy \citep{adptLW,djwLP,djwLPD,wCF}, others for \textit{relaxed} forms of differential privacy \citep{hrwDP,raPF}. Finally, the quality of the approximation of $Q$ with respect to $P$ is much less investigated. The state of the art investigates criteria of the form $J(P, Q) \defeq \E I(P, Q)$ where the expectation involves all relevant randomizations, \textit{including} sampling of $S$, mechanism $M$, etc. \citep{djwLP,djwLPD,wCF,wzAS}; minimax rates $J^* \defeq \inf_Q \sup_P J (P, Q)$ are also known \citep{djwLP,djwLPD,wCF}. Pointwise approximation bounds are available \citep{arTB} but require substantial assumptions on the target density or sensitivity to remain tractable.

\section{Basic definitions and results}\label{sec:def}

\noindent $\triangleright$ Basic definitions: let $\mathcal{X}$ be a set (typically, $\mathcal{X} = \mathbb{R}^d$) and let $P$ be the target density. Without loss of generality, all distributions considered have the same support, $\mathcal{X}$. We are given a dataset $D = \braces{x_i}_i$, where each $x_i \sim P$ is an i.i.d. observation. As part of our goal is to learn and then sample from a distribution $Q$ such that $\text{KL}(P,Q)$ is small, where KL denotes the Kullback-Leibler divergence:
\begin{eqnarray} 
\text{KL}(P,Q) &= & \int_{\mathcal{X}} \log\bracket{\frac{P}{Q}}dP
\end{eqnarray}
(we assume for the sake of simplicity the same base measure for all densities, allowing to simplify our notations at the expense of slight abuses of language). We pick the KL divergence for its popularity and the fact that it is the canonical divergence for broad sets of distributions \citep{anMO}. \\[0.2\baselineskip]
\noindent $\triangleright$ Boosting: in supervised learning, a classifier is a function $c : \mathcal{X} \rightarrow \mathbb{R}$ where $\mathrm{sign}(c(x)) \in \{-1, 1\}$ denotes a class. We assume that $c(x) \in [-\log2,\log2]$ and so the output of $c$ is bounded. This is not a restrictive assumption: many other work in the boosting literature make the same boundedness assumption \citep{ssIBj}. We now present the cornerstone of boosting, the weak learning assumption. It involves a weak learner, which is an oracle taking as inputs two distributions $P$ and $Q$ and is required to always return a classifier $c$ that weakly guesses the sampling from $P$ vs $Q$.
\begin{definition}[WLA]\label{def-wla}
Fix $\gamma_P, \gamma_{Q} \in (0,1]$ two constants. We say that $\text{WeakLearner}(.,.)$ satisfies the \textbf{weak learning assumption} (WLA) for $\gamma_P, \gamma_{Q}$ iff for any $P, Q$, $\text{WeakLearner}(P,Q)$ returns a classifier $c$ satisfying $\frac{1}{c^{*}} \cdot \E_P[c] > \gamma_P$ and $\frac{1}{c^{*}} \cdot \E_{Q} [-c] > \gamma_Q$, where $c^{*} = \esssup_{x \in \mathcal{X}} \card{c(x)}$.
\end{definition}
Remark that as the two inputs $P$ and $Q$ become "closer" in some sense to one another, it is harder to satisfy the WLA. However, this is not a problem as whenever this happens, we shall have successfully learned $P$ through $Q$. The classical theory of boosting would just assume one constraint over a distribution $M$ whose marginals over classes would be $P$ and $Q$ \citep{kTO}, but our definition can in fact easily be shown to coincide with that of boosting \citep{cnBD}. A \textit{boosting} algorithm is an algorithm which has only access to a weak learner and, throughout repeated calls, typically combines a sufficient number of weak classifiers to end up with a combination arbitrarily more accurate than its parts.\\[0.2\baselineskip]
\noindent $\triangleright$ Differential privacy, intregral privacy: we introduce a user-defined parameter, $\epsilon$, which represents a privacy budget; $\epsilon > 0$ and the smaller it is, the stronger the privacy demand. Hereafter, $D$ and $D'$ denote input datasets from $\mathcal{X}$, and $D \approx D'$ denotes the predicate that $D$ and $D'$ differ by one observation.  Let $\mathcal{A}$ denote a randomized algorithm that takes as input datasets and outputs samples from $\mathcal{X}$. 
\begin{definition}\label{basicDef}
For any fixed $\epsilon>0$, $\mathcal{A}$ is said to meet $\epsilon$-\textbf{differential privacy} (DP) iff
\begin{eqnarray} \text{Pr}[\mathcal{A}(D) \hspace{-0.05cm} \in \hspace{-0.05cm}  S] \hspace{-0.05cm} \leq \hspace{-0.05cm} \exp(\epsilon) \hspace{-0.1cm} \cdot \hspace{-0.1cm} \text{Pr}[\mathcal{A}(D') \hspace{-0.05cm} \in \hspace{-0.05cm}  S], \hspace{-0.1cm} \forall S \hspace{-0.1cm} \subseteq \hspace{-0.1cm} \mathcal{X}, \hspace{-0.1cm} \forall D \hspace{-0.1cm} \approx \hspace{-0.1cm}  D', \mbox{where $\approx$ means differ by 1 observation}.\label{eqdiffpriv}
\end{eqnarray}
$\mathcal{A}$ meets $\epsilon$-\textbf{integral privacy} (IP) iff \eqref{eqdiffpriv} holds when replacing $\forall D \approx D'$ by a general $\forall D, D'$.
\end{definition} 
Note that by removing the $D \approx D'$ constraint, we also remove any size constraint on $D$ and $D'$. 

\noindent $\triangleright$ \textbf{Mollifiers}. We now introduce a property for sets of densities that shall be crucial for privacy. 
\begin{definition}\label{defMOL}
Let $\mathcal{M}$ be a set of densities with the same support, $\varepsilon > 0$. $\mathcal{M}$ is an \textbf{$\varepsilon$-mollifier} iff 
\begin{eqnarray}
Q(x) & \leq & \exp(\varepsilon) \cdot Q'(x), \forall Q, Q' \in \mathcal{M}, \forall x\in \mathcal{X}\label{ipriv}.
\end{eqnarray}
\end{definition}
Before stating how we can simply transform any set of densities with finite range into an $\varepsilon$-mollifier, let us show why such sets are important for integrally private sampling. 
\begin{lemma}\label{molpriv}
Let $\mathcal{A}$ be a sampler for densities within an $\varepsilon$-mollifier $\mathcal{M}$. Then $\mathcal{A}$ is $\varepsilon$-integrally private. 
\end{lemma}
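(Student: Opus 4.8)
The plan is to unwind the definition of $\varepsilon$-integral privacy (Definition \ref{basicDef}) and reduce the required inequality to the pointwise bound \eqref{ipriv} defining an $\varepsilon$-mollifier. First I would pin down the semantics of ``a sampler for densities within $\mathcal{M}$'': on any input dataset $D$, the algorithm $\mathcal{A}$ selects --- possibly using internal randomness --- a density $Q_D \in \mathcal{M}$ and then returns a single draw $x \sim Q_D$. Writing $\nu_D$ for the (input-dependent) law of this internal choice over $\mathcal{M}$ and $\mu$ for the common base measure, for any measurable $S \subseteq \mathcal{X}$ we have
\begin{eqnarray}
\Pr[\mathcal{A}(D) \in S] &=& \int_{\mathcal{M}} \left( \int_S Q \, \mathrm{d}\mu \right) \mathrm{d}\nu_D(Q),
\end{eqnarray}
which in the deterministic case collapses to $\int_S Q_D \, \mathrm{d}\mu$ since $\nu_D$ is then a Dirac.

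Next I would apply the mollifier inequality. For any $Q \in \mathcal{M}$ and any fixed $Q' \in \mathcal{M}$, \eqref{ipriv} gives $Q(x) \le \exp(\varepsilon) \cdot Q'(x)$ pointwise, hence $\int_S Q \, \mathrm{d}\mu \le \exp(\varepsilon) \cdot \int_S Q' \, \mathrm{d}\mu$. Integrating this against $\nu_D$ and noting the right-hand side is independent of $Q$ yields $\Pr[\mathcal{A}(D) \in S] \le \exp(\varepsilon) \cdot \int_S Q' \, \mathrm{d}\mu$ for \emph{every} $Q' \in \mathcal{M}$. Since $\mathcal{A}(D')$ places all its mass on densities drawn from $\mathcal{M}$, integrating once more against $\nu_{D'}$ gives $\Pr[\mathcal{A}(D) \in S] \le \exp(\varepsilon) \cdot \Pr[\mathcal{A}(D') \in S]$. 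As $D$, $D'$ and $S$ were arbitrary, and no neighbourhood or size constraint on the datasets was ever used, this is exactly $\varepsilon$-integral privacy.

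I do not expect a genuine obstacle: the entire content sits in the definition of an $\varepsilon$-mollifier, and the only points requiring care are handling the internal randomness of $\mathcal{A}$ (so that the bound is established for the full output laws rather than merely for one fixed pair of densities) and the routine measure-theoretic bookkeeping with the shared base measure. I would close with the remark that the argument is completely blind to whether $D \approx D'$, which is precisely the reason the mollifier construction lifts the guarantee from differential to integral privacy at no extra cost.
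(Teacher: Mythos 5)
Your proof is correct and follows essentially the same route as the paper's: unwind Definition \ref{basicDef}, express $\Pr[\mathcal{A}(D) \in S]$ as the mass of $S$ under the selected density, and apply the pointwise mollifier bound \eqref{ipriv} to the density ratio, never using any neighbourhood or size relation between $D$ and $D'$. Your explicit treatment of internal randomness (mixing over $\mathcal{M}$ via $\nu_D$) is a mild generalization of the paper's argument, which writes the sampled density directly as $Q_{\epsilon}(x;D)$, but the core step is identical.
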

(Proof in \supplement, Section \ref{proof_molpriv}) Notice that we do not need to require that $D$ and $D'$ be sampled from the same density $P$. This trick which essentially allows to get "privacy for free" using the fact that sampling carries out the necessary randomization we need to get privacy, is not new: a similar, more specific trick was designed for Bayesian learning in \cite{wfsPF} and in fact the first statement of \citep[Theorem 1]{wfsPF} implements in disguise a specific $\varepsilon$-mollifier related to one we use, $\mathcal{M}_\varepsilon$ (see below). We now show examples of mollifiers and properties they can bear.

\begin{figure}[t]
\centering
\begin{tabular}{cc}
\includegraphics[width=.40\linewidth]{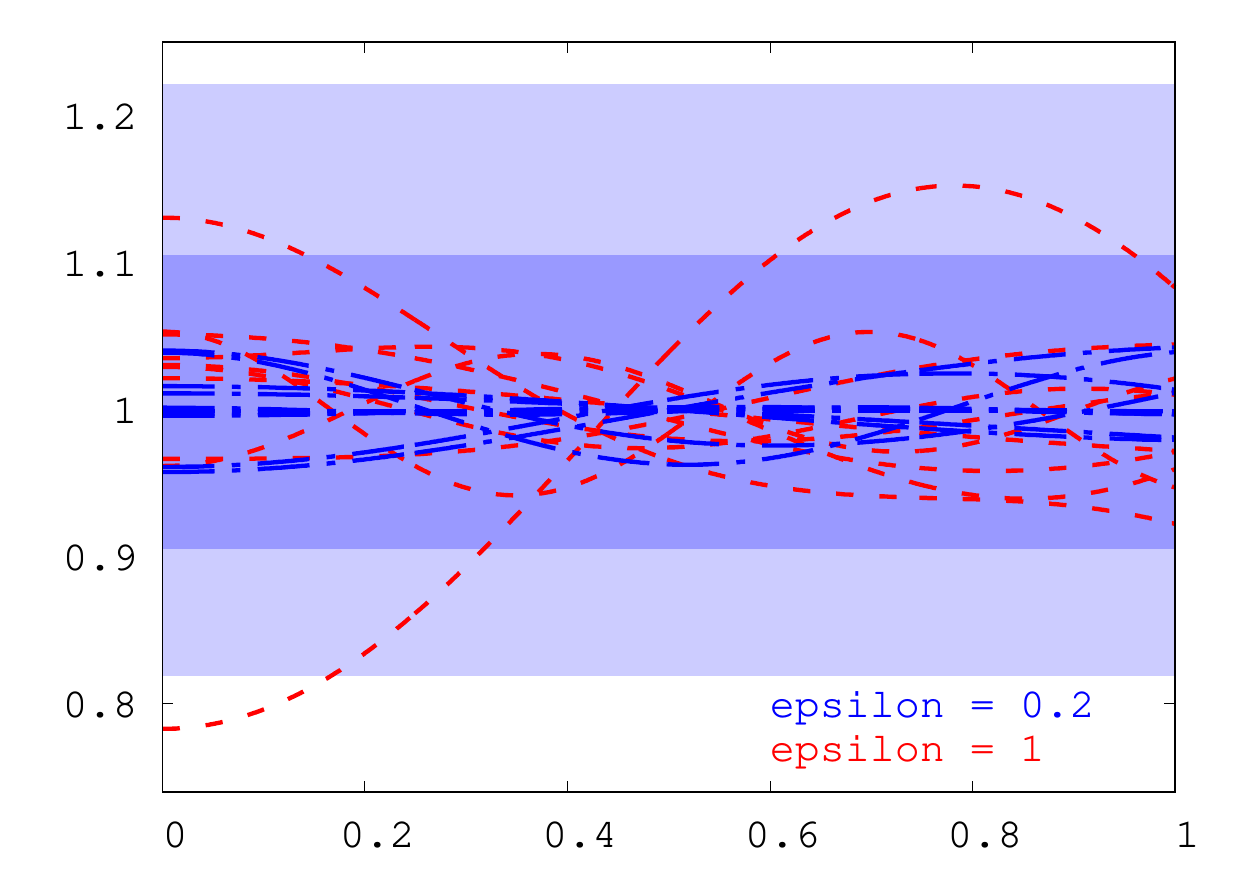} &
\includegraphics[trim=80bp 410bp 480bp
50bp,clip,width=.40\linewidth]{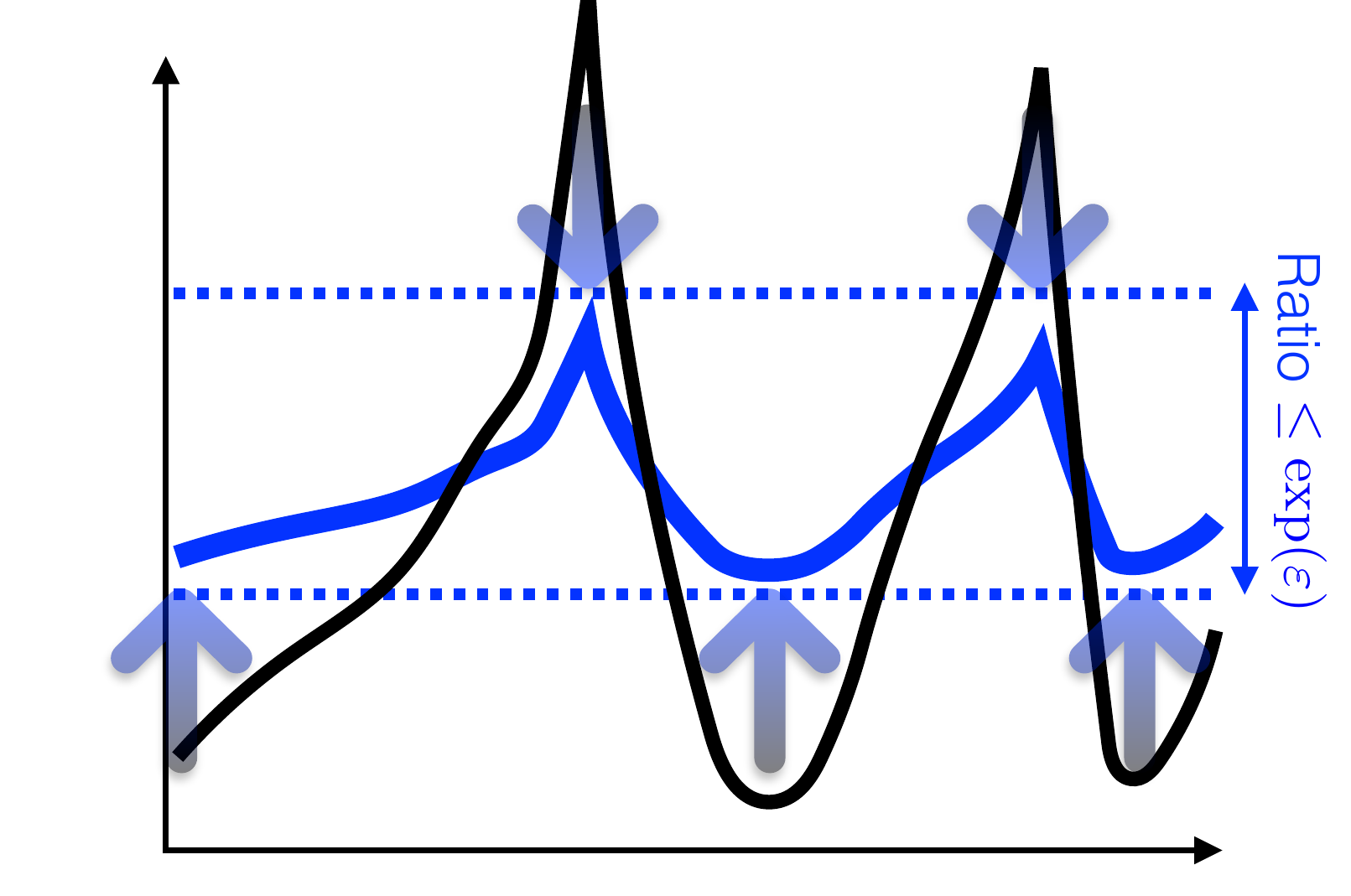}
\end{tabular}
\caption{Left: example of mollifiers for two values of $\varepsilon$, $\epsilon = 1$ (red curves) or $\epsilon = 0.2$ (blue curves), with $\mathcal{X} = [0,1]$. For that latter case, we also indicate in light blue the \textit{necessary} range of values to satisfy \eqref{ipriv}, and in dark blue a \textit{sufficient} range that allows to satisfy \eqref{ipriv}. Right: schematic depiction of how one can transform any set of finite densities in an $\varepsilon$-mollifier without losing the modes and keeping derivatives up to a positive constant scaling.\label{fig:Dampe}}
\vspace{-0.4cm}
\end{figure}

Our examples are featured in the simple case where the support of the mollifier is $[0,1]$ and densities have finite range and are continuous: see Figure \ref{fig:Dampe} (left). The two ranges indicated are featured to depict necessary or sufficient conditions on the overall range of a set of densities to be a mollifier. For the necessary part, we note that any continuous density must have $1$ in its range of values (otherwise its total mass cannot be unit), so if it belongs to an $\varepsilon$-mollifier, its maximal value cannot be $\geq \exp(\varepsilon)$ and its minimal value cannot be $\leq \exp(-\varepsilon)$. We end up with the range in light blue, in which any $\varepsilon$-mollifier has to fit. For the sufficiency part, we indicate in dark blue a possible range of values, $[\exp(-\varepsilon/2), \exp(\varepsilon/2)]$, which gives a sufficient condition for the range of all elements in a set $\mathcal{M}$ for this set to be an $\varepsilon$-mollifier\footnote{We have indeed $Q(x)/Q'(x) \leq \exp(\varepsilon/2) / \exp(-\varepsilon/2) = \exp(\varepsilon), \forall Q, Q' \in \mathcal{M}_\varepsilon, \forall x\in \mathcal{X}$.}. Let us denote more formally this set as $\mathcal{M}_\varepsilon$.

Notice that as $\varepsilon \rightarrow 0$, any $\varepsilon$-mollifier converges to a singleton. In particular,
all elements of $\mathcal{M}_\varepsilon$ converge in distribution to the uniform distribution, which would also happen for sampling using standard mechanisms of differential privacy \citep{drTA}, so we do not lose qualitatively in terms of privacy. However, because we have no constraint apart from the range constraint to be in $\mathcal{M}_\varepsilon$, this freedom is going to be instrumental to get guaranteed approximations of $P$ via the boosting theory. Figure \ref{fig:Dampe} (right) also shows how a simple scale-and-shift procedure allows to fit any finite density in $\mathcal{M}_\varepsilon$ while keeping some of its key properties, so "mollifying" a finite density in $\mathcal{M}_\varepsilon$ in this way do not change its modes, which is an important property for sampling, and just scales its gradients by a positive constant, which is is an important property for learning and optimization.
\section{Mollifier density estimation with approximation guarantees}\label{sec:algo}
The cornerstone of our approach is an algorithm that (i) learns an explicit density in an $\varepsilon$-mollifier and (ii) with approximation guarantees with respect to the target $P$. This algorithm, \pkde, for Mollified Boosted Density Estimation, is depicted below.

\begin{algorithm}[t]
\caption{\pkde ($\textsc{wl}, T,\epsilon,Q_0$)}\label{mainalg}
\begin{algorithmic}[1]
\STATE \textbf{input}: Weak learner \textsc{wl}, $\#$ boosting iterations $T$, privacy parameter $\varepsilon$,\\\hspace{1cm}initial integrally private distribution $Q_0$, non-private target $P$;
\FOR{$t = 1,\ldots,T$}
	\STATE $\theta_t(\epsilon) \gets \bracket{\frac{\epsilon}{\epsilon + 4\log(2)}}^t$
	\STATE $c_t \gets \textsc{wl} (P,Q_t)$
    \STATE $Q_t \propto Q_{t-1} \cdot \exp(\theta (\epsilon) \cdot c_t)$
\ENDFOR
\STATE \textbf{return}: $Q_T$
\end{algorithmic}
\end{algorithm}

\pkde~is a private refinement of the \textsc{Discrim} algorithm of \cite[Section 3]{cnBD}. It uses a weak learner whose objective is to distinguish between the target $P$ and the current guessed density $Q_t$ --- the index indicates the iterative nature of the algorithm. $Q_t$ is progressively refined using the weak learner's output classifier $c_t$, for a total number of user-fixed iterations $T$.
We start boosting by setting $Q_0$ as the starting distribution, typically a simple non-informed (to be private) distribution such as a standard Gaussian (see also Figure \ref{fig:Princ}, center). The classifier is then aggregated into $Q_{t-1}$ as:
\begin{eqnarray}
Q_t = \frac{\exp(\theta_t (\epsilon) c_t) Q_{t-1}}{\int \exp(\theta_t (\epsilon) c_t) Q_{t-1} dx} = \exp\bracket{\ip{\theta(\epsilon)}{c} - \phi(\theta(\epsilon))} Q_0. \label{defEXP1}
\end{eqnarray}
where $\theta(\epsilon) = (\theta_1 (\epsilon),\ldots,\theta_t (\epsilon))$, $c = (c_1,\ldots,c_t)$ (from now on, $c$ denotes the vector of all classifiers) and $\phi(\theta(\epsilon))$ is the log-normalizer given by
\begin{align} \phi(\theta(\epsilon)) = \log \int_{\mathcal{X}} \exp \bracket{\ip{\theta(\epsilon)}{c}}dQ_0. \end{align}
This process repeats until $t = T$ and the proposed distribution is $Q_{\epsilon}(x;D) = Q_T$. It is not hard to see that $Q_{\epsilon}(x;D)$ is an exponential family with natural parameter $\theta(\epsilon)$, sufficient statistics $c$, and base measure $Q_0$ \citep{anMO,cnBD}. We now show three formal results on \pkde.\\[0.2\baselineskip]
\noindent $\triangleright$ \textbf{Sampling from $Q_T$ is $\varepsilon$-integrally private} --- Recall $\mathcal{M}_\varepsilon$ is the set of densities whose range is in $\exp [-\varepsilon/2, \varepsilon/2]$. We now show that the output $Q_T$ of \pkde~is in $\mathcal{M}_\varepsilon$, guaranteeing $\epsilon$-integral privacy on sampling (Lemma \ref{molpriv}). 
\begin{theorem}
\label{privacy-theorem}
$Q_T \in \mathcal{M}_\varepsilon$.
\end{theorem}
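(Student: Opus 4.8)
The plan is to collapse the iterative update of \pkde\ into its closed form \eqref{defEXP1}, so that $Q_T$ is an exponential-family tilt of the base measure $Q_0$, and then to bound the range of the resulting density by controlling a single scalar: the sum $\sum_{t=1}^T \theta_t(\varepsilon)$. Throughout, densities are understood relative to the common base measure, which we take to be $Q_0$ --- a probability measure, e.g. a standard Gaussian --- so that $Q_0$ itself has density $1$ (hence trivially $Q_0 \in \mathcal M_\varepsilon$, and no hypothesis on $Q_0$ beyond being a bona fide density is needed). The geometric schedule $\theta_t(\varepsilon) = \bracket{\varepsilon/(\varepsilon + 4\log 2)}^t$ is chosen precisely so that this sum, scaled by the classifier bound $\log 2$, equals $\varepsilon/4$.

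Concretely, I would first write $\mathrm{d}Q_T/\mathrm{d}Q_0 = f$ with $f(x) = e^{g(x)}/Z$, where $g(x) \defeq \ip{\theta(\varepsilon)}{c(x)} = \sum_{t=1}^T \theta_t(\varepsilon)\, c_t(x)$ and $Z \defeq e^{\phi(\theta(\varepsilon))} = \int_{\mathcal X} e^{g}\, \mathrm{d}Q_0$. Since every weak classifier obeys $c_t(x) \in [-\log 2, \log 2]$ and every $\theta_t(\varepsilon) > 0$, we get $\card{g(x)} \le \log 2 \cdot \sum_{t=1}^T \theta_t(\varepsilon)$ for all $x$, and because $T$ is finite and the terms are positive this is bounded by $\log 2 \cdot \sum_{t\ge 1}\theta_t(\varepsilon)$. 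Writing $r \defeq \varepsilon/(\varepsilon + 4\log 2) \in (0,1)$, the geometric series sums to $\sum_{t\ge 1} r^t = r/(1-r) = \varepsilon/(4\log 2)$, so $\card{g(x)} \le \varepsilon/4$, i.e. $e^{g(x)} \in [e^{-\varepsilon/4}, e^{\varepsilon/4}]$ uniformly in $x$.

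It then only remains to control the normalizer $Z$, which is the sole (mild) subtlety: since $Q_0$ is a probability measure, $Z = \int e^{g}\, \mathrm{d}Q_0$ is an average of a quantity already confined to $[e^{-\varepsilon/4}, e^{\varepsilon/4}]$, hence lies in that same interval. Combining, $f(x) = e^{g(x)}/Z \in [\, e^{-\varepsilon/4}/e^{\varepsilon/4},\ e^{\varepsilon/4}/e^{-\varepsilon/4} \,] = [e^{-\varepsilon/2}, e^{\varepsilon/2}]$ for every $x$, so the range of $Q_T$ lies in $\exp[-\varepsilon/2, \varepsilon/2]$, which is exactly the defining condition for membership in $\mathcal M_\varepsilon$. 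Two remarks close the plan. First, nothing is special about the last iterate: since $\sum_{s\le t}\theta_s(\varepsilon) \le \varepsilon/(4\log 2)$ for every $t$, the identical argument gives $Q_t \in \mathcal M_\varepsilon$ at each step. Second, the factor $4$ (rather than $2$) in $\theta_t(\varepsilon)$ is precisely what apportions a quarter-$\varepsilon$ to the numerator $e^{g}$ and a quarter-$\varepsilon$ to the normalizer $Z$, which combine to the half-$\varepsilon$ half-width that $\mathcal M_\varepsilon$ demands --- this is the only place the specific constant is used, and it is the single step worth stating carefully, since a naive per-iteration induction that renormalizes at each step would over-count the contribution of the normalizers.
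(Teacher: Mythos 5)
Your proposal is correct and follows essentially the same route as the paper's own proof: bound $\sum_t \theta_t(\varepsilon)$ by the geometric series limit $\varepsilon/(4\log 2)$ (the paper's Lemma~\ref{theta-convergences}), deduce $\card{\ip{\theta(\varepsilon)}{c}} \le \varepsilon/4$ from $c_t^* \le \log 2$, bound the log-normalizer $\phi(\theta(\varepsilon))$ by the same $\varepsilon/4$ via averaging against $Q_0$ (Lemma~\ref{privbound}), and exponentiate to place the density of $Q_T$ relative to $Q_0$ in $\exp[-\varepsilon/2,\varepsilon/2]$. Your closing remark about apportioning $\varepsilon/4$ to the exponent and $\varepsilon/4$ to the normalizer is exactly the structure of the paper's argument, so there is nothing to correct.
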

(Proof in \supplement, Section \ref{proof_privacy-theorem}) 
We observe that privacy comes with a price, as for example $\lim_{\epsilon \rightarrow 0} \theta_t(\epsilon) = 0$, so as we become more private, the updates on $Q_.$ become less and less significant and we somehow flatten the learned density --- as already underlined in Section \ref{sec:def}, such a phenomenon is not a particularity of our method as it would also be observed for standard DP mechanisms \citep{drTA}.\\[0.2\baselineskip]
\noindent $\triangleright$ \textbf{\pkde~approximates the target distribution in the boosting framework} --- As explained in Section \ref{sec:def}, it is not hard to fit a density in $\mathcal{M}_\varepsilon$ to make its sampling private. An important question is however what guarantees of approximation can we still have with respect to $P$, given that $P$ may not be in $\mathcal{M}_\varepsilon$. We now give such guarantees to \pkde~in the boosting framework, and we also show that the approximation is within close order to the best possible given the constraint to fit $Q_.$ in $\mathcal{M}_\varepsilon$. We start with the former result, and for this objective include the iteration index $t$ in the notations from Definition \ref{def-wla} since the actual weak learning guarantees may differ amongst iterations, even when they are still within the prescribed bounds (as \textit{e.g.} for $c_t$).
\begin{theorem}
\label{kl-upper}
For any $t\geq 1$, suppose \textsc{wl} satisfies at iteration $t$ the WLA for $\gamma_P^{t}, \gamma_Q^{t}$. Then we have:
\begin{eqnarray}
\text{KL}(P,Q_t) & \leq &\text{KL}(P,Q_{t-1}) - \theta_t(\epsilon) \cdot \Lambda_t,\label{bKLdrop}
\end{eqnarray} where (letting $\Gamma(z) \defeq \log(4/(5 - 3z))$):
\begin{eqnarray}
\Lambda_t \hspace{-0.1cm}= \hspace{-0.1cm}\left\{\begin{array}{ccl}
  \hspace{-0.2cm}  c_t^{*} \gamma_P^{t} + \Gamma(\gamma_Q^{t})    \hspace{-0.3cm} &  \hspace{-0.3cm} \text{if}  \hspace{-0.3cm} &  \hspace{-0.2cm} \gamma_Q^{t} \in [1/3,1]  \mbox{ ("high boosting regime")}\\  
  \hspace{-0.2cm} \gamma_P^{t} + \gamma_Q^{t} - \frac{c_t^{*} \cdot \theta_t(\epsilon)}{2}  \hspace{-0.3cm} &  \hspace{-0.3cm} \text{if}  \hspace{-0.3cm} &  \hspace{-0.2cm} \gamma_Q^{t} \in (0,1/3) \mbox{ ("low boosting regime")} 
\end{array}\right. . \label{consteqs}
\end{eqnarray}
\end{theorem}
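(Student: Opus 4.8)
The plan is to reduce \eqref{bKLdrop} to a one-step bound on the log-partition function of the tilting update, control that bound with the two WLA inequalities, and split on $\gamma_Q^{t}$ only at the very end. First I would record the exact per-iteration change in KL. Since $Q_t \propto Q_{t-1}\exp(\theta_t(\epsilon)\,c_t)$, we have $\log Q_t = \log Q_{t-1} + \theta_t(\epsilon)\,c_t - \phi_t$ with $\phi_t \defeq \log \E_{Q_{t-1}}\!\left[\exp(\theta_t(\epsilon)\,c_t)\right]$, which is finite because $c_t$ is bounded (this is also what certifies $Q_t$ is a genuine density). Integrating $\log(P/Q_t)$ and $\log(P/Q_{t-1})$ against $P$ and subtracting, the $\int \log P\,\mathrm dP$ terms cancel and
\[
\text{KL}(P,Q_t) - \text{KL}(P,Q_{t-1}) \;=\; \phi_t - \theta_t(\epsilon)\,\E_P[c_t].
\]
Since the weak learner's call at iteration $t$ compares $P$ against $Q_{t-1}$, the WLA at that iteration gives $\E_P[c_t] > c_t^{*}\gamma_P^{t}$ and $\E_{Q_{t-1}}[c_t] < -c_t^{*}\gamma_Q^{t}$. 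Using the first inequality in the identity above, it remains to bound $\phi_t$ from above.

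\textbf{Bounding $\phi_t$.} Write $u \defeq c_t/c_t^{*} \in [-1,1]$ and $a \defeq \theta_t(\epsilon)\,c_t^{*} \in (0,\log 2]$ (using $c_t^{*}\le \log 2$ and $\theta_t(\epsilon)\in(0,1)$). Convexity of $\exp$ on $[-a,a]$ gives the chord bound $\exp(au) \le \cosh a + u\sinh a$; taking $\E_{Q_{t-1}}$, using $\sinh a > 0$ and $\E_{Q_{t-1}}[u] < -\gamma_Q^{t}$, and then taking $\log$, we obtain
\[
\phi_t \;\le\; \log\!\left(\cosh a - \gamma_Q^{t}\sinh a\right) \;=:\; F(a).
\]
Here $F(0)=0$, and a short computation gives $F''(a) = \bigl(1-(\gamma_Q^{t})^2\bigr)\big/\bigl(\cosh a - \gamma_Q^{t}\sinh a\bigr)^{2} > 0$, so $F$ is convex and lies below its chord on $[0,\log 2]$; with $\cosh(\log 2)=5/4$ and $\sinh(\log 2)=3/4$ this reads $F(a) \le \tfrac{a}{\log 2}\log\tfrac{5-3\gamma_Q^{t}}{4} = -\tfrac{a}{\log 2}\,\Gamma(\gamma_Q^{t})$.

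\textbf{Case split and assembly.} The threshold $\gamma_Q^{t}=1/3$ is exactly where $\Gamma$ changes sign, and also where the chord estimate stops being the sharper one. For $\gamma_Q^{t}\in[1/3,1]$ we have $\Gamma(\gamma_Q^{t})\ge 0$; combining the bound on $\phi_t$ with the per-iteration identity and $\E_P[c_t] > c_t^{*}\gamma_P^{t}$, and using $a = \theta_t(\epsilon)c_t^{*}$ together with $c_t^{*}\le\log 2$, produces the high-regime $\Lambda_t = c_t^{*}\gamma_P^{t} + \Gamma(\gamma_Q^{t})$. For $\gamma_Q^{t}\in(0,1/3)$ the chord bound is slack, so instead I would bound $F(a)$ by $\cosh a - 1 - \gamma_Q^{t}\sinh a$ (via $\log(1+v)\le v$), then use $\cosh a - 1 \le \tfrac{a}{2}\sinh a$ and $a \le \sinh a$ to get an estimate quadratic in $\theta_t(\epsilon)$; inserting it into the identity yields the low-regime $\Lambda_t = \gamma_P^{t} + \gamma_Q^{t} - c_t^{*}\theta_t(\epsilon)/2$. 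In either case we land on \eqref{bKLdrop}.

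\textbf{Where the difficulty lies.} The skeleton — the exact identity, the WLA substitution, the convexity of $F$, and the normalizability of $Q_t$ — is routine. The delicate part is the constant bookkeeping: recovering precisely $\Gamma(z)=\log\bigl(4/(5-3z)\bigr)$ and the threshold $z=1/3$, and reconciling the different placements of $c_t^{*}$ in the two branches of $\Lambda_t$, given that $c_t^{*}$ and $\theta_t(\epsilon)$ vary with $t$ and only the uniform bounds $c_t^{*}\le\log 2$ and $\theta_t(\epsilon)<1$ are available. Concretely, one must choose in each regime the exponential/logarithm inequality that is tight there, and be careful that each replacement of $a/\log 2$ by $\theta_t(\epsilon)$ (or vice versa) is performed in the direction that preserves the inequality — this is the step I expect to be the main obstacle to matching the stated constants exactly.
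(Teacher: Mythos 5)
Your skeleton is the same as the paper's: the exact per-iteration identity $\text{KL}(P,Q_{t-1})-\text{KL}(P,Q_t)=\theta_t(\epsilon)\E_P[c_t]-\phi_t$ is precisely the paper's KL-drop lemma, the WLA substitution is identical, and the split at $\gamma_Q^{t}=1/3$ is where the paper splits. The divergence is in how $\phi_t$ is bounded, and there your argument has two concrete gaps. In the high regime, your chord-plus-convexity route (with $a=\theta_t(\epsilon)c_t^{*}$ and $F(a)=\log(\cosh a-\gamma_Q^{t}\sinh a)$) yields $\phi_t\le \frac{a}{\log 2}F(\log 2)=-\frac{\theta_t(\epsilon)\,c_t^{*}}{\log 2}\,\Gamma(\gamma_Q^{t})$, whereas the stated $\Lambda_t$ requires $\phi_t\le-\theta_t(\epsilon)\Gamma(\gamma_Q^{t})$; since $c_t^{*}\le\log 2$ and $\Gamma(\gamma_Q^{t})\ge 0$, replacing $c_t^{*}/\log 2$ by $1$ goes in the forbidden direction. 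You flag this as the expected obstacle but do not resolve it, and along your route it cannot be resolved without taking $c_t^{*}=\log 2$: convexity only gives $F(a)\le\theta_t(\epsilon)F(c_t^{*})$, and $F(c_t^{*})$ need not be $\le F(\log 2)$ when $c_t^{*}$ is small. The paper reaches the stated constant differently: it bounds $\E_{Q_{t-1}}[\exp(c_t)]$ (unscaled) by the linear upper bound of $\exp$ on the full interval $[-\log 2,\log 2]$ and then applies Jensen with exponent $\theta_t(\epsilon)<1$, i.e. $\log\E_{Q_{t-1}}[\exp(\theta_t(\epsilon) c_t)]\le\theta_t(\epsilon)\log\E_{Q_{t-1}}[\exp(c_t)]\le-\theta_t(\epsilon)\Gamma(\gamma_Q^{t})$ --- at the price of invoking the WLA with normalizer $\log 2$ in place of $c_t^{*}$, i.e. effectively the same $c_t^{*}=\log 2$ convention that your route would need (the paper glosses this bookkeeping; your closing paragraph correctly senses that this is where the constants are fragile).

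In the low regime your stated chain also breaks: from $F(a)\le\cosh a-1-\gamma_Q^{t}\sinh a$ and $\cosh a-1\le\frac{a}{2}\sinh a$ you get $\bigl(\frac{a}{2}-\gamma_Q^{t}\bigr)\sinh a$, and the final use of $\sinh a\ge a$ preserves the inequality only when $\frac{a}{2}-\gamma_Q^{t}\le 0$; since $\gamma_Q^{t}$ may be arbitrarily small in the low regime while $a=\theta_t(\epsilon)c_t^{*}$ need not be, the quadratic bound $\frac{a^2}{2}-\gamma_Q^{t}a$ does not follow from those inequalities in general (and bounding $\sinh a$ by its chord slope $3/(4\log 2)$ instead gives a constant slightly worse than $1/2$). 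The paper obtains $\phi_t\le\frac{\theta_t^2(\epsilon)(c_t^{*})^2}{2}-\theta_t(\epsilon)\gamma_Q^{t}c_t^{*}$ unconditionally from Hoeffding's lemma applied to the centered $c_t$ under $Q_{t-1}$; the inequality you are aiming for is in fact true (it is exactly the Hoeffding bound, and your $F''$ computation can be adapted to prove $F(a)\le\frac{a^2}{2}-\gamma_Q^{t}a$ directly), so the fix is to replace the hyperbolic manipulations by Hoeffding's lemma or by that second-derivative comparison. Finally, note that in both regimes your assembled bound carries the factor $c_t^{*}$ multiplying the whole bracket (as does the paper's own low-regime derivation), so matching the literal placement of $c_t^{*}$ in \eqref{consteqs} again hinges on the same $c_t^{*}=\log 2$ convention rather than on anything your argument, or the paper's, actually derives from the WLA as stated.
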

(Proof in \supplement, Section \ref{proof_kl-upper}) Remark that in the \textit{high} boosting regime, we are guaranteed that $\Lambda_t\geq 0$ so the bound on the KL divergence is guaranteed to decrease. This is a regime we are more likely to encounter during the first boosting iterations since $Q_{t-1}$ and $P$ are then easier to tell apart --- we can thus expect a larger $\gamma_Q^{t}$. In the low boosting regime, the picture can be different since we need $\gamma_P^{t} + \gamma_Q^{t} \geq c_t^{*} \cdot \theta_t(\epsilon)/2$ to make the bound not vacuous. Since $\theta_t(\epsilon) \to_{t} 0$ exponentially fast and $c_t^{*} \leq \log 2$, a constant, the constraint for \eqref{consteqs} to be non-vacuous vanishes and we can also expect the bound on the KL divergence to also decrease in the \textit{low} boosting regime.
We now check that the guarantees we get are close to the best possible in an information-theoretic sense given the two constraints: (i) $Q$ is an exponential family as in \eqref{defEXP1} and (ii) $Q \in \mathcal{M}_\varepsilon$. Let us define $\mathcal{M}^{\exp}_\varepsilon \subset \mathcal{M}_\varepsilon$ the set of such densities, where $Q_0$ is fixed, and let $\Delta(Q) \defeq \text{KL}(P,Q_0) - \text{KL}(P,Q)$. Intuitively, the farther $P$ is from $Q_0$, the farther we should be able to get from $Q_0$ to approximate $P$, and so the larger should be $\Delta(Q)$. Notice that this would typically imply to be in the high boosting regime for \pkde. For the sake of simplicity, we consider $\gamma_P, \gamma_Q$ to be the same throughout all iterations.
\begin{theorem} \label{kl-upper-lower}
We have $\Delta(Q) \leq (\epsilon/2), \forall Q \in \mathcal{M}^{\exp}_\varepsilon,$, and if \pkde~is in the high boosting regime, then 
\begin{eqnarray} \Delta(Q_T) & \geq & \frac{\epsilon}{2} \cdot \braces{\frac{\gamma_P + \gamma_Q}{2}\cdot \bracket{1 - \theta_T(\epsilon)}}. \label{bsupKL}\end{eqnarray}
\end{theorem}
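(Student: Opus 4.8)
The two claims call for rather different arguments, so the plan is to handle them separately. \emph{For the upper bound} I would start from $\Delta(Q) = \text{KL}(P,Q_0) - \text{KL}(P,Q) = \E_P[\log(Q/Q_0)]$, so that it suffices to bound the integrand pointwise. Since $Q \in \mathcal{M}^{\exp}_\varepsilon$, the exponential-family form \eqref{defEXP1} gives $\log(Q(x)/Q_0(x)) = \langle \theta(\epsilon), c(x)\rangle - \phi(\theta(\epsilon))$. Using $|c_s(x)| \le \log 2$ for every coordinate together with the geometric-sum identity $\sum_{s=1}^{T}\theta_s(\epsilon) = \frac{\epsilon}{4\log 2}\,(1-\theta_T(\epsilon))$ (derived below), one gets $|\langle\theta(\epsilon),c(x)\rangle| \le \frac{\epsilon}{4}(1-\theta_T(\epsilon))$, and the same bound for $|\phi(\theta(\epsilon))|$ because it is the logarithm of a $Q_0$-average of numbers in $[\exp(-\epsilon/4),\exp(\epsilon/4)]$. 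Hence $|\log(Q/Q_0)| \le \frac{\epsilon}{2}(1-\theta_T(\epsilon)) \le \epsilon/2$ pointwise, and integrating against $P$ yields $\Delta(Q) \le \epsilon/2$; this is essentially the computation behind Theorem \ref{privacy-theorem}.

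\emph{For the lower bound} I would telescope the per-iteration drop of Theorem \ref{kl-upper}. In the high boosting regime $\Lambda_t = c_t^{*}\gamma_P + \Gamma(\gamma_Q) > 0$ (because $\Gamma$ is nondecreasing on $[1/3,1]$ with $\Gamma(1/3)=0$), so summing \eqref{bKLdrop} over $t = 1,\dots,T$ gives $\Delta(Q_T) = \sum_{t=1}^{T}\big(\text{KL}(P,Q_{t-1}) - \text{KL}(P,Q_t)\big) \ge \sum_{t=1}^{T}\theta_t(\epsilon)\,\Lambda_t$. Next I would evaluate the geometric series: writing $\rho := \epsilon/(\epsilon+4\log 2)$, we have $\theta_t(\epsilon)=\rho^t$ and $1-\rho = 4\log 2/(\epsilon+4\log 2)$, so $\sum_{t=1}^{T}\theta_t(\epsilon) = \rho\,\frac{1-\rho^{T}}{1-\rho} = \frac{\epsilon}{4\log 2}\,(1-\theta_T(\epsilon))$. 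Finally, treating $\gamma_P,\gamma_Q$ and $c_t^{*}$ as constant in $t$ (as assumed in the statement, with the classifiers normalized to saturate the output range, $c_t^{*} = \log 2$), $\Lambda_t$ is the constant $\gamma_P\log 2 + \Gamma(\gamma_Q)$; lower-bounding this by $\log 2\cdot\frac{\gamma_P+\gamma_Q}{2}$ and multiplying by the series value gives $\Delta(Q_T) \ge \log 2\cdot\frac{\gamma_P+\gamma_Q}{2}\cdot\frac{\epsilon}{4\log 2}\,(1-\theta_T(\epsilon)) = \frac{\epsilon}{2}\cdot\frac{\gamma_P+\gamma_Q}{2}\,(1-\theta_T(\epsilon))$, which is \eqref{bsupKL}.

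\emph{The main obstacle} is precisely the last inequality $\Lambda_t \ge \log 2\cdot\frac{\gamma_P+\gamma_Q}{2}$. Theorem \ref{kl-upper}'s high-regime expression contributes $\Gamma(\gamma_Q) = \log(4/(5-3\gamma_Q))$ rather than a clean multiple of $\gamma_Q$; in fact $\Gamma(\gamma_Q)/\log 2 \le \gamma_Q$ with equality only at $\gamma_Q = 1$, so a direct telescoping seems to give only $\Delta(Q_T) \ge \frac{\epsilon}{2}\big(\frac{\gamma_P}{2} + \frac{\Gamma(\gamma_Q)}{2\log 2}\big)(1-\theta_T(\epsilon))$, which matches \eqref{bsupKL} in the strong-learner limit $\gamma_Q\to 1$ but not uniformly over $\gamma_Q\in[1/3,1]$. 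Getting the stated constant therefore seems to require a slightly sharper per-step estimate than the generic Theorem \ref{kl-upper} bound — for instance re-deriving the drop directly from the Gibbs/Donsker--Varadhan identity $\text{KL}(P,Q_{t-1}) - \text{KL}(P,Q_t) = \theta_t(\epsilon)\big(\E_P - \E_{Q_t}\big)[c_t] + \text{KL}(Q_t,Q_{t-1})$ together with the \emph{additive} weak-learning gain $(\E_P - \E_{Q_{t-1}})[c_t] > c_t^{*}(\gamma_P+\gamma_Q)$, and absorbing the residual second-order term (of order $\theta_t(\epsilon)^2$, hence summable to something lower order since $\theta_t(\epsilon)$ decays geometrically and $c_t^{*}\le\log 2$ is a fixed constant). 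Everything else — the telescoping and the geometric sum — is mechanical.
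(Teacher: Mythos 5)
Your route is the same as the paper's. For the upper bound the paper argues exactly as you do: write $\log(Q/Q_0)=\ip{\theta(\epsilon)}{c}-\phi(\theta(\epsilon))$, bound this pointwise by $\epsilon/2$ (the Lemma \ref{privbound} machinery behind Theorem \ref{privacy-theorem}) and integrate against $P$; the only cosmetic difference is that you re-derive the pointwise bound from the specific schedule $\theta_t(\epsilon)$, whereas the claim is for every $Q\in\mathcal{M}^{\exp}_\varepsilon$ and the paper simply invokes the bound for any member. For the lower bound the paper does precisely your telescoping of Theorem \ref{kl-upper} plus the geometric sum $\sum_{t}\theta_t(\epsilon)=\frac{\epsilon}{4\log 2}(1-\theta_T(\epsilon))$, and it closes the step you could not close only by taking $c_t^{*}=\log 2$ (the same saturation assumption you make) and by asserting ``$\Gamma(x)\ge \log 2\cdot x$''. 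As you correctly observe, that inequality is reversed on $[1/3,1)$: $\Gamma$ is convex with $\Gamma(1/3)=0$ and $\Gamma(1)=\log 2$, so $\Gamma(x)\le x\log 2$ there, with equality only at $x=1$. Hence what the telescoping honestly yields is your $\Delta(Q_T)\ge\frac{\epsilon}{2}\left(\frac{\gamma_P}{2}+\frac{\Gamma(\gamma_Q)}{2\log 2}\right)(1-\theta_T(\epsilon))$, which coincides with \eqref{bsupKL} only as $\gamma_Q\to 1$ --- exactly the regime in which the paper then uses the theorem. So you have not missed a hidden sharper per-step estimate; the obstacle you flag is where the paper's own proof is loose. (Your proposed repair is unlikely to give the exact constant either: the identity $\text{KL}(P,Q_{t-1})-\text{KL}(P,Q_t)=\theta_t(\E_P-\E_{Q_t})[c_t]+\text{KL}(Q_t,Q_{t-1})$ is algebraically the same drop as in Lemma \ref{kl-drop-lem}, and passing from $\E_{Q_t}[c_t]$ back to $\E_{Q_{t-1}}[c_t]$ (which is what the WLA controls) reintroduces a second-order term, so that route gives roughly $\frac{\epsilon}{4}(\gamma_P+\gamma_Q)(1-\theta_T(\epsilon))$ minus an additive $O(\epsilon^2)$ loss.)

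One arithmetic slip to fix in your middle paragraph: with $\sum_{t=1}^{T}\theta_t(\epsilon)=\frac{\epsilon}{4\log 2}(1-\theta_T(\epsilon))$, matching \eqref{bsupKL} requires the per-step bound $\Lambda_t\ge\log 2\,(\gamma_P+\gamma_Q)$, not $\log 2\cdot\frac{\gamma_P+\gamma_Q}{2}$; as written, $\log 2\cdot\frac{\gamma_P+\gamma_Q}{2}\cdot\frac{\epsilon}{4\log 2}(1-\theta_T(\epsilon))$ equals $\frac{\epsilon}{8}(\gamma_P+\gamma_Q)(1-\theta_T(\epsilon))$, a factor $2$ short of the claimed conclusion. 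The formula in your final paragraph is the one with the correct arithmetic, and it is the bound this argument actually supports.
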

(Proof in \supplement, Section \ref{proof_kl-upper-lower}) Hence, as $\gamma_P \to 1$ and $\gamma_Q \to 1$, we have  $\Delta(Q_T) \geq (\epsilon/2) \cdot (1 - \theta_T(\epsilon))$ and since $\theta_T(\epsilon) \to 0$ as $T \to \infty$, \pkde~indeed reaches the information-theoretic limit in the high boosting regime.\\[0.2\baselineskip]
\noindent $\triangleright$ \textbf{\pkde~and the capture of modes of $P$} --- Mode capture is a prominent problem in the area of generative models \citep{tgbssAB}. We have already seen that enforcing mollification can be done while keeping modes, but we would like to show that \pkde~is indeed efficient at building some $Q_T$ with guarantees on mode capture. For this objective, we define for any $B \subseteq \mathcal{X}$ and density $Q$,
\begin{eqnarray}
\textsc{m}(B,Q) \defeq  \int_B d Q \:;\:  KL(P, Q; B) \defeq \int_B \log\bracket{\frac{P}{Q}}dP,
\end{eqnarray}
respectively the total mass of $B$ on $Q$ and the KL divergence between $P$ and $Q$ restricted to $B$.
\begin{theorem}
\label{first-mode-capture}
Suppose \pkde~stays in the high boosting. 
Then $\forall \alpha \in [0, 1]$, $\forall B \subseteq \mathcal{X}$, if
\begin{eqnarray}
\textsc{m}(B,P) & \geq & \varepsilon \cdot \frac{h((2- \gamma_P - \gamma_Q) \cdot T)}{h(\alpha)\cdot h(T)}, \label{eqcondP}
\end{eqnarray}
then $\textsc{m}(B,Q_T) \geq (1 - \alpha)\textsc{m}(B,P) - KL(P, Q_0; B)$, where $h(x) \defeq \varepsilon + 2x$..
\end{theorem}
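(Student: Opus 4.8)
The plan is to convert the mode-capture inequality into a bound on the $B$-restricted divergence $KL(P, Q_T; B)$ and then track how this quantity changes along the boosting iterations, using both weak-learning guarantees. The reduction step is elementary: for any density $Q$ and any $B \subseteq \mathcal{X}$, applying $y \geq 1 + \log y$ to $y = Q(x)/P(x)$ and integrating over $B$ against $P$ gives $\textsc{m}(B, Q) \geq \textsc{m}(B, P) - KL(P, Q; B)$. Taking $Q = Q_T$, it then suffices to prove $KL(P, Q_T; B) \leq \alpha \cdot \textsc{m}(B, P) + KL(P, Q_0; B)$, i.e. that the $T$ boosting steps do not inflate the $B$-restricted divergence by more than $\alpha\,\textsc{m}(B,P)$.

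Next I would expand $KL(P, Q_t; B)$ iteration by iteration. Writing the update as $Q_t = \exp\!\big(\theta_t(\epsilon)\, c_t - \phi_t\big)\, Q_{t-1}$ with per-step log-normalizer $\phi_t \defeq \log \int \exp(\theta_t(\epsilon)\, c_t)\, dQ_{t-1}$, the pointwise identity $\log(P/Q_t) = \log(P/Q_{t-1}) - \theta_t(\epsilon)\, c_t + \phi_t$ integrates to $KL(P, Q_t; B) = KL(P, Q_{t-1}; B) - \theta_t(\epsilon) \int_B c_t\, dP + \phi_t\, \textsc{m}(B, P)$. I would then control the two new terms. For the first, split $\int_B c_t\, dP = \E_P[c_t] - \int_{B^c} c_t\, dP$ and use the WLA on $P$ ($\E_P[c_t] > c_t^* \gamma_P$) together with $\int_{B^c} c_t\, dP \leq c_t^*\, \textsc{m}(B^c, P)$ coming from $|c_t| \leq c_t^*$. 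For the second, bound $\phi_t$ in two complementary ways: via Theorem~\ref{kl-upper}, whose statement rearranges to $\phi_t \leq \theta_t(\epsilon)(\E_P[c_t] - \Lambda_t)$ with $\Lambda_t = c_t^*\gamma_P + \Gamma(\gamma_Q)$ in the high boosting regime, and via the WLA on $Q$, which through a second-order estimate of $\log \int \exp(\theta_t(\epsilon) c_t)\, dQ_{t-1}$ gives $\phi_t \leq -\theta_t(\epsilon)\, c_t^* \gamma_Q + \theta_t(\epsilon)^2 (c_t^*)^2$. Substituting, using $\textsc{m}(B^c, P) = 1 - \textsc{m}(B, P)$, and cancelling the $\E_P[c_t]$ contributions, one is left with a per-step estimate of the form $KL(P, Q_{t-1}; B) - KL(P, Q_t; B) \geq -\,\theta_t(\epsilon)\, c_t^*(1 - \gamma_P) + \theta_t(\epsilon)\, \textsc{m}(B,P) \cdot \big(\text{nonnegative factor carrying } \gamma_Q\big)$. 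This is precisely where the combination $(1-\gamma_P) + (1-\gamma_Q) = 2 - \gamma_P - \gamma_Q$ appearing inside $h\big((2-\gamma_P-\gamma_Q)T\big)$ is produced.

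Third, I would telescope this estimate over $t = 1, \dots, T$, insert $c_t^* \leq \log 2$ and the geometric sum $\sum_{t=1}^T \theta_t(\epsilon) = \tfrac{\epsilon}{4\log 2}\big(1 - \theta_T(\epsilon)\big)$, and feed the result back into the reduction step. The target conclusion then reads as a lower bound $\textsc{m}(B,P) \geq (1-\gamma_P)\cdot(\text{sum})\,\big/\,\big(\alpha + (\text{sum carrying }\gamma_Q)\big)$, and the last task is to verify that the hypothesis $\textsc{m}(B,P) \geq \varepsilon\, h\big((2-\gamma_P-\gamma_Q)T\big)\big/\big(h(\alpha)\, h(T)\big)$ implies it. For this matching, the affine shape $h(x) = \varepsilon + 2x$ enters through the identity $h(T) - h\big((2-\gamma_P-\gamma_Q)T\big) = 2(\gamma_P + \gamma_Q - 1)\,T$, which manufactures the numerator $h\big((2-\gamma_P-\gamma_Q)T\big)$, while the product $h(\alpha)\, h(T)$ in the denominator comes from clearing both the additive $\alpha$ and the residual $1 - \theta_T(\epsilon)$ factor against $T$.

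The main obstacle is the per-step estimate and its post-telescoping bookkeeping. Both the $\phi_t\, \textsc{m}(B, P)$ term and the $B^c$ cross-term $\int_{B^c} c_t\, dP$ carry signs that can go either way, so one genuinely needs to combine the two weak-learning constants --- not just one --- to land a per-step bound whose ``good'' part scales with $\textsc{m}(B, P)$ while its ``bad'' part stays of order $1 - \gamma_P$, and then to fold every remaining constant into the exact $h$-form of the statement. By contrast, the initial reduction and the telescoping itself are routine once the per-step estimate is available.
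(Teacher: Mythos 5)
Your plan is correct in substance but follows a genuinely different route from the paper's. After the identical reduction step (the paper's Lemma~\ref{first-region-bound}, i.e.\ $\textsc{m}(B,Q_T)\ge \textsc{m}(B,P)-KL(P,Q_T;B)$), the paper never tracks the restricted divergence along iterations: it writes $\int_B=\int_{\mathcal{X}}-\int_{B^c}$, controls the $B^c$ part with the pointwise mollifier bound $\card{\ip{\theta(\epsilon)}{c}-\phi(\theta(\epsilon))}\le\epsilon/2$ of Lemma~\ref{privbound}, which gives $KL(P,Q_T;B)\le KL(P,Q_0;B)-\Delta(Q_T)+\tfrac{\epsilon}{2}(1-\textsc{m}(B,P))$ (Lemma~\ref{ball-kl-bound}), and then reuses the global bound of Theorem~\ref{kl-upper-lower} together with the geometric sum to reach the $h$-form. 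You instead stay inside $B$ throughout, telescoping the exact identity $KL(P,Q_t;B)=KL(P,Q_{t-1};B)-\theta_t(\epsilon)\int_B c_t\,dP+\phi_t\,\textsc{m}(B,P)$; with the WLA on $P$, $\card{c_t}\le c_t^{*}$ on $B^c$, and the high-regime bound $\phi_t\le\theta_t(\epsilon)\bigl(\E_P[c_t]-\Lambda_t\bigr)$ one indeed gets a per-step drop of at least $\theta_t(\epsilon)\bigl[-c_t^{*}(1-\gamma_P)(1-\textsc{m}(B,P))+(c_t^{*}\gamma_P+\Gamma(\gamma_Q))\,\textsc{m}(B,P)\bigr]$, and telescoping with $\sum_t\theta_t(\epsilon)<\epsilon/(4\log 2)$ yields a sufficient mass condition that \eqref{eqcondP} implies. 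What each buys: the paper's argument is shorter because all the $B$-dependence is absorbed into a single pointwise mollifier estimate and Theorem~\ref{kl-upper-lower} is invoked wholesale, whereas your argument bypasses both the mollifier lemma and the detour through the global $\Delta$, and, carried out tightly, even delivers a $T$-free sufficient condition of the form $\textsc{m}(B,P)\ge(1-\gamma_P)\epsilon/(4\alpha+(1+\gamma_Q)\epsilon)$, which is weaker than (hence implied by) \eqref{eqcondP}. The one place needing care is the final constant-matching: the bad and good contributions involve the \emph{same} sums $\sum_t\theta_t(\epsilon)c_t^{*}$ and $\sum_t\theta_t(\epsilon)$, and you must exploit monotonicity in these common quantities (and keep the factor $(1-\textsc{m}(B,P))$ on the bad term, or equivalently the $c_t^{*}$ contribution in the denominator) rather than bounding the numerator's sum up and the denominator's sum down independently --- the cruder bookkeeping can land just above the threshold stated in \eqref{eqcondP} at some parameter settings (e.g.\ small $T$, $\alpha=0$), while the tighter one, or the paper's global-$\Delta$ route, closes the statement.
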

(Proof in \supplement, Section \ref{proof_mode-capture}) There is not much we can do to control $KL(P, Q_0; B)$ as this term quantifies our luck in picking $Q_0$ to approximate $P$ in $B$ but if this restricted KL divergence is small compared to the mass of $B$, then we are guaranteed to capture a substantial part of it through $Q_T$. As a mode, in particular "fat", would tend to have large mass over its region $B$, Theorem \ref{first-mode-capture} says that we can indeed hope to capture a significant part of it as long as we stay in the high boosting regime. As $\gamma_P \to 1$ and $\gamma_Q \to 1$, the condition on $\textsc{m}(B,P)$ in \eqref{eqcondP} vanishes with $T$ and we end up capturing any fat region $B$ (and therefore, modes, assuming they represent "fatter" regions) whose mass is sufficiently large with respect to $KL(P, Q_0; B)$. 

To finish up this Section, recall that $\mathcal{M}_\varepsilon$ is also defined (in disguise) and analyzed in \citep[Theorem 1]{wfsPF} for posterior sampling. However, convergence \citep[Section 3]{wfsPF} does not dig into specific forms for the likelihood of densities chosen --- as a result and eventual price to pay, it remains essentially in weak asymptotic form, and furthermore later on applied in the weaker model of $(\varepsilon, \delta)$-differential privacy. We exhibit particular choices for these mollifier densities, along with a specific training algorithm to learn them, that allow for significantly better approximation, quantitatively and qualitatively (mode capture) without even relaxing privacy.
\section{Experiments}\label{sec:exp}

\begin{figure*}
\centering
\scalebox{.76}{\begin{tabular}{cccccccc}
                 \nspp \includegraphics[width=\whe,height=\whe]{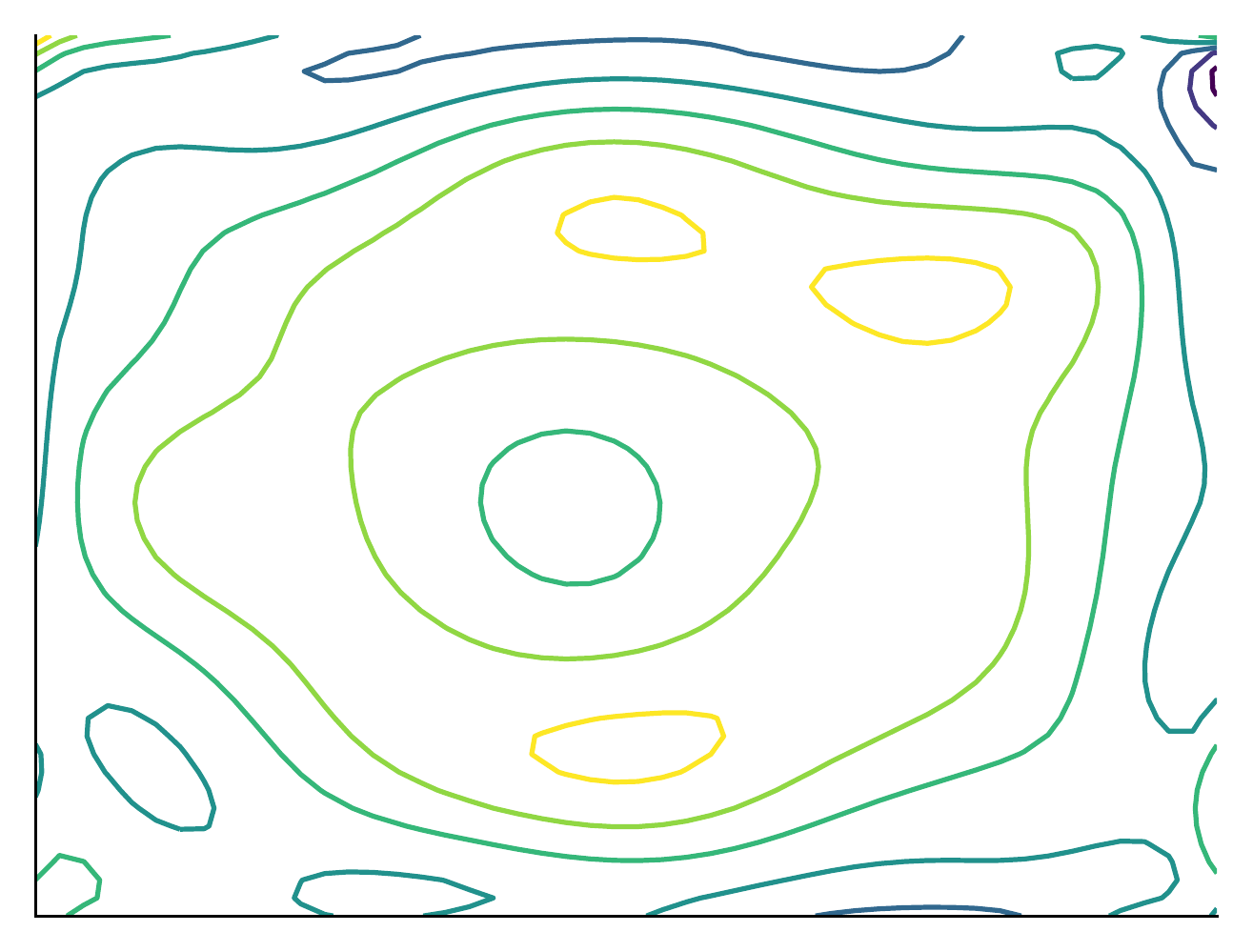}
                 \nsp & \nsp
                   \includegraphics[width=\whe,height=\whe]{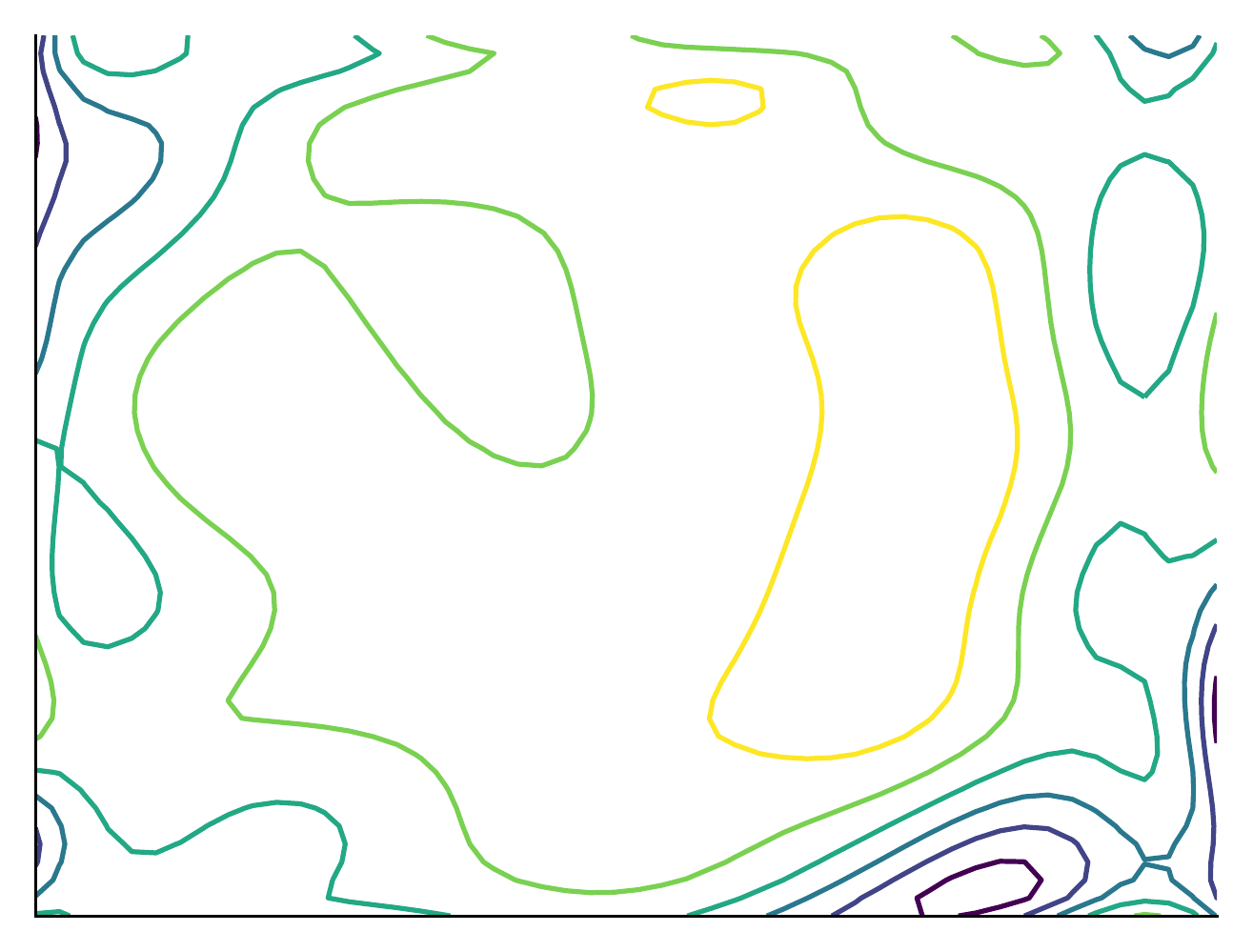}
                 \nsp & \nsp \includegraphics[width=\whe,height=\whe]{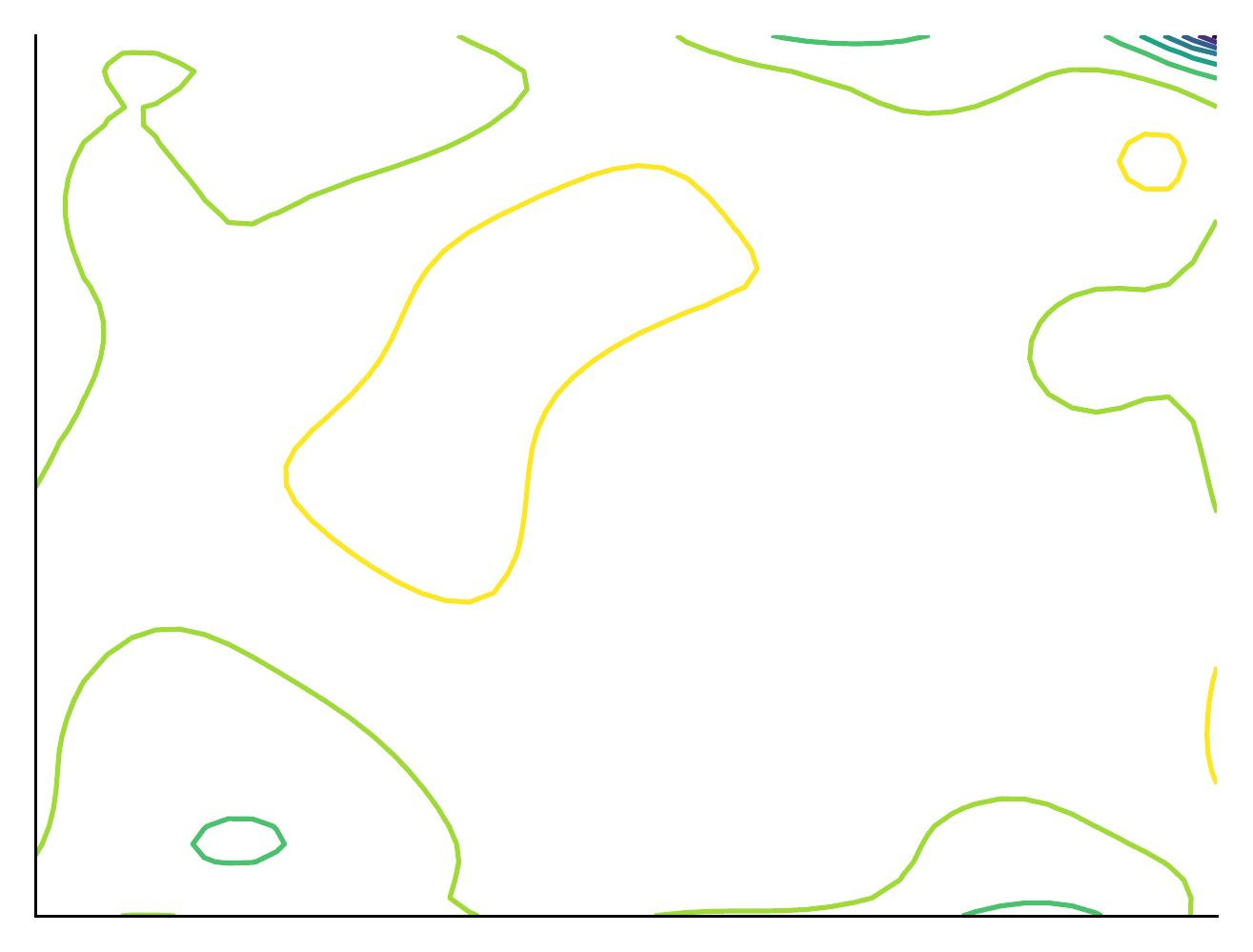}
                   \nsp & \nsp
                     \includegraphics[width=\whe,height=\whe]{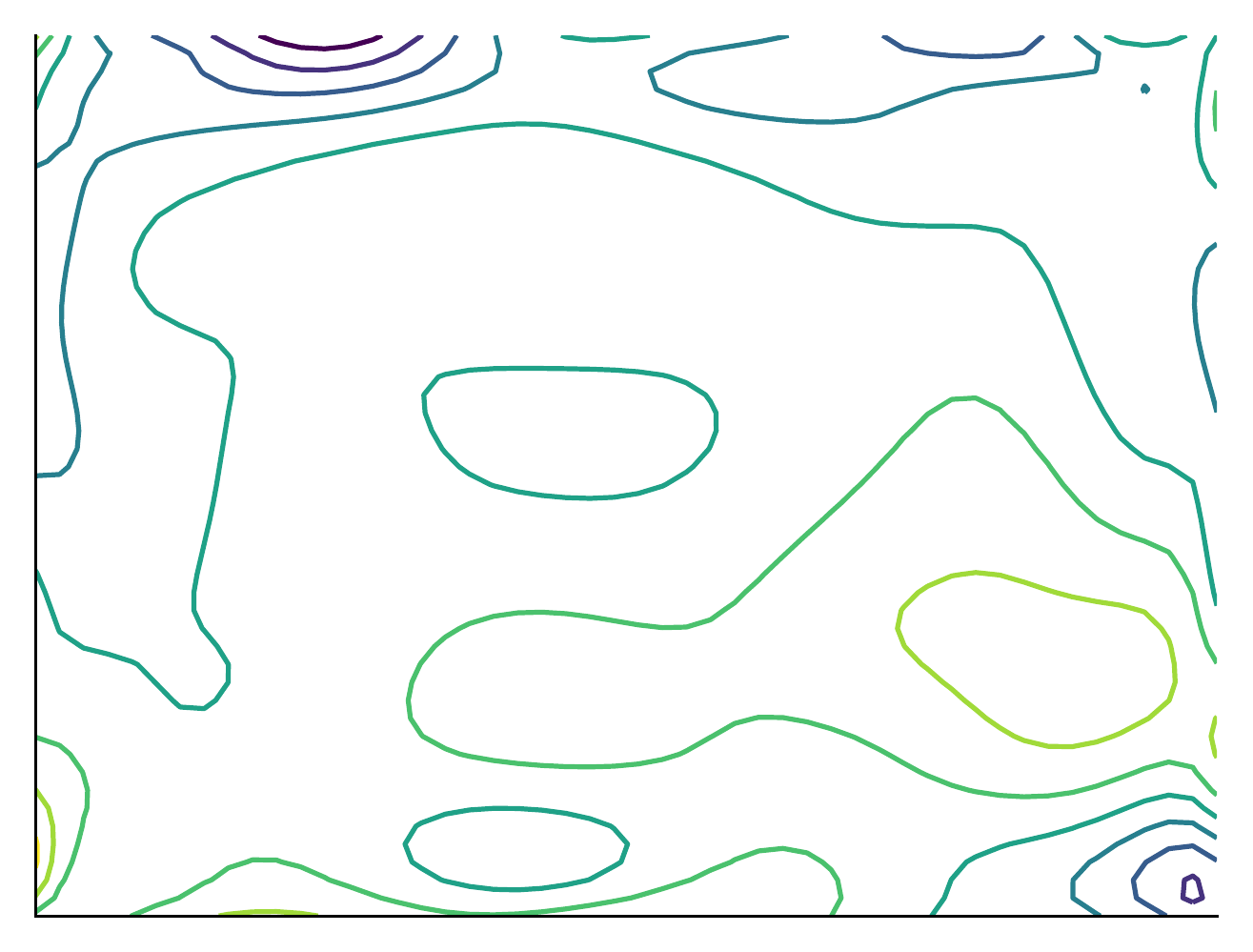}
                     \nsp & \nsp
                       \includegraphics[width=\whe,height=\whe]{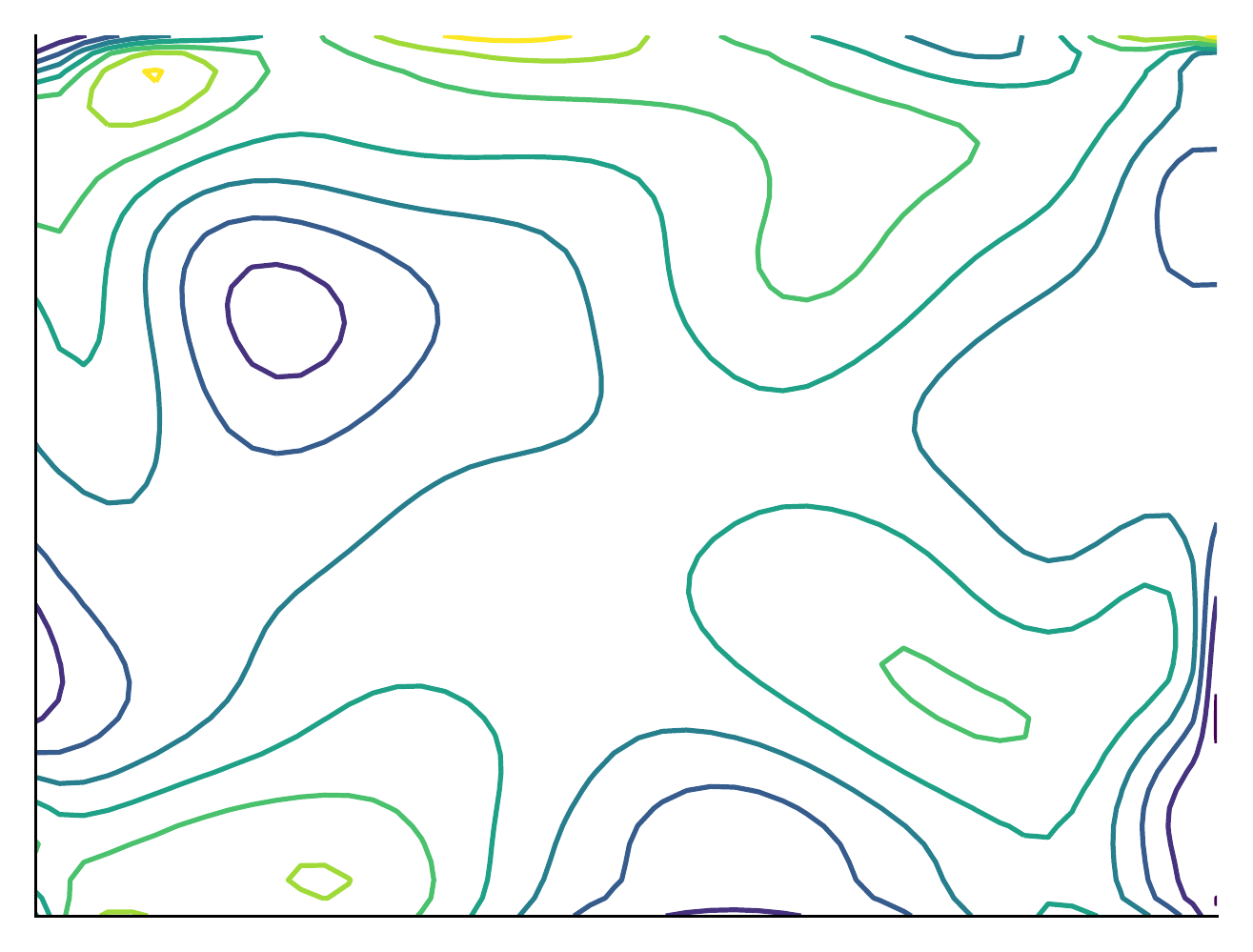}
                       \nsp & \nsp
                         \includegraphics[width=\whe,height=\whe]{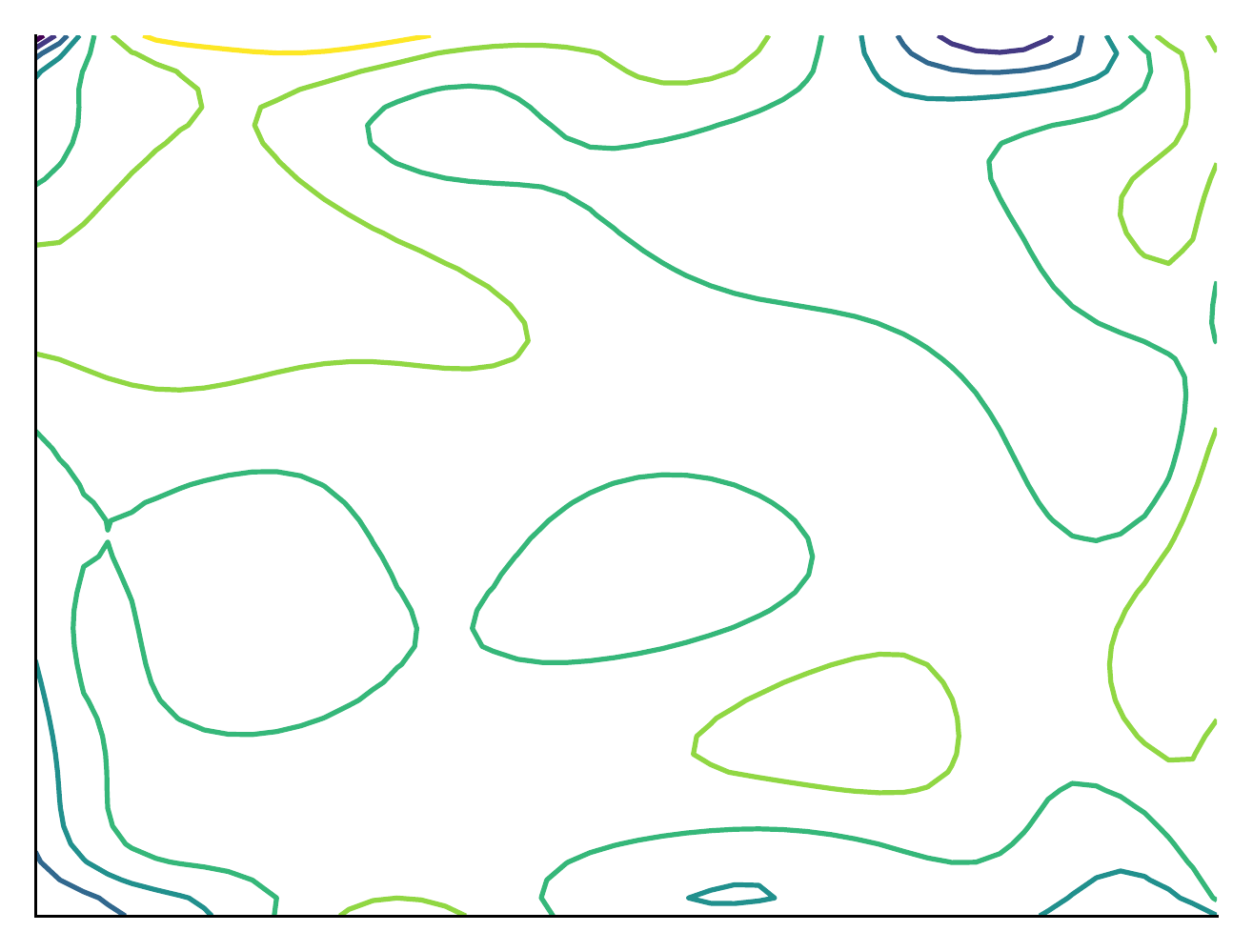}
                         \nsp & \nsp
                           \includegraphics[width=\whe,height=\whe]{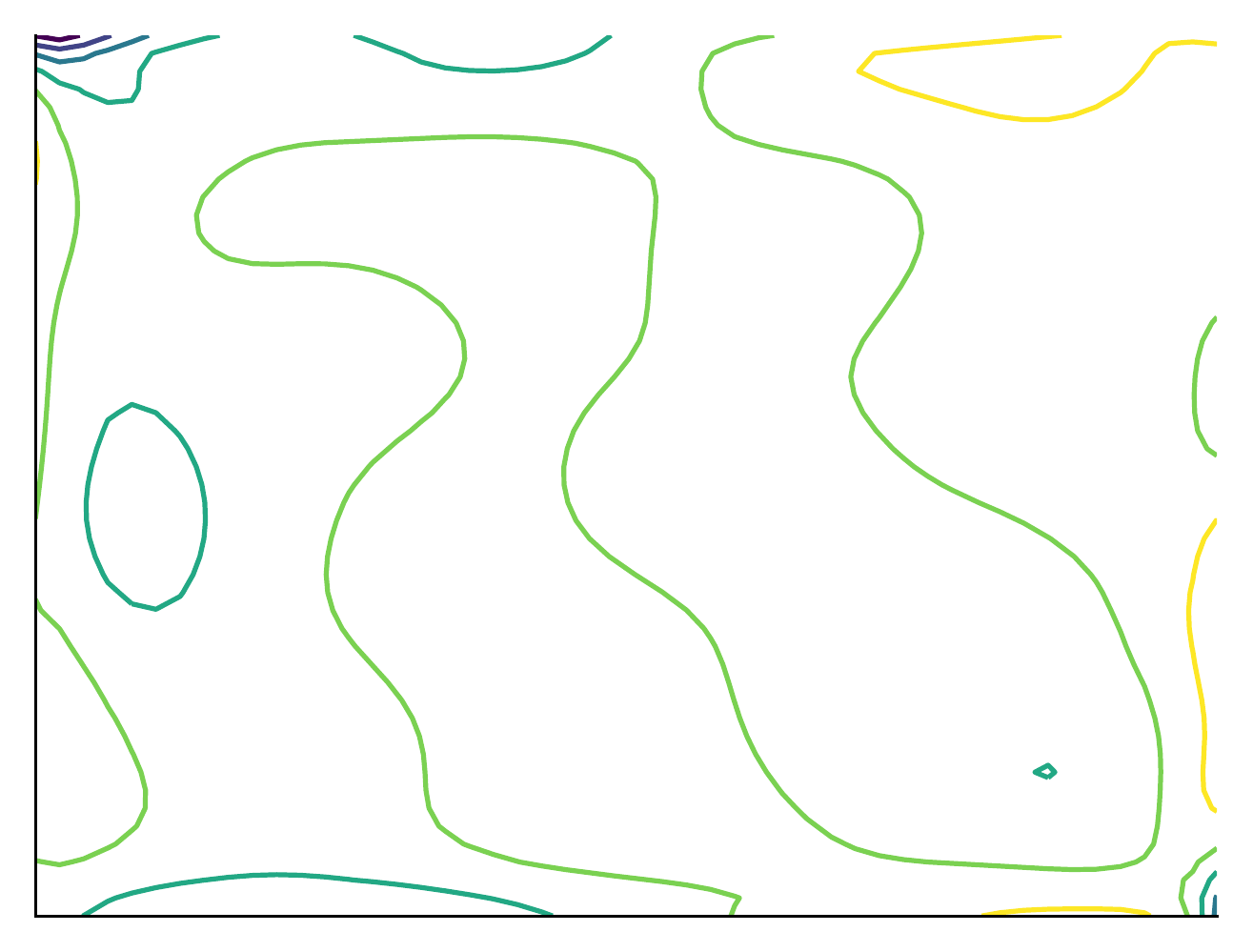}
                           \nsp & \nsp
                                  \includegraphics[width=\whe,height=\whe]{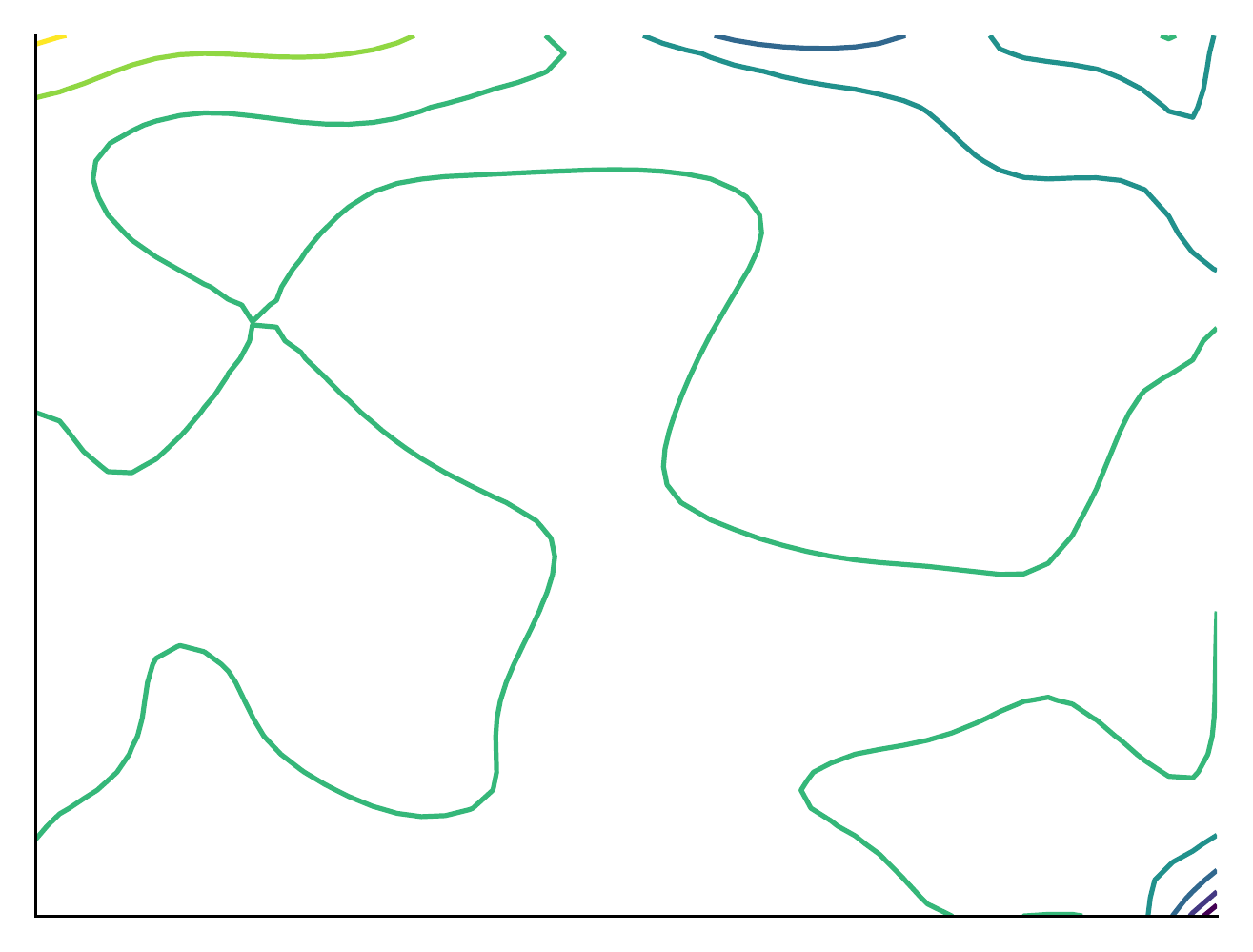}\\
                 \nspp \includegraphics[width=\whe,height=\whe]{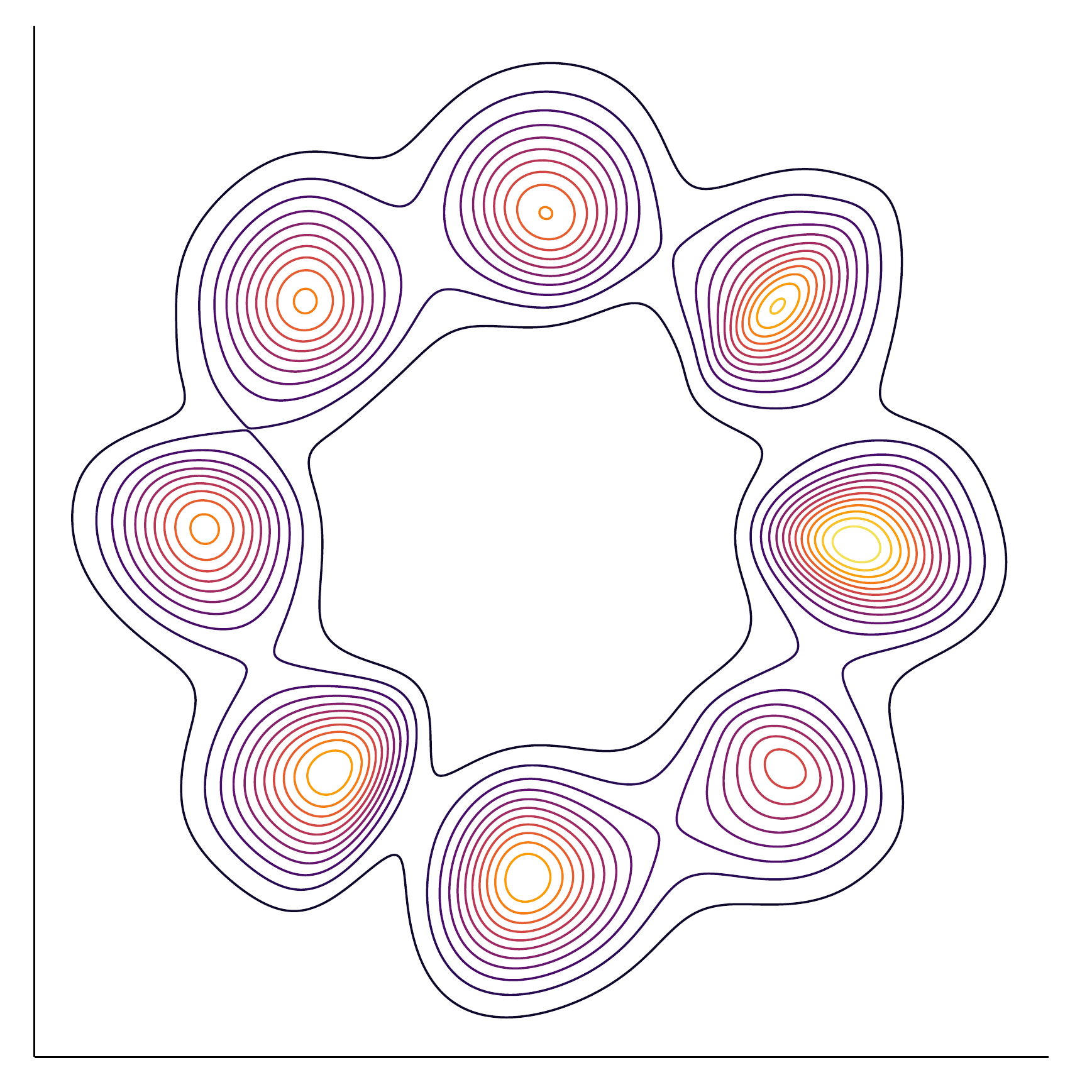}
                 \nsp & \nsp
                   \includegraphics[width=\whe,height=\whe]{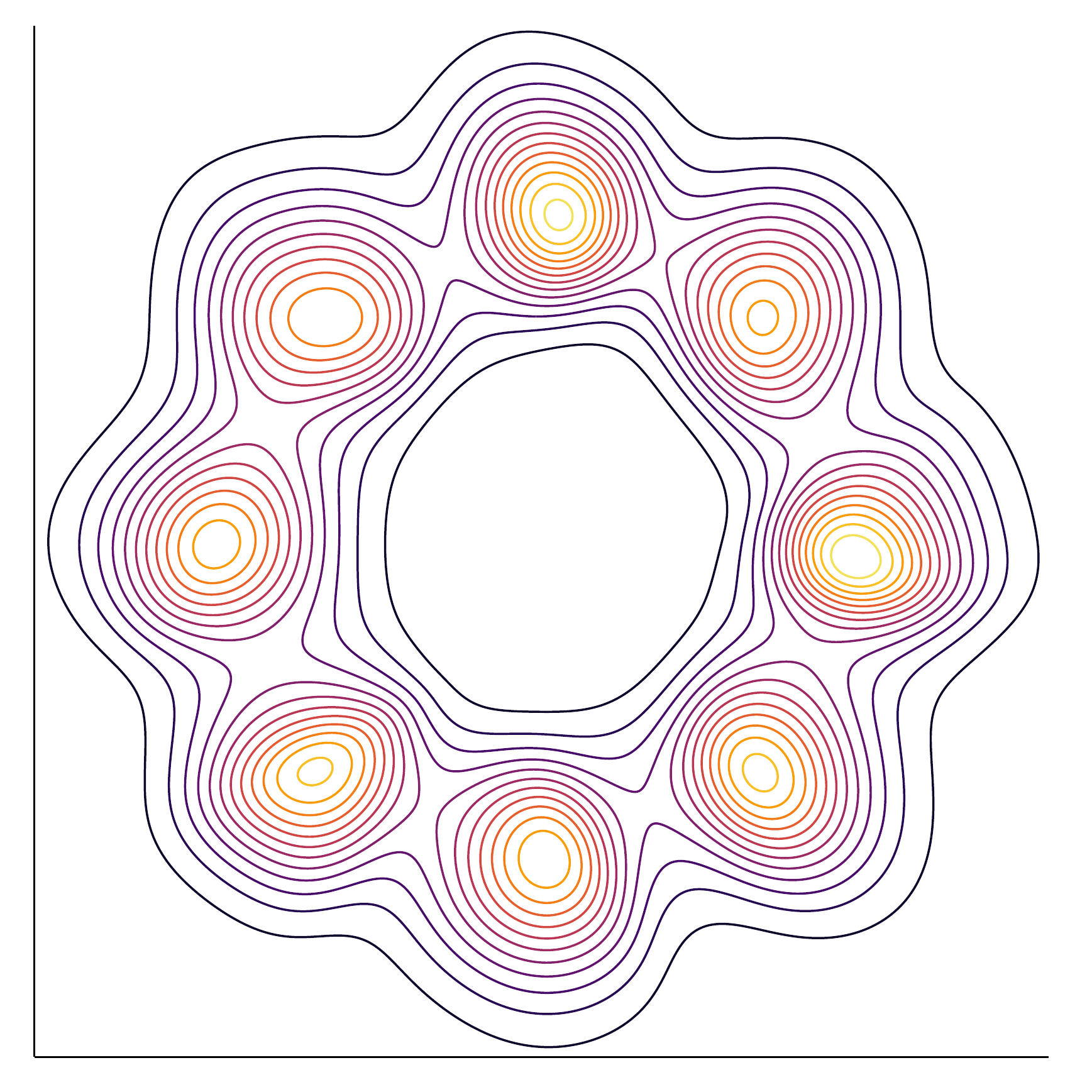}
                 \nsp & \nsp \includegraphics[width=\whe,height=\whe]{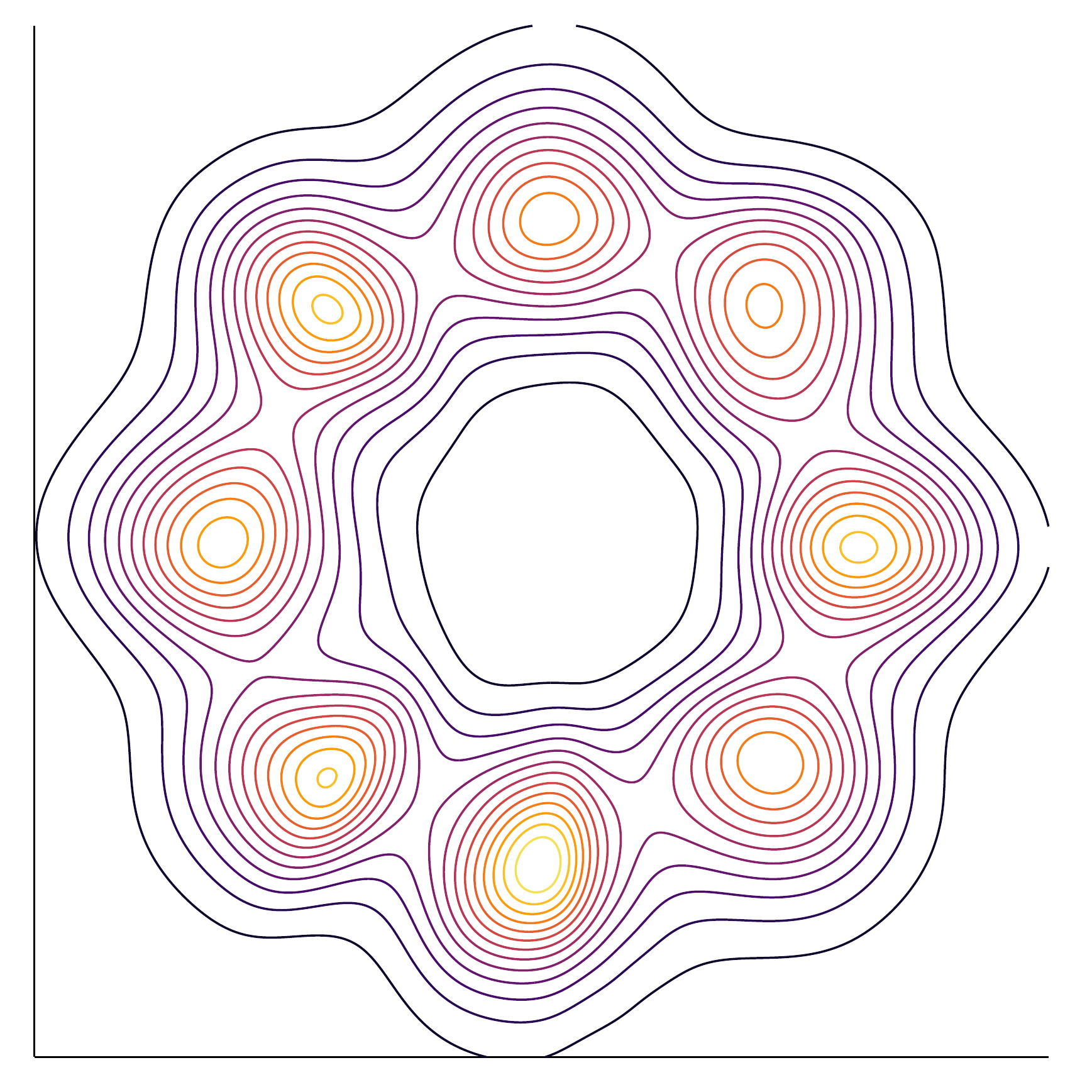}
                   \nsp & \nsp
                     \includegraphics[width=\whe,height=\whe]{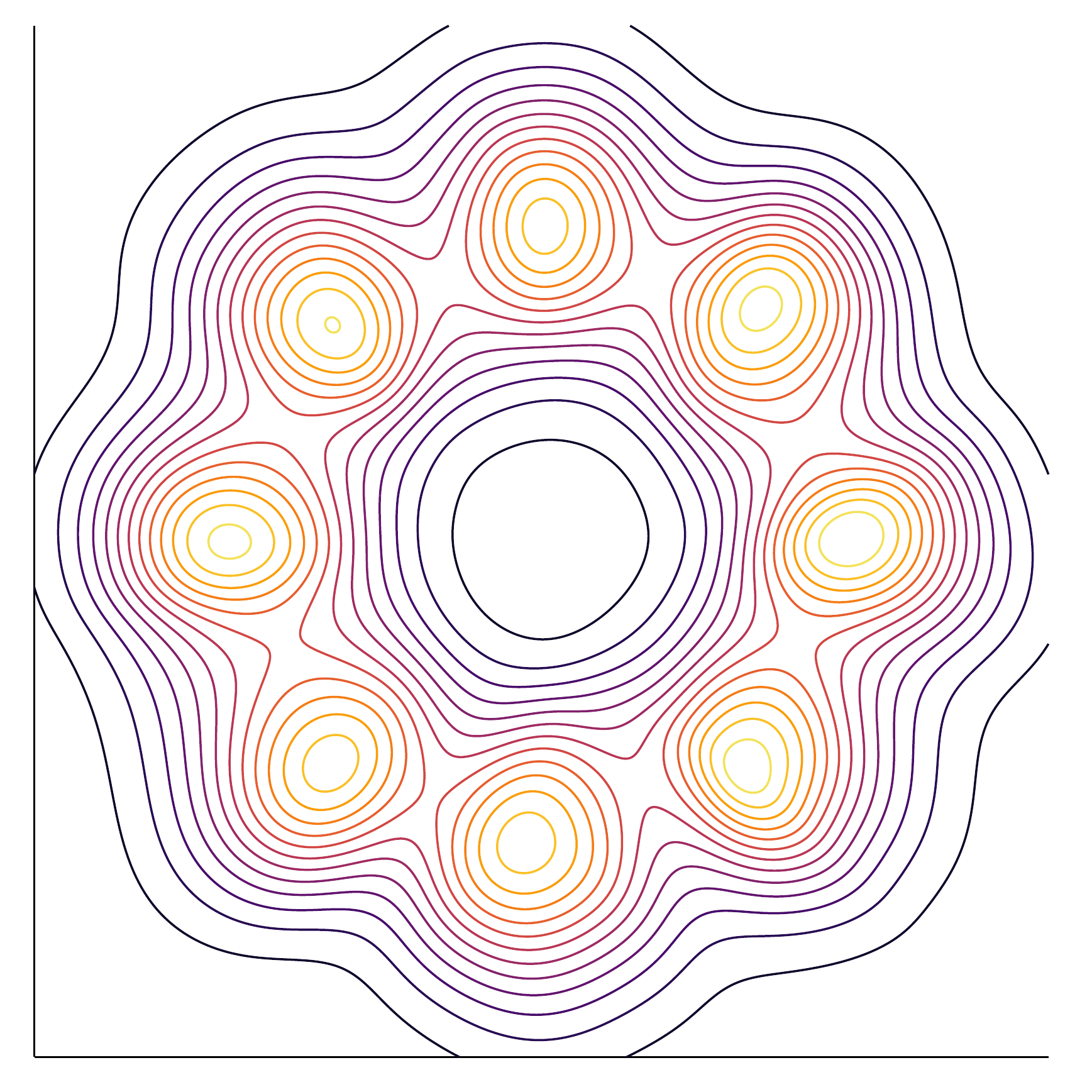}
                     \nsp & \nsp
                       \includegraphics[width=\whe,height=\whe]{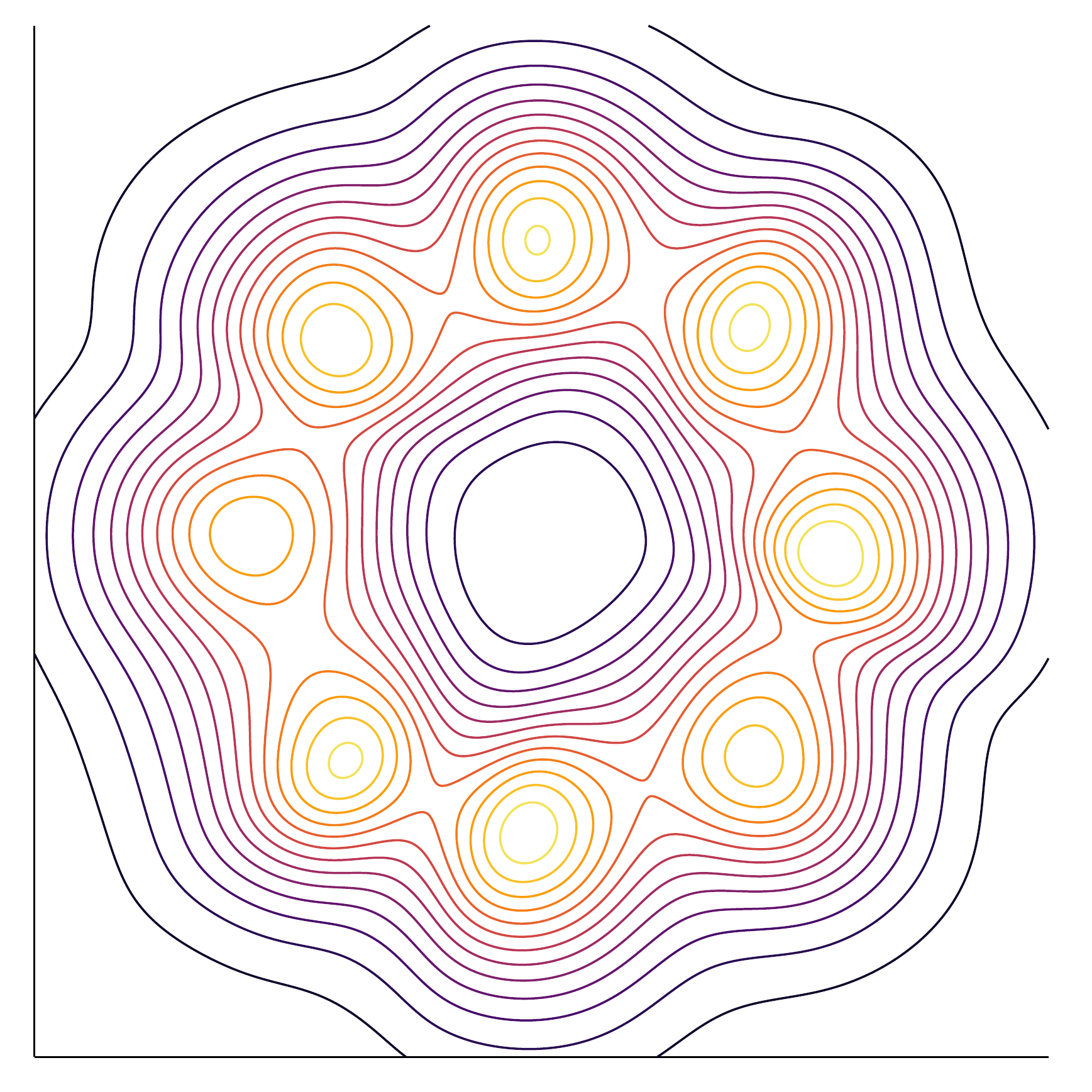}
                       \nsp & \nsp
                         \includegraphics[width=\whe,height=\whe]{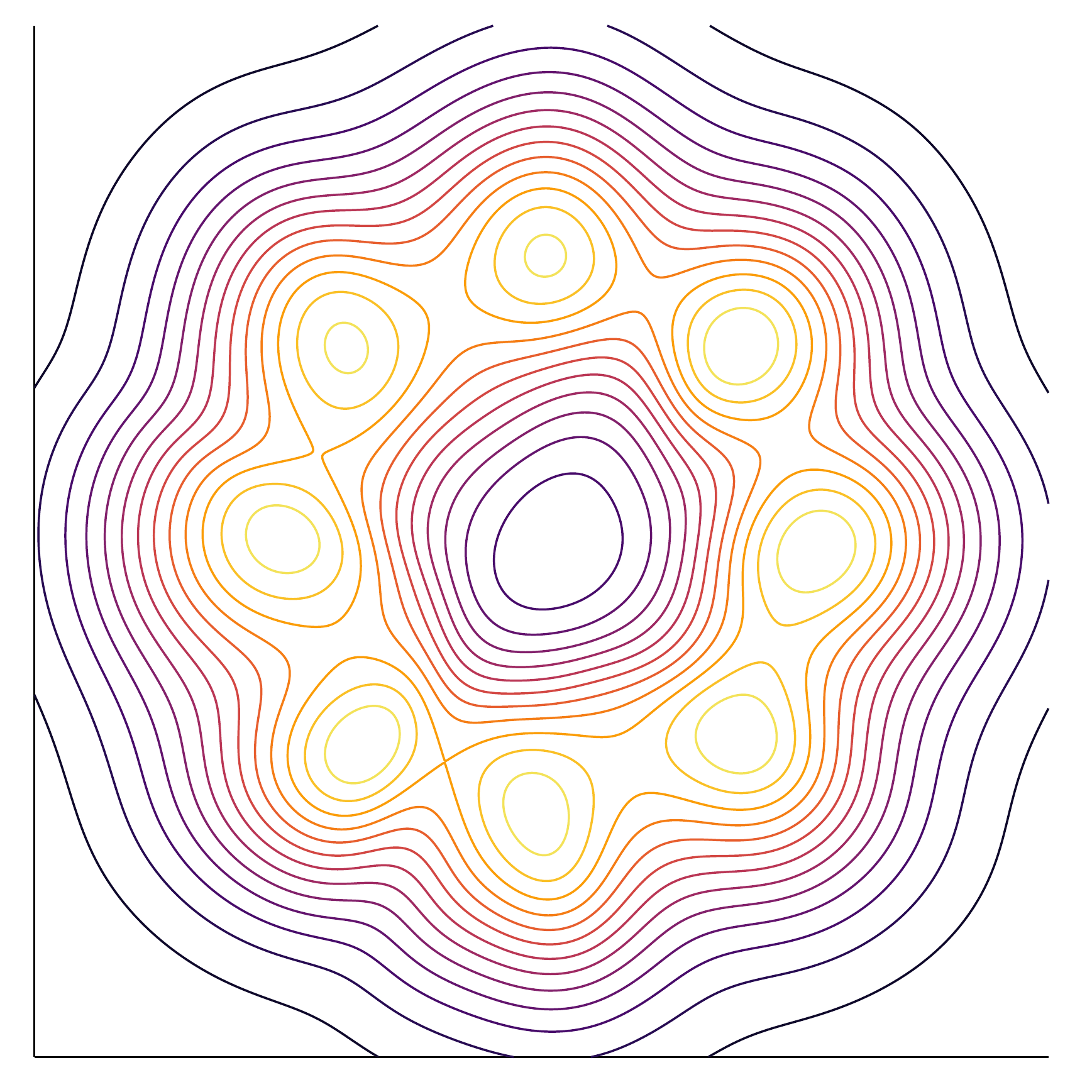}
                         \nsp & \nsp
                           \includegraphics[width=\whe,height=\whe]{Figs/ring_gaussians_eps_0_25}
                           \nsp & \nsp
                                  \includegraphics[width=\whe,height=\whe]{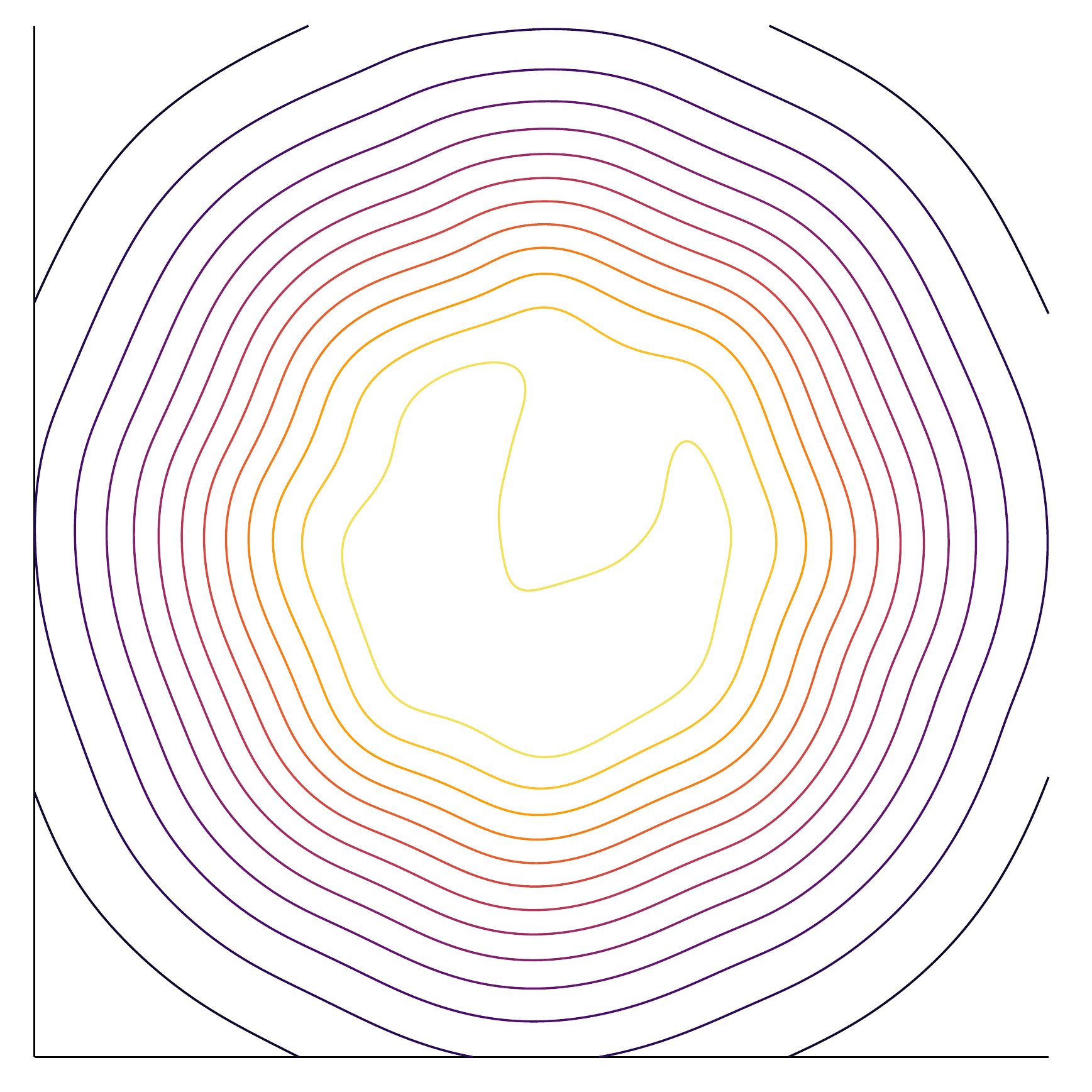}\\
                 \nspp $\epsilon = 5$
                 \nsp & \nsp
                   $\epsilon = 2$
                 \nsp & \nsp $\epsilon = 1.5$
                   \nsp & \nsp
                     $\epsilon = 1$
                     \nsp & \nsp
                       $\epsilon = 0.75$
                       \nsp & \nsp
                         $\epsilon = 0.5$
                         \nsp & \nsp
                           $\epsilon = 0.25$
                           \nsp & \nsp
                                  $\epsilon = 0.1$
                 \end{tabular}
}
\caption{Gaussian ring: densities obtained for DPB (upper row) against
  \pkde~(lower row)}
\label{ring:DPB-vs-proposed}
\end{figure*}

\begin{figure*}
\centering
\begin{tabular}{cc||cc}\\ \hline \hline
\multicolumn{2}{c||}{Gaussian ring} & \multicolumn{2}{c}{1D non random
                                      Gaussian} 
  \\ 
\hspace{\whep} \includegraphics[trim=20bp 20bp 12bp
20bp,clip,width=\whed]{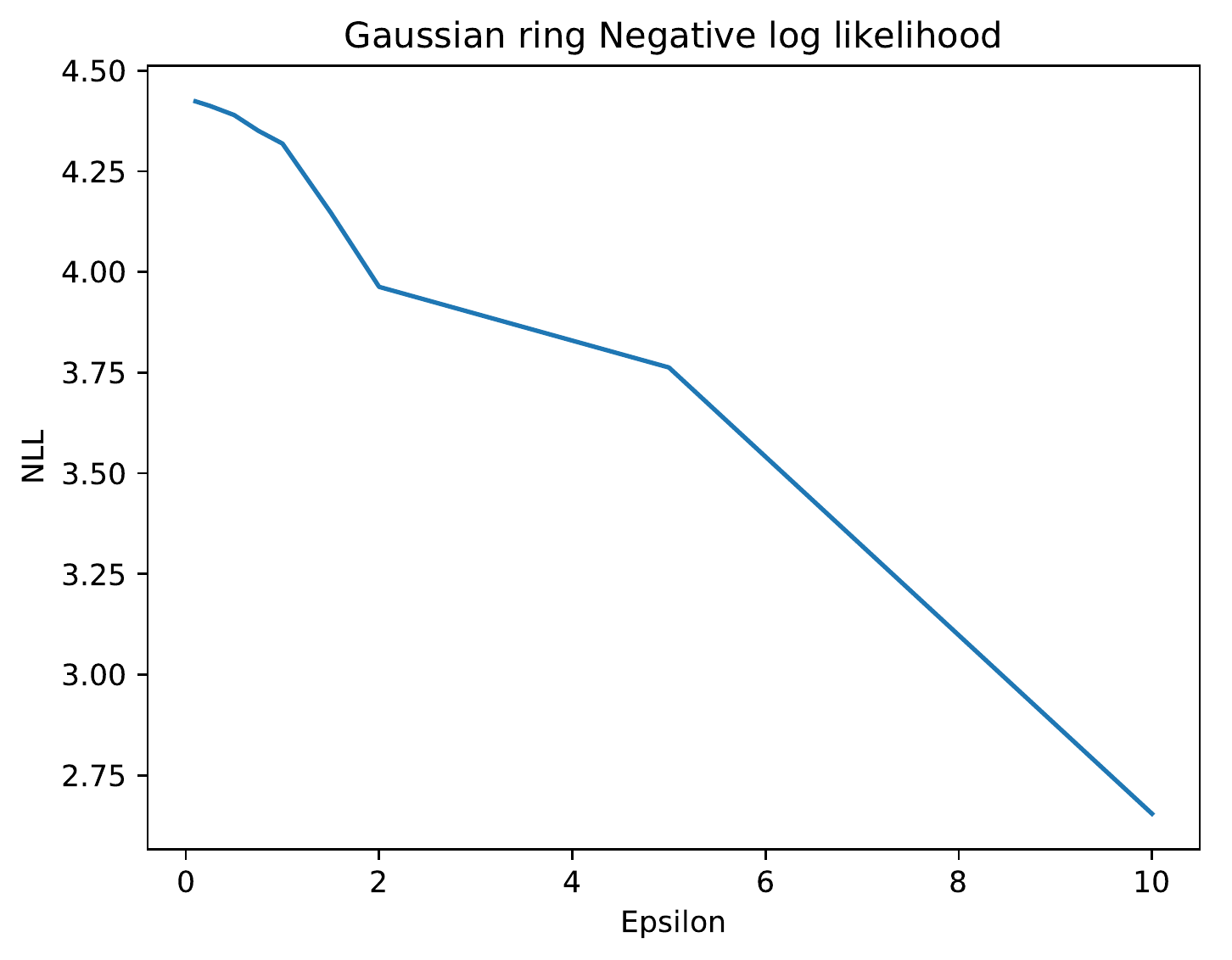}
\hspace{\whep} & \hspace{\whep} \includegraphics[trim=20bp 20bp 20bp
20bp,clip,width=\whed]{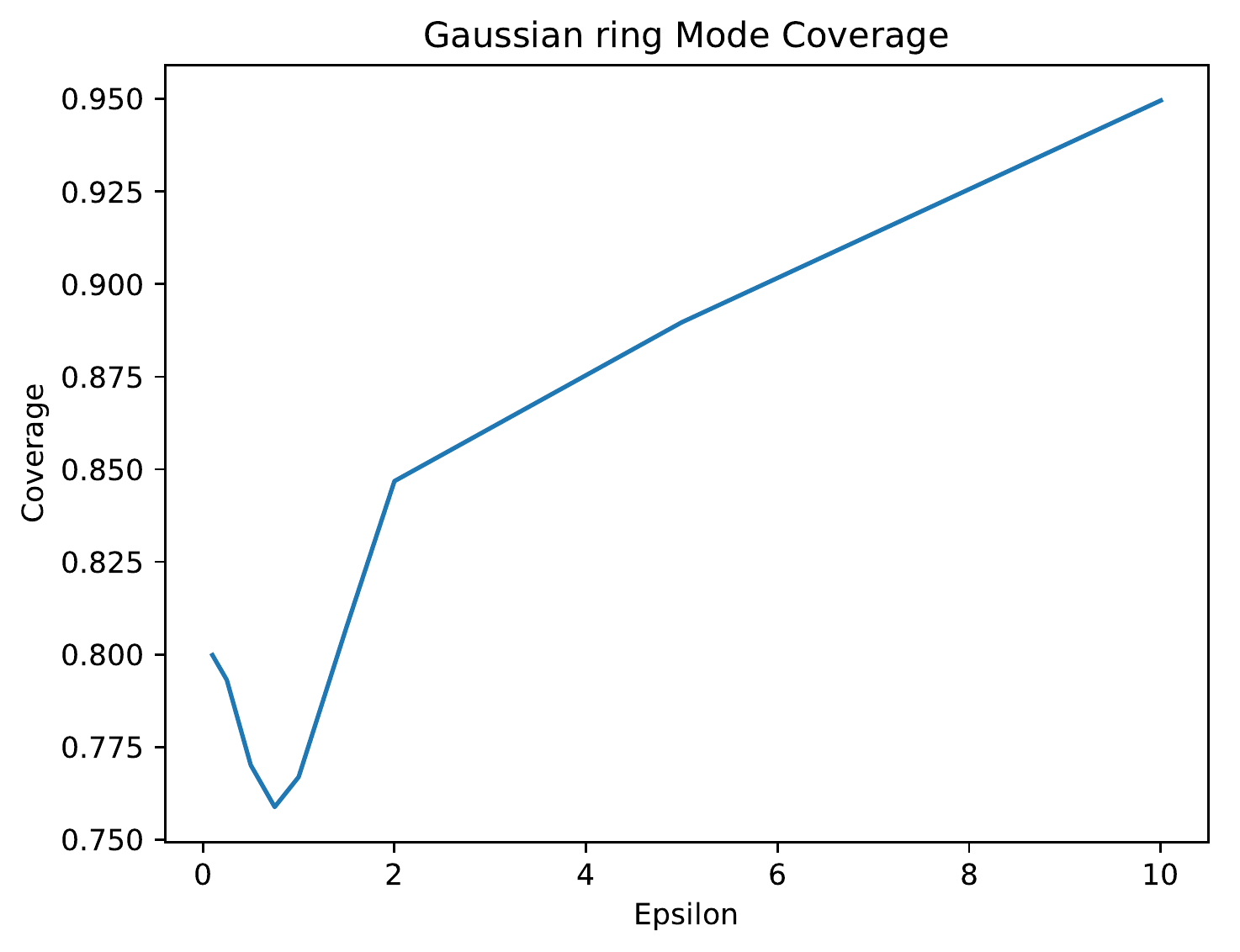} \hspace{\whep} & \hspace{\whep} \includegraphics[trim=20bp 20bp 20bp
20bp,clip,width=\whed]{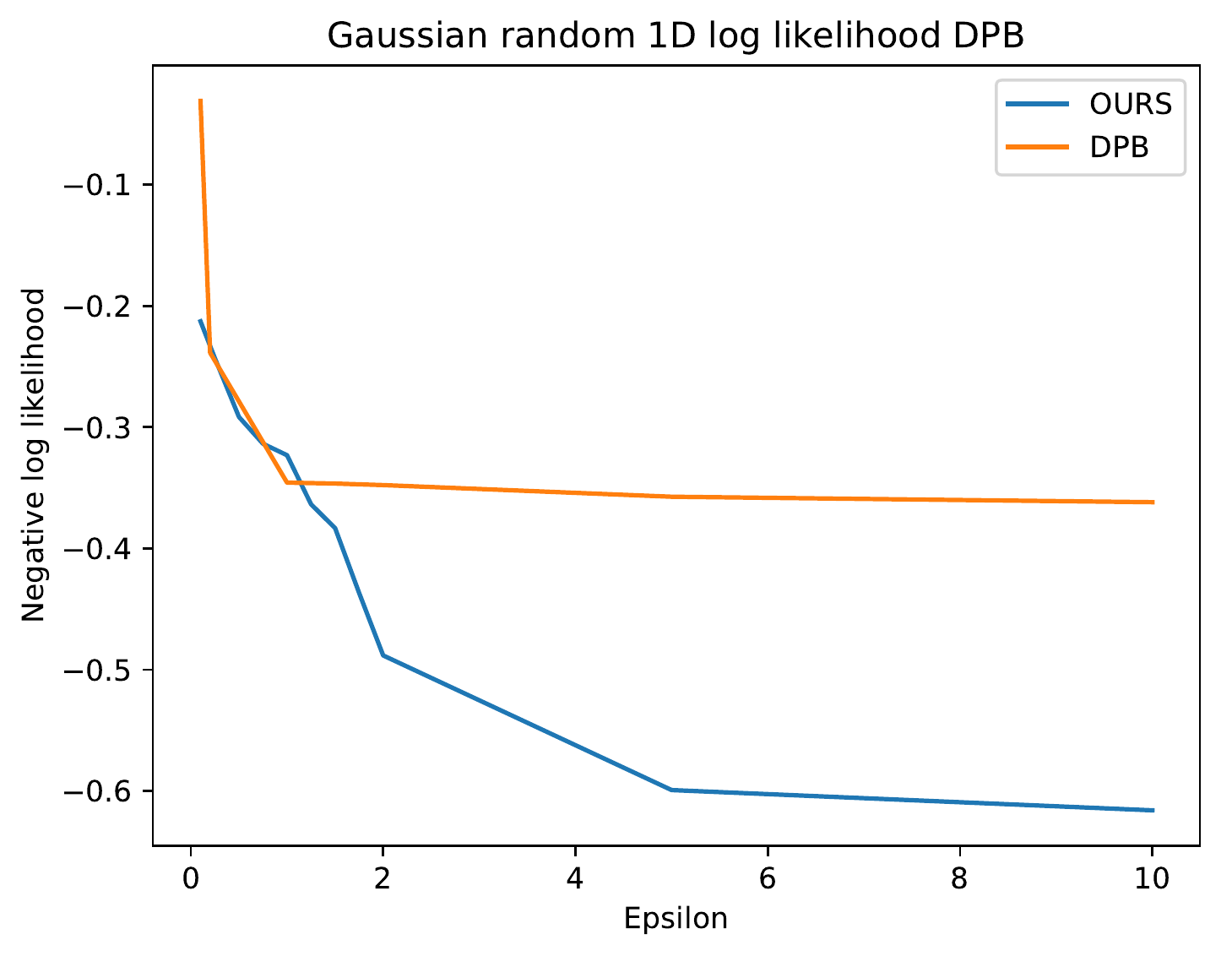} 
\hspace{\whep} & \hspace{\whep}\includegraphics[trim=20bp 20bp 12bp
20bp,clip,width=\whed]{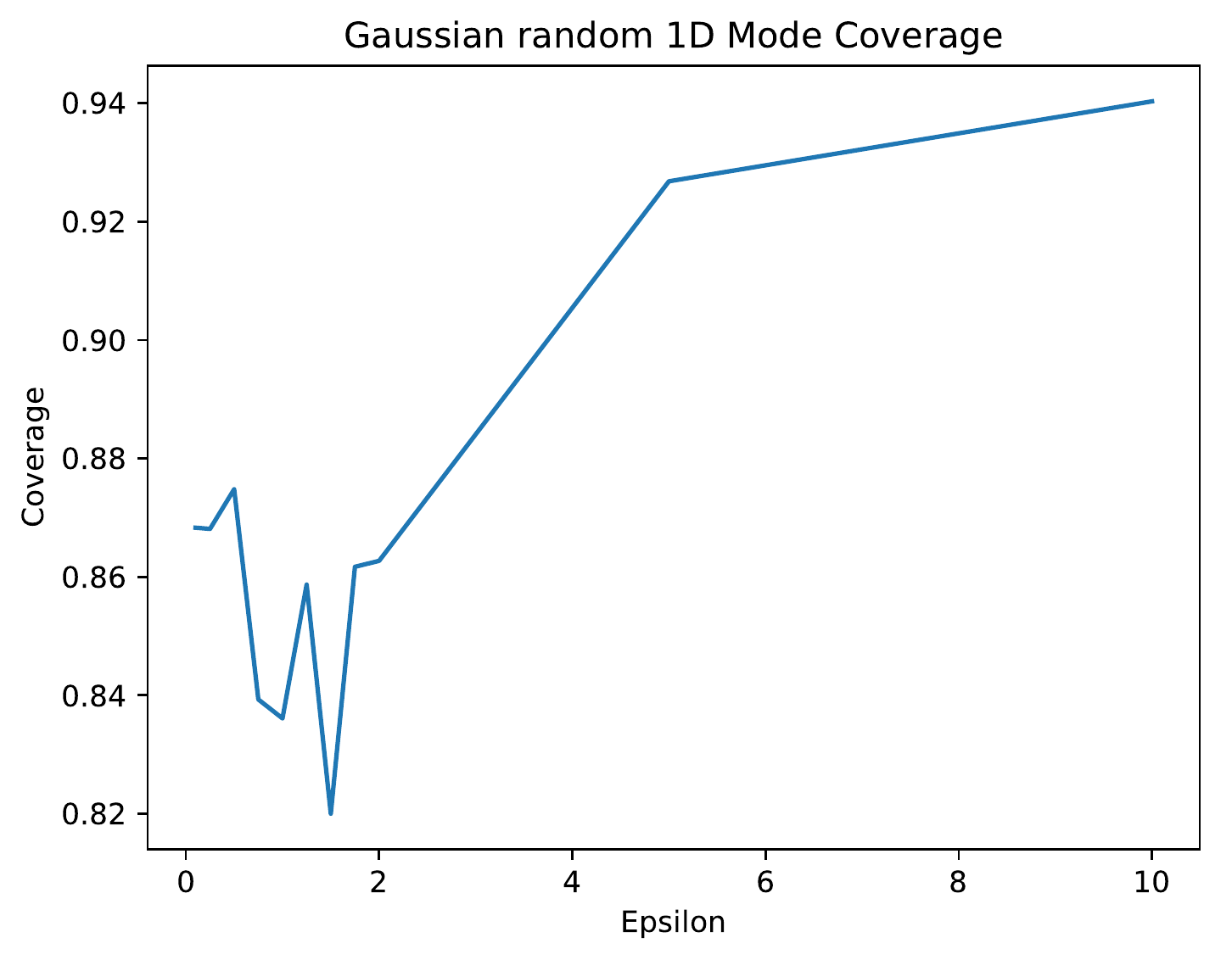} \hspace{\whep} \\
\hspace{\whep}  NLL = f($\varepsilon$) \hspace{\whep} & \hspace{\whep}
                                                        Mode coverage
                                                        =
                                                        f($\varepsilon$)
                                                        \hspace{\whep}
               & \hspace{\whep} NLL = f($\varepsilon$) \hspace{\whep}
                                                      & \hspace{\whep}
                                                        Mode coverage
                                                        =
                                                        f($\varepsilon$)
  \\ \hline\hline
\end{tabular}
\caption{Metrics for \pkde~(blue): NLL (lower is
  better) and mode coverage (higher is better). Orange: DPB (see text).}
\label{GRing+G1D_metrics}
\end{figure*}

\noindent $\triangleright$ \textbf{Architectures} (of $Q_t$, private KDE and
private GANs): we carried out experiments on a simulated setting inspired by
\cite{arTB}, to compare \pkde~(implemented following its description
in Section \ref{sec:algo}) against differentially private KDE
\citep{arTB}. To learn the sufficient statistics for \pkde, we fit for
each $c_t$ a neural network (NN) classifier:
\begin{align}\mathcal{X} \xrightarrow[\text{dense}]{\tanh}
  \mathbb{R}^{25} \xrightarrow[\text{dense}]{\tanh} \mathbb{R}^{25}
  \xrightarrow[\text{dense}]{\tanh} \mathbb{R}^{25}
  \xrightarrow[\text{dense}]{\mathrm{sigmoid}} (0,1),  \end{align}
 where $\mathcal{X} \in \{\mathbb{R}, \mathbb{R}^2\}$ depending on the experiment.
At each iteration $t$ of boosting, $c_t$
is trained using $10 000$ samples from $P$ and $Q_{t-1}$ using
Nesterov's accelerated gradient descent with $\eta = 0.01$ based on
cross-entropy loss with $750$ epochs. Random walk Metropolis-Hastings
is used to sample from $Q_{t-1}$ at each iteration. For the number of
boosting iterations in \pkde, we pick $T=3$. This is quite a small
value but given the rate of decay of $\theta_t(\epsilon)$ and the
small dimensionality of the domain, we found it a good compromise for
complexity vs accuracy. Finally, $Q_0$ is a standard Gaussian
${\mathcal{N}}(\bm{0}, \textsc{i}_d)$. \\[0.2\baselineskip]
\noindent $\triangleright$ \textbf{Contenders}: we know of no integrally
private sampling approach operating under conditions equivalent to ours, so our main contender is going to be a particular state of the
art $\varepsilon$-differentially private approach which provides a
private \textit{density}, DPB \citep{arTB}. We choose this approach
because digging in its technicalities reveal that \textit{its \textbf{integral
  privacy budget} would be roughly equivalent to ours, mutatis mutandis}. Here is why:
this approach allows to
sample a dataset of arbitrary size (say, $k$) while keeping the same privacy budget, \textit{but}
needs to be scaled to accomodate integral privacy, while in our case,
\pkde~allows to obtain integral privacy for one observation ($k=1$),
\textit{but} its privacy budget
needs to be scaled to accomodate for larger $k$. It turns out that in
both approaches, the scaling of the privacy parameter to accomodate for
arbitrary $k$ and integral privacy is roughly the \textit{same}. In our case,
the change is obvious: the
privacy parameter $\varepsilon$ is naturally scaled by $k$. In
the case of \cite{arTB}, the requirement of integral privacy
multiplies the sensitivity\footnote{Cf \cite[Definition 4]{arTB} for
the sensitivity, \cite[Section 6]{arTB} for the key function $F_H(.,.)$
involved.} by $k$, which implies that the Laplace mechanism does not change
only if $\varepsilon$ is scaled by $k$ \cite[Section
3.3]{drTA}.\\
We have also compared with a private GAN approach, which
has the benefit to yield a simple sampler \textit{but} involves a
weaker privacy model \citep{xlwwzDP} (DPGAN).  For DPB, we use a bandwidth kernel and
learn the bandwidth parameter via $10$-fold cross-validation. For
DPGAN, we train the WGAN base model using batch sizes of $128$ and
$10000$ epochs, with $\delta = 10^{-1}$. We found that DPGAN is
significantly outperformed by both DPB and \pkde, so to save
space we have only included the experiment in Figure
\ref{fig:Princ} (right). We observed that DPB does not always yield a
positive measure. To ensure positivity, we shift
and scale the output, \textit{without} caring for privacy in doing so, which
degrades the privacy guarantee for DPB but keeps the
approximation guarantees of
its output \citep{arTB}. \\[0.2\baselineskip]
\noindent $\triangleright$ \textbf{Metrics}: we consider two metrics, inspired by those we consider
for our theoretical analysis and one investigated in \cite{tgbssAB}
for mode capture. We first investigate the ability of
our method to learn highly dense regions by computing \textit{mode
  coverage}, which is defined to be $P(dQ < t)$ for $t$ such that
$Q(dQ < t) = 0.95$. Mode coverage essentially attempts to find high
density regions of the model $Q$ (based on $t$) and computes the mass
of the target $P$ under this region. Second, we compare the
negative log likelihood, $-E_P[\log Q]$ as a general loss measure. \\[0.2\baselineskip]
\noindent $\triangleright$ \textbf{Domains}: we essentially consider three
different problems. The first is the ring Gaussians problem now common
to generative approaches \citep{gGA}, in which 8 Gaussians have
their modes regularly spaced on a circle. The target $P$ is shown in
Figure \ref{fig:Princ}. Second, we consider a mixture of three 1D
gaussians with pdf $P(x) = \frac{1}{3} \bracket{\mathcal{N}(0.3,0.01) + \mathcal{N}(0.5,0.1) + \mathcal{N}(0.7,0.1)}$. For the final
experiment, we consider a 1D domain and randomly place
$m$ gaussians with means centered in the interval $[0,1]$ and
variances $0.01$. We vary $m = 1,\ldots,10$, $\epsilon \in
(0,2]$ and repeat the experiment four times to get means and standard deviations. \supplement~(Section \ref{supp-exp}) shows more experiments. \\[0.2\baselineskip]
\noindent $\triangleright$ \textbf{Results}: Figure \ref{ring:DPB-vs-proposed} displays contour plots of
the learned $Q$ against DPB \citep{arTB}. Figure \ref{GRing+G1D_metrics} provides metrics. We
indicate the metric performance for DPB on one plot only since density
estimates obtained for some of the other metrics could not allow for
an accurate computation of metrics.
The experiments bring the following observations: \pkde~is significantly
better at integrally private density estimation than DPB if we look at the ring Gaussian
problem. \pkde~essentially obtains the same results as DPB for values
of $\epsilon$ that are 400 times smaller as seen from Figure
\ref{fig:Princ}. We also remark that the density modelled are more
smooth and regular for \pkde~in this case. One might attribute the fact that our
performance is much better on the ring Gaussians to the fact that
our $Q_0$ is a standard Gaussian, located at the middle of the ring in
this case, but experiments on random 2D Gaussians (see \supplement)
display that our performances also remain better in other settings where
$Q_0$ should represent a handicap. All domains, including the 1D
random Gaussians experiments in Figure \ref{RG_NLL_DPBvsUS} (\supplement),
 display a consistent decreasing NLL for \pkde~as $\epsilon$
increases, with sometimes very sharp decreases for $\epsilon < 2$ (See
also \supplement, Section \ref{supp-exp}). We attribute it to
the fact that it is in this regime of the privacy parameter that
\pkde~captures all modes of the mixture. For larger values of
$\epsilon$, it justs fits better the modes already discovered. We also
remark on the 1D Gaussians that DPB rapidly reaches a plateau of NLL
which somehow show that there is little improvement as $\epsilon$
increases, for $\epsilon \geq 1$. This is not the case for \pkde,
which still manages some additional improvements for $\epsilon > 5$
and significantly beats DPB. We attribute it to the 
flexibility of the sufficient statistics as (deep) classifiers in
\pkde. The 1D random Gaussian problem (Figure \ref{RG_NLL_DPBvsUS} in \supplement)
displays the same pattern for \pkde. We also observe that the standard deviation
of \pkde~is often 100 times \textit{smaller} than for DPB, indicating
not just better but also 
much more stable results.
In the case of mode coverage, we observe for several experiments
(\textit{e.g.} ring Gaussians) that the mode coverage
\textit{decreases} until $\epsilon \approx 1$, and then increases, on
all domains, for \pkde. This, we believe is due to our choice of
$Q_0$, which as a Gaussian, already captures with its mode a part of
the existing modes. As $\epsilon$ increases however, \pkde~performs
better and obtains in general a
significant improvement over $Q_0$. We also observe this phenomenon
for the random 1D Gaussians (Figure \ref{RG_coverage_US}) where the
very small standard deviations (at least for $\varepsilon > .25$ or
$m>1$) display a significant stability for the solutions of \pkde.

\begin{figure}
\centering
\begin{tabular}{cc}\\
\hspace{\whep}\includegraphics[trim=20bp 20bp 12bp
20bp,clip,width=\whed]{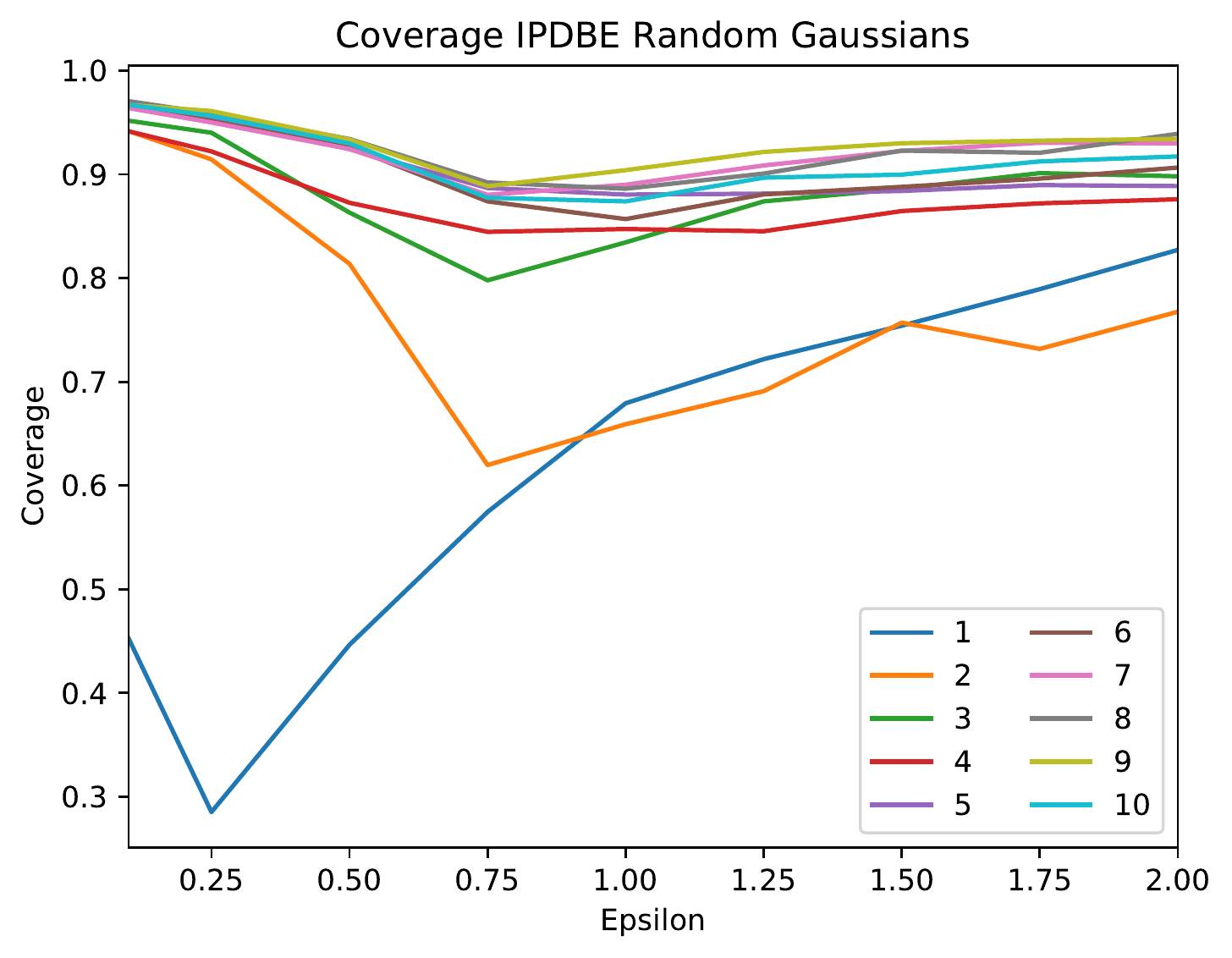}
\hspace{\whep}& \hspace{\whep}\includegraphics[trim=20bp 20bp 12bp
20bp,clip,width=\whed]{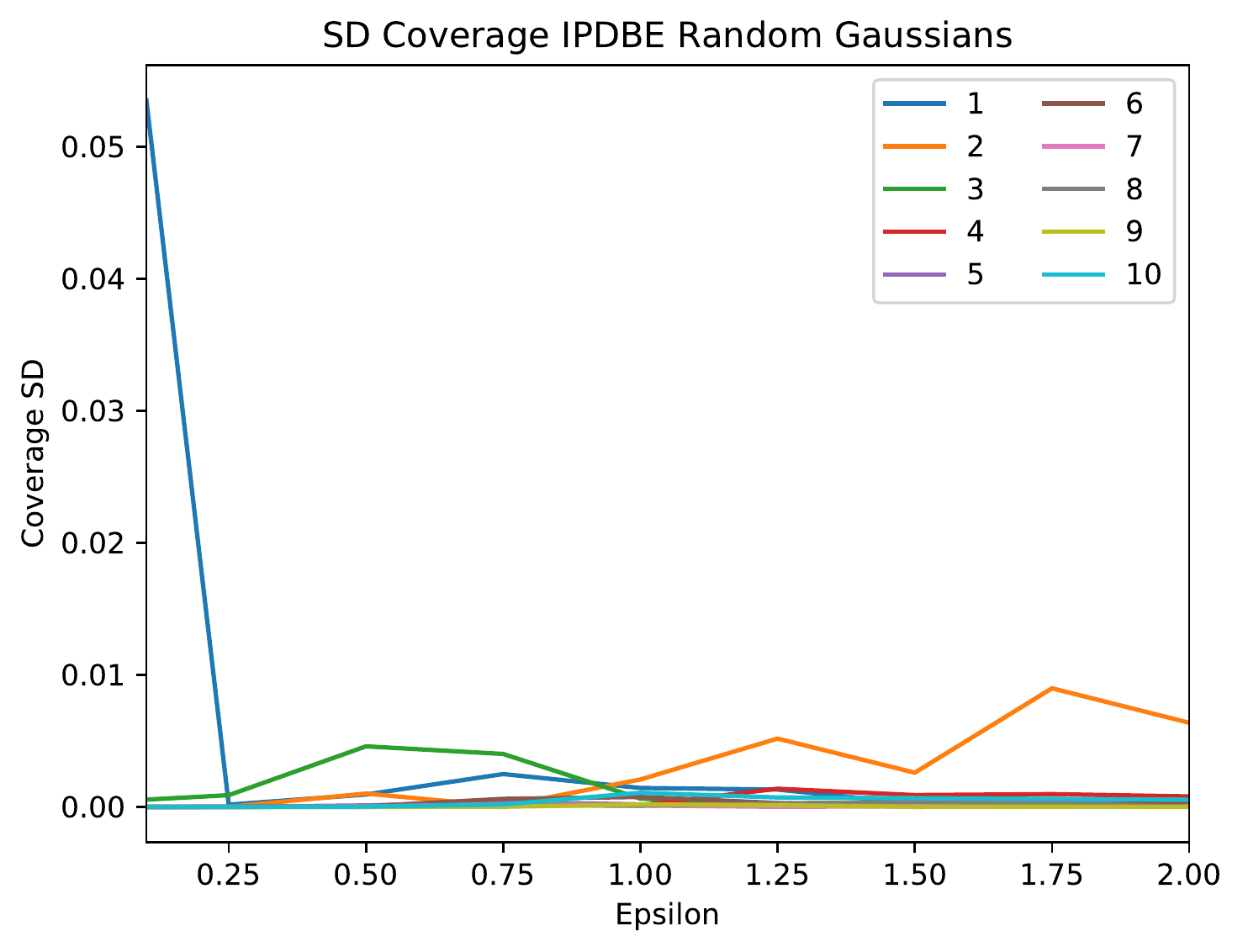}\\\
\hspace{\whep} Mean = f($\varepsilon$) \hspace{\whep}& \hspace{\whep} StDev = f($\varepsilon$)
\end{tabular}
\caption{Mode coverage for \pkde~on 1D random Gaussian.}
\label{RG_coverage_US}
\end{figure}

\section{Discussion: integral privacy, bias and fairness}\label{sec:disc}

Over the past years, privacy has not been the only issue facing the
deployment of machine learning at scale: bias and fairness are other major
issues for the field \citep{bsBD}. In fact, it has been argued that
both should be simultaneously ensured
\citep{jkmorsuDP}, on the basis that privacy may in fact restrict the
access to fairness-checking information in the data. On the other
hand, it has independently been observed that differential privacy
can have unfair consequences \citep{bsDPH}. 
This is not surprising: differential privacy is an individual notion
of privacy and
the
noisfying process that usually goes with it is therefore likely to affect information from
small groups before it affects the majority. Bias and fairness are in general notions
that handle disparate treatments on groups, and often focus on related
minorities: ethnicity,
age, religion, gender, sexual orientation, etc. \citep{zAS}. Hence,
there is a risk that individual noisification for privacy washes out the information of
the small groups that fairness would in fact seek to protect.
By extending the notion of differential privacy to the group level,
to any group of any size, one might wonder whether integral
privacy does bring guarantees from the bias and fairness
standpoints. Thanks to the closure under post-processing of integral privacy, it is easy
to show that any source (in data) of
potential bias or unfairness in any post-processing
gets tampered when data has been integrally privately
sampled. Let $\mathcal{A}$ denote an $\varepsilon$-integrally private sampler that takes as
input datasets and outputs samples from $\mathcal{X}$, as in
Definition \ref{basicDef}. Hereafter, we let $\mathcal{T}$ denote a
space of possibly sensitive outcomes such as the decision to hire, to
give a loan, etc. Let $f : 2^{\mathcal{X}} \rightarrow \mathcal{T}$ be any
algorithm processing data to provide with such decisions. We make no
further assumptions about $f$ nor the eventual additional inputs it
may have. We just reason about the input in ${\mathcal{X}}$ and the
output in $\mathcal{T}$.
\begin{lemma}
Let $\mathcal{A}, \mathcal{T}$ be defined as above and let $D, D'$ any
two datasets. Then for any $T \subseteq \mathcal{T}$,
$\text{Pr}[f\circ \mathcal{A}(D) \in T] \leq \exp(\varepsilon) \cdot \text{Pr}[f\circ \mathcal{A}(D') \in T]$.
\end{lemma}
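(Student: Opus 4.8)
The plan is to run the textbook closure-under-post-processing argument, but for integral privacy rather than differential privacy; the key observation is that the post-processing map $f$ sees only the \emph{output} of $\mathcal{A}$ and never touches $D$ or $D'$, so dropping the neighbouring constraint $D\approx D'$ (and with it any size constraint, cf. the remark after Definition~\ref{basicDef}) changes nothing. Note that $\mathcal{A}$ here is exactly the kind of sampler produced by \pkde: by Theorem~\ref{privacy-theorem} its output lies in $\mathcal{M}_\varepsilon$, and by Lemma~\ref{molpriv} this makes it $\varepsilon$-integrally private; but the lemma only uses that $\mathcal{A}$ is $\varepsilon$-integrally private, nothing more.

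First I would treat the case where $f$ is deterministic. Fix any $T\subseteq\mathcal{T}$ and set $S \defeq f^{-1}(T) = \{\,y : f(y)\in T\,\}$, the preimage of $T$ inside the output space of $\mathcal{A}$ (a subset of $\mathcal{X}$, or of $2^{\mathcal{X}}$, depending on the convention for what $\mathcal{A}$ emits). Then the events $\{f\circ\mathcal{A}(D)\in T\}$ and $\{\mathcal{A}(D)\in S\}$ coincide, and likewise for $D'$, so
\[
\text{Pr}[f\circ\mathcal{A}(D)\in T] \;=\; \text{Pr}[\mathcal{A}(D)\in S] \;\leq\; \exp(\varepsilon)\cdot\text{Pr}[\mathcal{A}(D')\in S] \;=\; \exp(\varepsilon)\cdot\text{Pr}[f\circ\mathcal{A}(D')\in T],
\]
where the single inequality is precisely the $\varepsilon$-integral privacy of $\mathcal{A}$ from Definition~\ref{basicDef} (the ``$\forall D,D'$'' version), instantiated at the set $S$.

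Next I would lift this to a randomized $f$. Write $f=f_\omega$, where $\omega$ denotes the internal randomness of $f$, drawn independently of the randomness used by $\mathcal{A}$. For each fixed $\omega$ the map $f_\omega$ is deterministic, so the display above gives $\text{Pr}[f_\omega\circ\mathcal{A}(D)\in T]\leq \exp(\varepsilon)\cdot\text{Pr}[f_\omega\circ\mathcal{A}(D')\in T]$. Taking expectations over $\omega$ and using independence, so that $\text{Pr}[f\circ\mathcal{A}(D)\in T]=\E_\omega\,\text{Pr}[f_\omega\circ\mathcal{A}(D)\in T]$ and similarly for $D'$, preserves the inequality because the factor $\exp(\varepsilon)$ is constant in $\omega$. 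This is exactly the claimed bound.

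There is no real obstacle here; the only point requiring a word of care is measurability: applying Definition~\ref{basicDef} to $S$ needs $S=f^{-1}(T)$ to be a measurable subset of the output space, and the averaging step needs $\omega\mapsto\text{Pr}[f_\omega\circ\mathcal{A}(\cdot)\in T]$ to be measurable, both of which hold under the standard convention that $f$ and $T$ are measurable. I would emphasize in the writeup that the argument uses nothing about $f$ beyond measurability and nothing about $\mathcal{A}$ beyond $\varepsilon$-integral privacy, and in particular is entirely agnostic to whether $D$ and $D'$ share a size or a generating distribution — which is the whole point for the fairness discussion in Section~\ref{sec:disc}.
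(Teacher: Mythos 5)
Your proof is correct and follows essentially the same route as the paper's: fix $T$, pass to the preimage $f^{-1}(T)\subseteq\mathcal{X}$, and apply the $\varepsilon$-integral privacy of $\mathcal{A}$ to that set, with the randomized-$f$ case reduced to the deterministic one (the paper handles this reduction by a ``without loss of generality'' citation to the standard post-processing argument, which you simply spell out via conditioning on $f$'s internal randomness). No gap; your measurability remark is a harmless extra precaution.
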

\begin{proof}
Like in the proof of \cite[Proposition 2.1]{drTA}, we suppose
without loss of generality that $f$ is deterministic. 
Fix any event $T \subseteq \mathcal{T}$, noting that $f^{-1}(T) \subseteq
\mathcal{X}$. Since $\mathcal{A}$ is an $\varepsilon$-integrally
private sampler, 
\begin{eqnarray}
\text{Pr}[f\circ \mathcal{A}(D) \in T] & = & \text{Pr}[\mathcal{A}(D)
                                             \in f^{-1}(T)]\nonumber\\
& \leq & \exp(\varepsilon) \cdot \text{Pr}[\mathcal{A}(D')
                                             \in f^{-1}(T)]\nonumber\\
& & = \exp(\varepsilon) \cdot \text{Pr}[f\circ \mathcal{A}(D')
                                             \in T],
\end{eqnarray}
which proves the Lemma. 
\end{proof}
Hence, any decision process has limited data-dependent bias when the
data is integrally privately sampled. We emphasize, as already noted from the abstract, that
such a strong guarantee of unbiasedness does not come for free from
the standpoint of approximating the true data distribution, but it
is intuitive that unbiasedness from input data should prevent too much
learning or overfitting this input data -- which otherwise could eventually reveal
bias. It is however possible, under some assumptions, to come close to the best
possible approximation, as explained in Theorem \ref{kl-upper-lower}.
\section{Conclusion}\label{sec:conc}

In this paper, we have proposed an extension of
$\varepsilon$-differential privacy to handle the protection of groups
of arbitrary size, and applied it to sampling. The technique bypasses
noisification and the sensitivity analysis as
usually carried out in DP. The privacy parameter $\epsilon$ also acts
as a slider between approximation vs fairness guarantees: 
higher approximation guarantees of the target can be obtained
when $\epsilon$ is large, while higher guarantees on data-dependent unbiasedness of any decision
process that would use the integrally privately sampled data follow when
$\epsilon$ is small.
An efficient learning algorithm is proposed, with approximation guarantees in the context of the
boosting theory. Experiments demonstrate the quality of the
solutions found, in particular in the context of the mode capture
problem. 
\section*{Acknowledgements and code availability}\label{sec:ack}

We are indebted to Benjamin Rubinstein for providing us with the Private KDE code, Borja de Balle Pigem and anonymous reviewers for
significant help in correcting and improving focus, clarity and presentation, and finally Arthur Street for stimulating discussions around this material. Our code is
available at: 
\begin{center}
\texttt{https://github.com/karokaram/PrivatedBoostedDensities}
\end{center}

\bibliographystyle{abbrvnat}
\bibliography{references,bibgen}

\begin{thebibliography}{35}
\providecommand{\natexlab}[1]{#1}
\providecommand{\url}[1]{\texttt{#1}}
\expandafter\ifx\csname urlstyle\endcsname\relax
  \providecommand{\doi}[1]{doi: #1}\else
  \providecommand{\doi}{doi: \begingroup \urlstyle{rm}\Url}\fi

\bibitem[Ald{\`a} and Rubinstein(2017)]{arTB}
F.~Ald{\`a} and B.~Rubinstein.
\newblock The {Bernstein} mechanism: Function release under differential
  privacy.
\newblock In \emph{AAAI'17}, 2017.

\bibitem[Amari and Nagaoka(2000)]{anMO}
S.-I. Amari and H.~Nagaoka.
\newblock \emph{Methods of Information Geometry}.
\newblock Oxford University Press, 2000.

\bibitem[Bagdasaryan and Shmatikov(2019)]{bsDPH}
E.~Bagdasaryan and V.~Shmatikov.
\newblock Differential privacy has disparate impact on model accuracy.
\newblock \emph{CoRR}, abs/1905.12101, 2019.

\bibitem[Barocas and Selbst(2016)]{bsBD}
S.~Barocas and A.-D. Selbst.
\newblock Big data’s disparate impact.
\newblock \emph{California Law Review}, 104:\penalty0 671--732, 2016.

\bibitem[Boissonnat et~al.(2010)Boissonnat, Nielsen, and Nock]{bnnBV}
J.-D. Boissonnat, F.~Nielsen, and R.~Nock.
\newblock Bregman voronoi diagrams.
\newblock \emph{DCG}, 44\penalty0 (2):\penalty0 281--307, 2010.

\bibitem[Cranko and Nock(2019)]{cnBD}
Z.~Cranko and R.~Nock.
\newblock Boosted density estimation remastered.
\newblock In \emph{ICML'19}, 2019.

\bibitem[Culnane et~al.(2017)Culnane, Rubinstein, and Teague]{crtHD}
C.~Culnane, B.~Rubinstein, and V.~Teague.
\newblock Health data in an open world.
\newblock \emph{CoRR}, abs/1712.05627, 2017.

\bibitem[{Differential privacy team, {A}pple}(2017)]{adptLW}
{Differential privacy team, {A}pple}.
\newblock Learning with differential privacy at scale, 2017.

\bibitem[Dimitratakis et~al.(2014)Dimitratakis, Nelson, Mitrokotsa, and
  Rubinstein]{dnmrRA}
C.~Dimitratakis, B.~Nelson, A.~Mitrokotsa, and B.~Rubinstein.
\newblock Robust and private bayesian inference.
\newblock In \emph{ALT'14}, pages 291--305, 2014.

\bibitem[Duchi et~al.(2013{\natexlab{a}})Duchi, Jordan, and Wainwright]{djwLP}
J.-C. Duchi, M.-I. Jordan, and M.~Wainwright.
\newblock Local privacy and minimax bounds: sharp rates for probability
  estimation.
\newblock \emph{NIPS*26}, pages 1529--1537, 2013{\natexlab{a}}.

\bibitem[Duchi et~al.(2013{\natexlab{b}})Duchi, Jordan, and Wainwright]{djwLPD}
J.-C. Duchi, M.-I. Jordan, and M.~Wainwright.
\newblock Local privacy, data processing inequalities, and minimax rates.
\newblock \emph{CoRR}, abs/1302.3203, 2013{\natexlab{b}}.

\bibitem[Dwork and Roth(2014)]{drTA}
C.~Dwork and A.~Roth.
\newblock The algorithmic foudations of differential privacy.
\newblock \emph{Foundations and Trends in Theoretical Computer Science},
  9:\penalty0 211--407, 2014.

\bibitem[Gaboardi(2016)]{gTI}
M.~Gaboardi.
\newblock Topics in differential privacy.
\newblock Course Notes, State University of New York, 2016.

\bibitem[Gilbarg and Trudinger(2001)]{gtEP}
D.~Gilbarg and N.~Trudinger.
\newblock \emph{Elliptic Partial Differential Equations of Second Order}.
\newblock Springer, 2001.

\bibitem[Givens and Hoeting(2013)]{ghCS}
G.-F. Givens and J.-A. Hoeting.
\newblock \emph{Computational Statistics}.
\newblock Wiley, 2013.

\bibitem[Goodfellow(2016)]{gGA}
I.~Goodfellow.
\newblock Generative adversarial networks, 2016.
\newblock NIPS'16 tutorials.

\bibitem[Hall et~al.(2013)Hall, Rinaldo, and Wasserman]{hrwDP}
R.~Hall, A.~Rinaldo, and L.-A. Wasserman.
\newblock Differential privacy for functions and functional data.
\newblock \emph{JMLR}, 14\penalty0 (1):\penalty0 703--727, 2013.

\bibitem[Jagielski et~al.(2019)Jagielski, Kearns, Mao, Oprea, Roth,
  Sharifi{-}Malvajerdi, and Ullman]{jkmorsuDP}
M.~Jagielski, M.-J. Kearns, J.~Mao, A.~Oprea, A.~Roth, S.~Sharifi{-}Malvajerdi,
  and J.~Ullman.
\newblock Differentially private fair learning.
\newblock In \emph{36$^{th}$ ICML}, pages 3000--3008, 2019.

\bibitem[Kearns(1988)]{kTO}
M.~Kearns.
\newblock Thoughts on hypothesis boosting, 1988.
\newblock ML class project.

\bibitem[Kearns et~al.(2016)Kearns, Roth, Wu, and Yaroslavtsev]{krwyPA}
M.~Kearns, A.~Roth, Z.-S. Wu, and G.~Yaroslavtsev.
\newblock Private algorithms for the protected in social network search.
\newblock \emph{PNAS}, 113:\penalty0 913--918, 2016.

\bibitem[Lord(2018)]{nlTT}
N.~Lord.
\newblock Top 10 biggest healthcare data breaches of all time.
\newblock The Digital Guardian, June 2018.

\bibitem[Machanavajjhala et~al.(2008)Machanavajjhala, Kifer, Abowd, Gehrke, and
  Vilhuber]{mkagvPT}
A.~Machanavajjhala, D.~Kifer, J.-M. Abowd, J.~Gehrke, and L.~Vilhuber.
\newblock Privacy: Theory meets practice on the map.
\newblock In \emph{ICDE'08}, pages 277--286, 2008.

\bibitem[Mir(2013)]{mDP}
D.-J. Mir.
\newblock \emph{Differential privacy: an exploration of the privacy-utility
  landscape}.
\newblock PhD thesis, Rutgers University, 2013.

\bibitem[Palanisamy et~al.(2017)Palanisamy, Li, and Krishnamurthy]{plkGD}
B.~Palanisamy, C.~Li, and P.~Krishnamurthy.
\newblock Group differential privacy-preserving disclosure of multi-level
  association graphs.
\newblock In \emph{ICDCS'17}, pages 2587--2588, 2017.

\bibitem[Rubinstein and Ald{\`a}(2017)]{raPF}
B.~Rubinstein and F.~Ald{\`a}.
\newblock Pain-free random differential privacy with sensivity sampling.
\newblock In \emph{34$^{th}$ ICML}, 2017.

\bibitem[Rubinstein et~al.(2016)Rubinstein, Teague, and Culnane]{rtcUT}
B.~Rubinstein, V.~Teague, and C.~Culnane.
\newblock Understanding the maths is crucial for protecting privacy.
\newblock The University of Melbourne, September 2016.

\bibitem[Schapire and Singer(1999)]{ssIBj}
R.~E. Schapire and Y.~Singer.
\newblock Improved boosting algorithms using confidence-rated predictions.
\newblock \emph{MLJ}, 37:\penalty0 297--336, 1999.

\bibitem[Thaler et~al.(2012)Thaler, Ullman, and Vadhan]{tuvFA}
J.~Thaler, J.~Ullman, and S.-P. Vadhan.
\newblock Faster algorithms for privately releasing marginals.
\newblock In \emph{ICALP'12}, pages 810--821, 2012.

\bibitem[Tolstikhin et~al.(2017)Tolstikhin, Gelly, Bousquet, Simon{-}Gabriel,
  and Sch{\"{o}}lkopf]{tgbssAB}
I.-O. Tolstikhin, S.~Gelly, O.~Bousquet, C.~Simon{-}Gabriel, and
  B.~Sch{\"{o}}lkopf.
\newblock Adagan: Boosting generative models.
\newblock In \emph{NIPS*30}, pages 5430--5439, 2017.

\bibitem[Wainwright(2014)]{wCF}
M.~Wainwright.
\newblock Constrained forms of statistical minimax: computation, communication,
  and privacy.
\newblock In \emph{International Congress of Mathematicians, ICM'14}, 2014.

\bibitem[Wang et~al.(2015)Wang, Fienberg, and Smola]{wfsPF}
Y.-X. Wang, S.~Fienberg, and A.-J. Smola.
\newblock Privacy for free: Posterior sampling and stochastic gradient {Monte
  Carlo}.
\newblock In \emph{32$^{nd}$ ICML}, pages 2493--2502, 2015.

\bibitem[Wasserman and Zhou(2010)]{wzAS}
L.~Wasserman and S.~Zhou.
\newblock A statistical framework for differential privacy.
\newblock \emph{J. of the Am. Stat. Assoc.}, 105:\penalty0 375--389, 2010.

\bibitem[Wu(2017)]{wDP}
Z.-S. Wu.
\newblock \emph{Data Privacy Beyond Differential Privacy}.
\newblock PhD thesis, University of Pennsylvania, 2017.

\bibitem[Xie et~al.(2018)Xie, Lin, Wang, Wang, and Zhou]{xlwwzDP}
L.~Xie, K.~Lin, S.~Wang, F.~Wang, and J.~Zhou.
\newblock Differentially private generative adversarial network.
\newblock \emph{CoRR}, abs/1802.06739, 2018.

\bibitem[Zliobaite(2015)]{zAS}
I.~Zliobaite.
\newblock A survey on measuring indirect discrimination in machine learning.
\newblock \emph{CoRR}, abs/1511.00148, 2015.

\end{thebibliography}

\clearpage
\onecolumn
\section*{\supplement: table of contents}

\noindent \textbf{Proofs and formal results}\hrulefill Pg \pageref{supp-formal}\\
\noindent Proof of Lemma \ref{molpriv}\hrulefill Pg
\pageref{proof_molpriv}\\
\noindent Proof of Theorem \ref{privacy-theorem}\hrulefill Pg
\pageref{proof_privacy-theorem}\\
\noindent Proof of Theorem \ref{kl-upper}\hrulefill Pg
\pageref{proof_kl-upper}\\
\noindent Proof of Theorem \ref{kl-upper-lower}\hrulefill Pg
\pageref{proof_kl-upper-lower}\\
\noindent Proof of Theorem \ref{first-mode-capture}\hrulefill Pg
\pageref{proof_mode-capture}\\
\noindent Additional formal results\hrulefill Pg
\pageref{proof_nonTrivial}\\

\noindent \textbf{Additional experiments}\hrulefill Pg \pageref{supp-exp}\\

\newpage

\section{Proofs and formal
  results}\label{supp-formal}

\subsection{Proof of Lemma \ref{molpriv}}\label{proof_molpriv}
The proof follows from two simple observations: (i) ensuring \eqref{eqdiffpriv} is equivalent to $\text{Pr}[\mathcal{A}(d) \in  S] \leq \exp(\epsilon) \cdot \text{Pr}[\mathcal{A}(D') \in S]$ since it has to holds for all $S$, and (ii) the probability to sample any $S$ is equal to the mass under the density from which it samples from: \begin{align} \text{Pr}[\mathcal{A}(D) \in S] = \int_S dQ_{\epsilon}(x;D). \end{align}
Recall that base measures are assumed to be the same, so being in $\mathcal{M}$ translates to a property on Radon-Nikodym derivatives, $dQ / dQ' \leq \exp(\varepsilon)$, and we then get the statement of the Lemma: since $Q_{\epsilon}(x;.) \in \mathcal{M}$ where $\mathcal{M}$ is a $\varepsilon$-mollifier, we get from Definition \ref{defMOL} that for any input samples $D$, $D'$ from $\mathcal{X}$ and any $S \subseteq \mathcal{X}$:
\begin{eqnarray}
\lefteqn{\text{Pr}[\mathcal{A}(D) \in S] = \int_S dQ_{\epsilon}(x;D) = \int_S \frac{dQ_{\epsilon}(x;D)}{dQ_{\epsilon}(x;D')} \cdot dQ_{\epsilon}(x;D')}\nonumber\\
 & \leq & \exp(\epsilon) \cdot \int_S dQ_{\epsilon}(x;D') = \exp(\epsilon) \cdot \text{Pr}[\mathcal{A}(D') \in S],
\end{eqnarray}
which shows that $\mathcal{A}$ is $\varepsilon$-integrally private. 

\subsection{Proof of Theorem \ref{privacy-theorem}}\label{proof_privacy-theorem}

The proof follows from two Lemma which we state and prove.
\begin{lemma}
\label{theta-convergences}
For any $T \in \mathbb{N}_*$, we have that \begin{align} \sum_{t=1}^T \theta_t(\epsilon) = \sum_{t=1}^T \bracket{\frac{\epsilon}{\epsilon + 4 \log(2)}}^t < \frac{\epsilon}{4\log(2)}. \end{align}
\end{lemma}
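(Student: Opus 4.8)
The plan is to recognize the sum as a truncated geometric series and bound it by the corresponding infinite series. Write $r \defeq \epsilon/(\epsilon + 4\log 2)$, so that $\theta_t(\epsilon) = r^t$ and the quantity of interest is $\sum_{t=1}^T r^t$. Since $\epsilon > 0$ we have $0 < r < 1$, so the geometric series $\sum_{t=1}^{\infty} r^t$ converges and for every finite $T$,
\begin{eqnarray}
\sum_{t=1}^T r^t & < & \sum_{t=1}^{\infty} r^t \ = \ \frac{r}{1-r},
\end{eqnarray}
the strict inequality holding because the omitted tail $\sum_{t=T+1}^\infty r^t$ is a sum of strictly positive terms.

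The remaining step is a one-line simplification: $1 - r = 1 - \frac{\epsilon}{\epsilon + 4\log 2} = \frac{4\log 2}{\epsilon + 4\log 2}$, hence
\begin{eqnarray}
\frac{r}{1-r} & = & \frac{\epsilon/(\epsilon + 4\log 2)}{4\log 2/(\epsilon + 4\log 2)} \ = \ \frac{\epsilon}{4\log 2}.
\end{eqnarray}
Combining the two displays gives $\sum_{t=1}^T \theta_t(\epsilon) < \epsilon/(4\log 2)$, which is the claim.

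There is really no obstacle here: the only point requiring a word of care is the \emph{strictness} of the bound, which is why I keep the infinite-series comparison explicit rather than invoking the closed form $\frac{r(1-r^T)}{1-r}$ for the partial sum (the latter would work equally well, since $1 - r^T < 1$, but the tail argument is cleaner). Everything else is the standard identity $\sum_{t\ge 1} r^t = r/(1-r)$ for $|r|<1$.
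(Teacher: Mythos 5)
Your proposal is correct and follows essentially the same route as the paper: both identify the sum as a truncated geometric series with ratio $r=\epsilon/(\epsilon+4\log 2)<1$, bound it strictly by the limit $r/(1-r)$ (the paper via the closed-form partial sum $r(1-r^T)/(1-r)$, you via the positive tail — an immaterial difference), and simplify $r/(1-r)=\epsilon/(4\log 2)$.
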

\begin{proof}
Since $(\epsilon / (\epsilon + 4\log(2)) < 1$ for any $\epsilon$ and noting that $\theta_t(\epsilon) = (\epsilon / (\epsilon + 4\log(2)) \theta_{t-1} (\epsilon)$, we can conclude that $\theta_t(\epsilon)$ is a geometric sequence. For any geometric series with ratio $r$, we have that \begin{align} \sum_{t=1}^T r^t &= r\bracket{\frac{1-r^T}{1-r}}\\ &=\frac{r}{1-r} -\frac{r^{T+1}}{1-r}\\ &< \frac{r}{1-r}  \end{align} Indeed, $\frac{r}{1-r}$ is the limit of the geometric series above when $T \to \infty$. In our case, we let $r = (\epsilon / (\epsilon + 4\log(2)))$ to show that \begin{align} \frac{r}{1-r} &= \frac{\frac{\epsilon}{\epsilon + 4\log(2)}}{1 - \frac{\epsilon}{\epsilon + 4\log(2)}} = \frac{\frac{\epsilon}{\epsilon + 4\log(2)}}{\frac{4 \log(2)}{\epsilon + 4\log(2)}} = \frac{\epsilon}{4 \log(2)},\end{align} which concludes the proof.
\end{proof}
\begin{lemma}
\label{privbound}
For any $\epsilon > 0$ and $T \in \mathbb{N}_*$,
let $\theta(\epsilon) =
(\theta_1(\epsilon),\ldots,\theta_T(\epsilon))$ denote the parameters
and $c = (c_1,\ldots,c_t)$ denote the sufficient statistics returned
by Algorithm \ref{mainalg}, then we have
\begin{eqnarray}
-\frac{\epsilon}{2} \leq  \ip{\theta(\epsilon)}{c} - \phi(\theta(\epsilon)) \leq \frac{\epsilon}{2}.\label{eqnorm}
\end{eqnarray}
\end{lemma}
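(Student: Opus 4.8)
The plan is to chain together the uniform boundedness of the classifiers with the geometric-series estimate of Lemma~\ref{theta-convergences}; there is no real divergence-theoretic content here, only two elementary bounds pushed through the definition of $\phi$.

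First I would bound the linear term $\ip{\theta(\epsilon)}{c}$ pointwise. By the standing assumption that every classifier takes values in $[-\log 2,\log 2]$, we have $\card{c_t(x)} \le \log 2$ for all $t$ and all $x\in\mathcal{X}$, and since each $\theta_t(\epsilon) > 0$ the triangle inequality gives $\card{\ip{\theta(\epsilon)}{c}(x)} \le \log 2 \cdot \sum_{t=1}^T \theta_t(\epsilon)$ for every $x$. Invoking Lemma~\ref{theta-convergences}, $\sum_{t=1}^T \theta_t(\epsilon) < \epsilon/(4\log 2)$, hence $\card{\ip{\theta(\epsilon)}{c}(x)} < \epsilon/4$ uniformly in $x$.

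Next I would transfer this to the log-partition function. Since $\exp$ is monotone, $\exp(-\epsilon/4) \le \exp(\ip{\theta(\epsilon)}{c}(x)) \le \exp(\epsilon/4)$ pointwise, and integrating against $Q_0$, which has unit mass, gives $\exp(-\epsilon/4) \le \int_{\mathcal{X}} \exp(\ip{\theta(\epsilon)}{c})\,\mathrm{d}Q_0 \le \exp(\epsilon/4)$; taking logarithms yields $-\epsilon/4 \le \phi(\theta(\epsilon)) \le \epsilon/4$. Finally, subtracting the two estimates shows that $\ip{\theta(\epsilon)}{c}(x) - \phi(\theta(\epsilon))$ lies between $-\epsilon/4-\epsilon/4 = -\epsilon/2$ and $\epsilon/4+\epsilon/4 = \epsilon/2$, which is exactly \eqref{eqnorm}.

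I do not anticipate a genuine obstacle; the only points needing care are that the pointwise bound on $\ip{\theta(\epsilon)}{c}$ must be integrated against $Q_0$ using that $Q_0$ is a probability density (so that the constant bounds survive the integral), and that the resulting inequality holds for \emph{every} $x\in\mathcal{X}$ rather than merely in expectation, since that is what Theorem~\ref{privacy-theorem} will subsequently use, via \eqref{defEXP1}, to place $Q_T$ in $\mathcal{M}_\varepsilon$.
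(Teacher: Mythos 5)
Your proposal is correct and follows essentially the same route as the paper's proof: bound $\card{\ip{\theta(\epsilon)}{c}}$ by $\log 2\cdot\sum_t\theta_t(\epsilon)<\epsilon/4$ via Lemma~\ref{theta-convergences}, deduce $\phi(\theta(\epsilon))\in[-\epsilon/4,\epsilon/4]$ by exponentiating, integrating against the unit-mass $Q_0$, and taking logarithms, then combine the two bounds. Your added remark that the bound must hold pointwise in $x$ (not merely in expectation) for the subsequent use in Theorem~\ref{privacy-theorem} is a sensible precision, but the argument itself is the paper's.
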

\begin{proof}
Since the algorithm returns classifiers such that $c_t(x) \in [-\log2,\log2]$ for all $1 \leq t \leq T$, we have from Lemma \ref{theta-convergences}, \begin{align} \sum_{t=1}^T \theta_t (\epsilon) c_t \leq \log(2) \sum_{t=1}^T \theta_t (\epsilon) < \log(2) \frac{\epsilon}{4 \log(2)} = \frac{\epsilon}{4},  \end{align} and similarly, \begin{align} \sum_{t=1}^T \theta_t (\epsilon) c_t \geq -\log(2) \sum_{t=1}^T \theta_t (\epsilon) > -\log(2) \frac{\epsilon}{4 \log(2)} = -\frac{\epsilon}{4}.  \end{align} Thus we have  \begin{align} \label{ineq1} -\frac{\epsilon}{4} &\leq \ip{\theta(\epsilon)}{c} \leq  \frac{\epsilon}{4}.  \end{align}
By taking exponential, integrand (w.r.t $Q_0$) and logarithm of \ref{ineq1}, we get \begin{align}\log\int_{\mathcal{X}}\exp\bracket{-\frac{\epsilon}{4}}dQ_0 &\leq \log \int_{\mathcal{X}}\exp\bracket{\ip{\theta(\epsilon)}{c}}dQ_0 \leq  \log \int_{\mathcal{X}}\exp\bracket{\frac{\epsilon}{4}}dQ_0\\ -\frac{\epsilon}{4} &\leq \phi(\theta(\epsilon)) \leq \frac{\epsilon}{4}\end{align}
Since $\ip{\theta(\epsilon)}{c} \in [-\epsilon / 4, \epsilon/ 4]$ and $\phi(\theta(\epsilon)) \in [-\epsilon / 4, \epsilon/ 4]$, the proof concludes by considering highest and lowest values.
\end{proof}

The proof of Theorem \ref{privacy-theorem} now follows from taking the $\exp$ of all quantities in \eqref{eqnorm}, which makes appear $Q_T$ in the middle and conditions for membership to $\mathcal{M}_\varepsilon$ in the bounds.

\subsection{Proof of Theorem \ref{kl-upper}}\label{proof_kl-upper}

\newcommand{\kl}{\text{KL}}
\newcommand{\dd}{d}
\newcommand{\bracess}[1]{\left(#1\right)}

We begin by first deriving the KL drop expression. At each iteration, we learn a classifier $c_t$, fix some step size $\theta > 0$ and multiply $Q_{t-1}$ by $\exp(\theta \cdot c_t)$ and renormalize to get a new distribution which we will denote by $Q_t(\theta)$ to make the dependence of $\theta$ explicit.
\begin{lemma}
\label{kl-drop-lem}
For any $\theta > 0$, let $\phi(\theta) = \log \int_{\mathcal{X}} \exp(\theta \cdot c_t) dQ_{t-1}$. The drop in KL is \begin{align} \text{DROP}(\theta) := \kl(P,Q_{t-1}) - \kl(P,Q_t(\theta))  = \theta \cdot \int_{\mathcal{X}} c_t dP - \phi(\theta)\end{align}
\end{lemma}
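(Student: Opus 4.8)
Looking at Lemma \ref{kl-drop-lem}, this is a direct computation. Let me sketch the proof.

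The plan is to expand both KL divergences explicitly using the definition and the update rule $Q_t(\theta) \propto Q_{t-1} \cdot \exp(\theta c_t)$, and watch the terms cancel.

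First, I would write out the update rule precisely: $Q_t(\theta)(x) = \exp(\theta c_t(x) - \phi(\theta)) Q_{t-1}(x)$, where $\phi(\theta) = \log \int_{\mathcal{X}} \exp(\theta c_t) \, dQ_{t-1}$ is exactly the log-normalizer that makes $Q_t(\theta)$ a density. This is the same exponential-family form as in \eqref{defEXP1}, just for a single step.

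Next, I would compute the difference directly:
\begin{align}
\text{DROP}(\theta) &= \kl(P, Q_{t-1}) - \kl(P, Q_t(\theta)) \nonumber\\
&= \int_{\mathcal{X}} \log\bracket{\frac{P}{Q_{t-1}}} dP - \int_{\mathcal{X}} \log\bracket{\frac{P}{Q_t(\theta)}} dP \nonumber\\
&= \int_{\mathcal{X}} \log\bracket{\frac{Q_t(\theta)}{Q_{t-1}}} dP. \nonumber
\end{align}
Now substitute the ratio $Q_t(\theta)/Q_{t-1} = \exp(\theta c_t - \phi(\theta))$, so the log is simply $\theta c_t(x) - \phi(\theta)$. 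Integrating against $P$ and using that $\phi(\theta)$ is a constant (so integrates to itself, as $P$ is a probability density), I get
\begin{align}
\text{DROP}(\theta) = \theta \cdot \int_{\mathcal{X}} c_t \, dP - \phi(\theta), \nonumber
\end{align}
which is the claimed identity.

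There is essentially no obstacle here: the only thing to be slightly careful about is that all densities share the same base measure and support (stated in Section \ref{sec:def}), so the Radon–Nikodym derivatives and the cancellation of the $\log P$ terms are legitimate, and $\phi(\theta)$ is finite because $c_t$ is bounded in $[-\log 2, \log 2]$. The rest is bookkeeping, so I would present it as the short chain of equalities above.
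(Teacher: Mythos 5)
Your proof is correct and follows essentially the same route as the paper: write $Q_t(\theta) = \exp(\theta c_t - \phi(\theta))\,Q_{t-1}$, subtract the two KL divergences so the $\log P$ terms cancel, and integrate $\theta c_t - \phi(\theta)$ against $P$ using $\int dP = 1$. No gaps.
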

\begin{proof}
Note that $Q_t(\theta)$ is indeed a one dimensional exponential family with natural parameter $\theta$, sufficient statistic $c_t$, log-partition function $\phi(\theta)$ and base measure $Q_{t-1}$. We can write out the KL divergence as
\begin{align}
\kl(P,Q_{t-1}) - \kl(P,Q_t(\theta)) &= \int_{\mathcal{X}} \log\bracket{\frac{P}{Q_{t-1}}} dP - \int_{\mathcal{X}} \log\bracket{\frac{P}{\exp(\theta \cdot c_t - \phi(\theta))Q_{t-1}}} dP\\
                                    &= \int_{\mathcal{X}} \log\bracket{\frac{\exp(\theta \cdot c_t - \phi(\theta))Q_{t-1}}{Q_{t-1}}} dP\\
                                    &= \int_{\mathcal{X}} \theta \cdot c_t - \phi(\theta) dP\\
                                    &= \theta \cdot \int_{\mathcal{X}} c_t dP - \phi(\theta)
\end{align}
\end{proof}
It is not hard to see that the drop is indeed a concave function of $\theta$, suggesting that there exists an optimal step size at each iteration. We split our analysis by considering two cases and begin when $\gamma_Q^{t} < 1/3$. Since $\theta > 0$, we can lowerbound the first term of the KL drop using WLA. The trickier part however, is bounding $\phi(\theta)$ which we make use of Hoeffding's lemma.
\begin{lemma}[Hoeffding's Lemma]
\label{hoeffding-lemma}
Let $X$ be a random variable with distribution $Q$, with $a \leq X \leq b$ such that $\E_Q[X] = 0$, then for all $\lambda > 0$, we have \begin{align}\E_Q[\exp(\lambda \cdot X)] \leq \exp\bracket{\frac{\lambda^2 (b-a)^2}{8}}  \end{align}
\end{lemma}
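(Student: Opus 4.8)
The plan is to prove the inequality through the cumulant generating function $\psi(\lambda) \defeq \log \E_Q[\exp(\lambda X)]$, since the claim is exactly $\psi(\lambda) \leq \lambda^2 (b-a)^2/8$ for $\lambda > 0$ (and in fact for all real $\lambda$). Because $X$ is bounded in $[a,b]$, the moment generating function $\E_Q[\exp(\lambda X)]$ is finite for every $\lambda$ and one may differentiate under the integral sign freely, so $\psi$ is smooth on $\mathbb{R}$. First I would record the two base values: $\psi(0) = \log 1 = 0$, and $\psi'(\lambda) = \E_Q[X\exp(\lambda X)] / \E_Q[\exp(\lambda X)]$, whence $\psi'(0) = \E_Q[X] = 0$ by hypothesis.

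The heart of the argument is the second derivative. A direct differentiation gives
\begin{align}
\psi''(\lambda) = \frac{\E_Q[X^2\exp(\lambda X)]}{\E_Q[\exp(\lambda X)]} - \bracket{\frac{\E_Q[X\exp(\lambda X)]}{\E_Q[\exp(\lambda X)]}}^2,
\end{align}
which is precisely the variance of $X$ under the exponentially tilted probability measure $Q_\lambda$ whose density with respect to $Q$ is proportional to $\exp(\lambda X)$. Since $Q_\lambda$ is still supported inside $[a,b]$, I would bound this variance by comparing to the deviation from the interval midpoint: for any constant $c$ one has $\mathrm{Var}_{Q_\lambda}(X) \leq \E_{Q_\lambda}[(X-c)^2]$, and taking $c = (a+b)/2$ forces $(X-c)^2 \leq (b-a)^2/4$ pointwise, so $\psi''(\lambda) \leq (b-a)^2/4$ for every $\lambda$.

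With these three facts in hand I would close by Taylor's theorem with Lagrange remainder applied to $\psi$ on $[0,\lambda]$: there is $\xi \in (0,\lambda)$ with $\psi(\lambda) = \psi(0) + \lambda\psi'(0) + \tfrac{\lambda^2}{2}\psi''(\xi) = \tfrac{\lambda^2}{2}\psi''(\xi) \leq \tfrac{\lambda^2}{2}\cdot\tfrac{(b-a)^2}{4} = \tfrac{\lambda^2(b-a)^2}{8}$, and exponentiating yields the stated bound. (An alternative avoiding calculus uses convexity of $t \mapsto \exp(\lambda t)$ to write $\exp(\lambda X) \leq \tfrac{b-X}{b-a}\exp(\lambda a) + \tfrac{X-a}{b-a}\exp(\lambda b)$, take $\E_Q$ using $\E_Q[X] = 0$, and bound the resulting one-variable function of $\lambda(b-a)$ by the same second-derivative estimate; I would mention this but carry out the CGF version.) The only real subtlety is the justification of differentiating under the integral, which is routine here since $X$ is bounded: dominated convergence applies on a neighbourhood of any $\lambda$ with the constant dominating function $\lvert X\rvert^k \exp((\lvert\lambda\rvert+1)\max(\lvert a\rvert,\lvert b\rvert))$. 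The variance bound itself is elementary, so I do not expect any genuine obstacle.
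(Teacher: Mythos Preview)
Your argument is correct and is the standard cumulant-generating-function proof of Hoeffding's lemma: smoothness of $\psi$ from boundedness of $X$, $\psi(0)=\psi'(0)=0$, the identification of $\psi''(\lambda)$ as the variance under the tilted law $Q_\lambda$, the midpoint bound $\mathrm{Var}_{Q_\lambda}(X)\leq (b-a)^2/4$, and Taylor with Lagrange remainder. Each step is sound and the subtlety you flag (differentiation under the integral) is handled exactly as you say.

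As for comparison with the paper: there is nothing to compare against. The paper merely \emph{states} Hoeffding's lemma as a classical auxiliary result and invokes it in the proof of the subsequent lemma; it does not supply its own proof. Your write-up therefore goes beyond what the paper does, and either the CGF route you carry out or the convexity-of-the-exponential alternative you mention would be an acceptable self-contained justification.
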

\begin{lemma}
\label{hoeffding-wla-bound}
For any classifier $c_t$ satisfying Assumption \ref{def-wla} (WLA), we have \begin{align} \E_{Q_{t-1}}[\exp(\theta_t (\epsilon) \cdot c_t)]  \leq \exp\bracket{\theta_t^2 (\epsilon) \cdot \frac{(c_t^{*})^2}{2}  - \theta_t (\epsilon) \cdot \gamma_Q^t \cdot c_t^{*}} \end{align}
\end{lemma}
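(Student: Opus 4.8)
The plan is to combine Hoeffding's Lemma (Lemma \ref{hoeffding-lemma}) with the $Q$-side guarantee of the WLA (Definition \ref{def-wla}). First I would recenter the classifier: set $\mu_t \defeq \E_{Q_{t-1}}[c_t]$ and $X \defeq c_t - \mu_t$, so that $\E_{Q_{t-1}}[X] = 0$. Since $|c_t(x)| \leq c_t^*$ almost surely, $X$ takes values in an interval of length exactly $2 c_t^*$ (namely $[-c_t^* - \mu_t,\, c_t^* - \mu_t]$), independently of $\mu_t$. Applying Hoeffding's Lemma with $\lambda = \theta_t(\epsilon) > 0$ and $(b-a)^2 = 4(c_t^*)^2$ gives $\E_{Q_{t-1}}[\exp(\theta_t(\epsilon) X)] \leq \exp(\theta_t^2(\epsilon)(c_t^*)^2/2)$.

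Next I would pull the mean back out: $\E_{Q_{t-1}}[\exp(\theta_t(\epsilon) c_t)] = \exp(\theta_t(\epsilon)\mu_t)\cdot \E_{Q_{t-1}}[\exp(\theta_t(\epsilon) X)] \leq \exp(\theta_t(\epsilon)\mu_t + \theta_t^2(\epsilon)(c_t^*)^2/2)$. It then remains to bound $\mu_t$. The classifier $c_t$ is produced by $\textsc{wl}$ with second argument $Q_{t-1}$, so the WLA yields $\tfrac{1}{c_t^*}\E_{Q_{t-1}}[-c_t] > \gamma_Q^t$, i.e.\ $\mu_t < -\gamma_Q^t c_t^*$. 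Because $\theta_t(\epsilon) > 0$, this gives $\theta_t(\epsilon)\mu_t < -\theta_t(\epsilon)\gamma_Q^t c_t^*$; substituting into the previous display yields exactly the claimed bound.

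The argument is essentially routine; the only points needing a little care are (i) observing that the relevant Hoeffding range is $2 c_t^*$, not $c_t^*$ --- the essential boundedness $|c_t| \leq c_t^*$ translates into a width-$2c_t^*$ interval for the centered variable, and it is this factor of two that produces the $(c_t^*)^2/2$ (rather than $(c_t^*)^2/8$) in the exponent --- and (ii) invoking the correct, $Q$-side inequality from the WLA with the matching second argument $Q_{t-1}$, together with the strict positivity of $\theta_t(\epsilon)$ so that multiplying the mean inequality through by $\theta_t(\epsilon)$ preserves its direction. I do not anticipate any genuine obstacle.
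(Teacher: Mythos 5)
Your proof is correct and takes essentially the same route as the paper's: center $c_t$ under $Q_{t-1}$, apply Hoeffding's Lemma with $\lambda = \theta_t(\epsilon) > 0$ and a range of width $2c_t^{*}$ to get the $\theta_t^2(\epsilon)(c_t^{*})^2/2$ term, then control the mean term via the $Q$-side WLA inequality $\E_{Q_{t-1}}[-c_t] > \gamma_Q^t \, c_t^{*}$. Your observation that the centered variable lies in $[-c_t^{*}-\mu_t,\, c_t^{*}-\mu_t]$ (same width $2c_t^{*}$) is in fact slightly more careful than the paper's choice $a=-c_t^{*}$, $b=c_t^{*}$, but the substance of the argument is identical.
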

\begin{proof}
Let $X = c_t - \cdot \E_{Q_{t-1}}[c_t]$, $b = c_t^{*}$, $a = -c_t^{*}$ and $\lambda = \theta_t(\epsilon)$ and noticing that \begin{align} \E_{Q_{t-1}}[\lambda \cdot X] = \E_{Q_{t-1}}[c_t - \E_{Q_{t-1}}[c_t]] = \E_{Q_{t-1}}[c_t] - \E_{Q_{t-1}}[c_t] = 0,\end{align} allows us to apply Lemma \ref{hoeffding-lemma}. By first realizing that
\begin{align}
\exp(\lambda \cdot X) = \exp(\theta_t (\epsilon) \cdot c_t) \cdot \exp(\theta_t (\epsilon) \cdot \E_{Q_{t-1}}[-c_t]),
\end{align}
We get that \begin{align} \E_{Q_{t-1}}[\exp(\theta_t (\epsilon) \cdot c_t)] \cdot \exp\bracket{\theta_t (\epsilon) \cdot \E_{Q_{t-1}}[-c_t]} \leq \exp\bracket{\theta_t^2 (\epsilon) \cdot \frac{(c_t^{*})^2}{2} }. \end{align} Re-arranging and using the WLA inequality yields
\begin{align} \E_{Q_{t-1}}[\exp(\theta_t (\epsilon) \cdot c_t)] &\leq  \exp\bracket{\theta_t^2 (\epsilon) \cdot \frac{(c_t^{*})^2}{2}  - \theta_t (\epsilon) \cdot \E_{Q_{t-1}}[-c_t] }\\ &\leq \exp\bracket{\theta_t^2 (\epsilon) \cdot \frac{(c_t^{*})^2}{2}  - \theta_t (\epsilon) \cdot \gamma_Q^t \cdot c_t^{*} } \end{align}
\end{proof}
Applying Lemma \ref{hoeffding-wla-bound} and Lemma \ref{kl-drop-lem} (writing $Q_t = Q_t(\epsilon)$ ) together gives us
\begin{align}
\kl(P,Q_t) &= \kl(P,Q_{t-1}) - \text{DROP}(\theta_t(\epsilon))\\
           &= \kl(P,Q_{t-1}) - \theta_t(\epsilon) \cdot \int_{\mathcal{X}}c_t dP + \log \E_{Q_{t-1}}[\exp(\theta_t (\epsilon) \cdot c_t)]\\
           &\leq \kl(P,Q_{t-1}) - c_t^{*} \cdot \theta_t(\epsilon) \cdot \bracket{\frac{1}{c_t^{*}} \int_{\mathcal{X}} c_t dP } + \bracket{\theta_t^2 (\epsilon) \cdot \frac{(c_t^{*})^2}{2}  - \theta_t (\epsilon) \cdot \gamma_Q^t \cdot c_t^{*}}\\
           &\leq \kl(P,Q_{t-1}) - c_t^{*} \theta_t(\epsilon) \bracket{\gamma_P^{t} + \gamma_Q^{t} - \frac{c_t^{*} \cdot \theta_t(\epsilon)}{2}}
\end{align}
Now we move to the case of $\gamma_Q^{t} \geq 1/3$.
\begin{lemma}
\label{expec-exp-large-gamma-bound}
For any classifier $c_t$ returned by Algorithm \ref{mainalg}, we have that \begin{align} \E_{Q_{t-1}}[\exp(c_t)] \leq \exp\bracket{-\Gamma (\gamma_Q^{t})} \end{align} where $\Gamma(z) =  \log(4/(5 - 3z))$.
\end{lemma}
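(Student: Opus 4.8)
The plan is to unfold the target as $\exp(-\Gamma(\gamma_Q^t)) = (5-3\gamma_Q^t)/4$ and then bound $\E_{Q_{t-1}}[\exp(c_t)]$ by a \emph{linear} functional of $\E_{Q_{t-1}}[c_t]$, since the latter is exactly the quantity the WLA controls. The only structural facts I would use are that $c_t$ is bounded, $c_t(x)\in[-\log 2,\log 2]$ almost surely (so $\exp(c_t)\in[1/2,2]$), and that $\exp$ is convex, hence lies below its secant on any bounded interval.

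First I would record the secant inequality: for every $x\in[-\log 2,\log 2]$, convexity of $\exp$ gives
\[
\exp(x)\ \le\ \frac{5}{4} + \frac{3}{4\log 2}\,x ,
\]
this being the line through $(-\log 2,\tfrac12)$ and $(\log 2,2)$, as one checks by evaluating at the two endpoints. Integrating this pointwise bound against $Q_{t-1}$ yields $\E_{Q_{t-1}}[\exp(c_t)] \le \tfrac54 + \tfrac{3}{4\log 2}\,\E_{Q_{t-1}}[c_t]$. Next I would invoke the weak learning assumption at iteration $t$ (Definition~\ref{def-wla}): it gives $\E_{Q_{t-1}}[-c_t] > \gamma_Q^t\,c_t^{*}$, i.e. $\E_{Q_{t-1}}[c_t] < -\gamma_Q^t\,c_t^{*}$, which with $c_t^{*}=\log 2$ becomes $\E_{Q_{t-1}}[c_t] < -\gamma_Q^t\log 2$. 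Substituting, $\E_{Q_{t-1}}[\exp(c_t)] < \tfrac54 - \tfrac{3}{4\log 2}\,\gamma_Q^t\log 2 = \tfrac{5-3\gamma_Q^t}{4} = \exp(-\Gamma(\gamma_Q^t))$, which is the claim (the strict WLA inequality just carries through, and may be relaxed to $\le$).

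The step I expect to be the real obstacle is the bookkeeping around $c_t^{*}$. If, instead of the fixed interval $[-\log 2,\log 2]$, one uses the tighter essential range $[-c_t^{*},c_t^{*}]$, the secant step produces $\E_{Q_{t-1}}[\exp(c_t)] \le \tfrac{1-\gamma_Q^t}{2}\,e^{c_t^{*}} + \tfrac{1+\gamma_Q^t}{2}\,e^{-c_t^{*}}$, and the claim then reduces to verifying by elementary one-variable calculus that this expression stays below $(5-3\gamma_Q^t)/4$ on the relevant range of $c_t^{*}$; note it \emph{saturates} exactly at $c_t^{*}=\log 2$, which is why that is the operative value and why the statement carries no $c_t^{*}$. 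Everything else — unfolding $\Gamma$, the secant inequality, integration, inserting the WLA bound — is routine; the only genuine content is choosing the correct linearization and the boundary computation that makes $\Gamma(\cdot)=\log(4/(5-3\cdot))$ appear.
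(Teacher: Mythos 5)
Your main derivation is exactly the paper's proof: the chord of $\exp$ through $(-\log 2,\tfrac12)$ and $(\log 2,2)$, i.e.\ $y=\tfrac54+\tfrac{3}{4\log 2}x$, dominates $\exp$ on $[-\log 2,\log 2]$ by convexity; integrating against $Q_{t-1}$ gives $\E_{Q_{t-1}}[\exp(c_t)]\le \tfrac54+\tfrac{3}{4\log 2}\,\E_{Q_{t-1}}[c_t]$, and inserting the weak-learning edge normalized by $\log 2$ gives $\tfrac{5-3\gamma_Q^{t}}{4}=\exp(-\Gamma(\gamma_Q^{t}))$. The paper performs the identical steps, and it too silently divides the edge by $\log 2$ rather than by $c_t^{*}$, so on the main line you and the paper are on the same footing.

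The problem is the claim in your last paragraph, namely that the sharper secant bound $\tfrac{1-\gamma_Q^{t}}{2}e^{c_t^{*}}+\tfrac{1+\gamma_Q^{t}}{2}e^{-c_t^{*}}$ stays below $(5-3\gamma_Q^{t})/4$ over the relevant range of $c_t^{*}$ with saturation at $c_t^{*}=\log 2$: this is false, so the proposed repair of the $c_t^{*}$ bookkeeping does not go through. Set $g(s)=\tfrac{1-\gamma}{2}e^{s}+\tfrac{1+\gamma}{2}e^{-s}$ with $\gamma=\gamma_Q^{t}$. Indeed $g(\log 2)=\tfrac{5-3\gamma}{4}$, but $g(0)=1$ and $g$ is decreasing near $0$, so for every $\gamma>1/3$ (precisely the high boosting regime where the lemma is used) one has $g(s)>\tfrac{5-3\gamma}{4}$ for all small $s$; worse, $g'(\log 2)=\tfrac{3-5\gamma}{4}<0$ once $\gamma>3/5$, so the target is exceeded even for $c_t^{*}$ just below $\log 2$. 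A concrete counterexample to the lemma as literally stated: $c_t\equiv-\eta$ with $\eta$ small satisfies the WLA at any $\gamma_Q^{t}<1$ (its normalized edge equals $1$), yet $\E_{Q_{t-1}}[\exp(c_t)]=e^{-\eta}\approx 1>\tfrac{5-3\gamma_Q^{t}}{4}$. So the statement genuinely requires $c_t^{*}=\log 2$ (equivalently, an edge normalized by $\log 2$ instead of $c_t^{*}$); your proof assumes this explicitly, the paper's proof assumes it implicitly, but the ``elementary one-variable calculus'' you invoke cannot remove that assumption.
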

\begin{proof}
Consider the straight line between $(-\log 2, 1/2)$ and $(\log 2, 2)$ given by $y = 5/4 + (3/(4 \cdot \log 2)) x$, which by convexity is greater then $y = \exp(x)$ on the interval $[-\log 2, \log 2]$. To this end, we define the function \begin{align} f(x) = \begin{cases} \frac{5}{4} +  \frac{3}{4 \cdot \log 2} \cdot x, &\text{     if   } x \in [-\log 2, \log 2]\\ 0, &\text{  otherwise}    \end{cases} \end{align} Since $c_t (x) \in [-\log 2, \log 2]$ for all $x \in \mathcal{X}$, we have that $f(c_t (x)) \geq \exp(c_t (x))$ for all $x \in \mathcal{X}$. Taking $\E_{Q_{t-1}}[\cdot]$ over both sides and using linearity of expectation gives \begin{align} \E_{Q_{t-1}}[\exp(c_t(x))] &\leq \E_{Q_{t-1}}[f(c_t(x))] \\ &= \frac{5}{4} + \frac{3}{4\log 2} \bracket{\ \E_{Q_{t-1}} [c_t (x)] } \\ &= \frac{5}{4} - \frac{3}{4} \bracket{\frac{1}{\log2} \E_{Q_{t-1}} [-c_t (x)] } \\ \label{exp-upperbound} &< \frac{5}{4} - \frac{3}{4} \gamma_Q^{t} \\
&= \exp\bracket{-\log\bracket{\frac{5 - 3\gamma_Q^{t}}{4}}^{-1}}  \\ &= \exp\bracket{-\log\bracket{\frac{4}{5 - 3\gamma_Q^{t}}}} \\ &= \exp\bracket{-\Gamma(\gamma_Q^{t})}, \end{align}
as claimed.
\end{proof}
Now we use Lemma \ref{kl-drop-lem} and Jensen's inequality since $\theta_t(\epsilon) < 1$ so that
\begin{align}
\text{KL}(P,Q_t) &= \text{KL}(P,Q_{t-1}) - \text{DROP}(\theta)\\
                 &= \kl(P,Q_{t-1}) - \theta_t(\epsilon) \cdot \int_{\mathcal{X}}c_t dP + \log\E_{Q_{t-1}}[\exp(\theta_t \cdot c_t)]\\
                 &\leq \kl(P,Q_{t-1}) - \theta_t(\epsilon) \cdot \E_P[c_t] + \theta_t \cdot \log\E_{Q_{t-1}}[\exp(c_t)]\\
                 &\leq \text{KL}(P,Q_{t-1}) - \theta_t(\epsilon) \bracket{\E_P[c_t] - \log\E_{Q_{t-1}}[\exp(c_t)]}\\
                 &=  \text{KL}(P,Q_{t-1}) - \theta_t(\epsilon) \bracket{  c_t^{*} \bracket{ \frac{1}{c_t^{*}} \E_P[c_t]} - \log \E_{Q_{t-1}}[\exp(c_t)]}\\
                 &< \text{KL}(P,Q_{t-1}) - \theta_t(\epsilon) \bracket{  c_t^{*} \gamma_P^{t} - \log \bracket{\exp\bracket{ -\Gamma(\gamma_Q^{t}) }}  }\\
                 &= \text{KL}(P,Q_{t-1}) - \theta_t(\epsilon) \bracket{  c_t^{*} \gamma_P^{t} + \Gamma(\gamma_Q^{t})   }.
\end{align}

\subsection{Proof of Theorem \ref{kl-upper-lower}}\label{proof_kl-upper-lower}

We first note that for any $Q \in \mathcal{M}^{\exp}_\varepsilon$, 
\begin{align} \text{KL}(P,Q) &= \int_{\mathcal{X}} \log\bracket{\frac{P}{Q }}dP\\ &=  \int_{\mathcal{X}} \log\bracket{\frac{P}{Q_0 \exp\bracket{\ip{\theta(\epsilon)}{c} - \phi(\theta(\epsilon)) }}}dP\\  &=\int_{\mathcal{X}} \log\bracket{\frac{P}{Q_0}} dP - \int_{\mathcal{X}} \bracket{\ip{\theta(\epsilon)}{c} - \phi(\theta(\epsilon))} dP\\ &\geq \text{KL}(P,Q_0) - \int_{\mathcal{X}} \frac{\epsilon}{2} dP \\ &\geq \text{KL}(P,Q_0) - \frac{\epsilon}{2}, \end{align}
which completes the proof of the upperbound To show \eqref{bsupKL},  we have that \begin{align} \text{KL}(P,Q_t) &\leq \text{KL}(P,Q_{T-1}) -  \theta_t(\epsilon) \cdot \Lambda_t \\ &\leq \text{KL}(P,Q_0) - \sum_{t=1}^{T-1} \theta_t(\epsilon) \cdot \Lambda_t \\ &=  \text{KL}(P,Q_0) - \sum_{t=1}^{T-1} \theta_t(\epsilon) \cdot \bracket{ c_t^{*} \gamma_P^{t} + \Gamma(\gamma_Q^{t})} \\ &\leq  \text{KL}(P,Q_0) - \sum_{t=1}^{T-1} \theta_t(\epsilon) \cdot \bracket{ \log 2 \cdot  \gamma_P + \Gamma(\gamma_Q)} \\ &\leq \text{KL}(P,Q_0) - \bracket{ \log 2 \cdot  \gamma_P + \log 2 \cdot \gamma_Q} \cdot  \sum_{t=1}^{T-1} \theta_t(\epsilon) \\ &\leq \text{KL}(P,Q_0) - \bracket{ \log 2 \cdot  \gamma_P + \log 2 \cdot \gamma_Q} \cdot  \sum_{t=1}^{T-1} \theta_t(\epsilon) \\ &= \text{KL}(P,Q_0) - \log 2 \cdot \bracket{  \gamma_P +  \gamma_Q} \cdot \theta_1 (\epsilon) \cdot  \bracket{\frac{1 - \theta_t(\epsilon)}{1 - \theta_1(\epsilon)}} \\ &=   \text{KL}(P,Q_0) - \epsilon \cdot \bracket{ \frac{ \gamma_P +  \gamma_Q}{4}  } \cdot    \bracket{1 - \theta_t(\epsilon)}, \end{align} where we used the fact that $\Gamma(x) \geq \log2 \cdot x$ and explicit geometric summation expression. 

\subsection{Proof of Theorems \ref{first-mode-capture}}\label{proof_mode-capture}

We start by a general Lemma.
\begin{lemma}
\label{first-region-bound}
For any region of the support $B$, we have that \begin{align} \int_{B} dQ_t \geq \int_B dP - \int_{B} \log\bracket{\frac{P}{Q_t}}dP\end{align}
\end{lemma}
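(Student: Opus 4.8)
The plan is to reduce the claim to the elementary inequality $\log u \le u - 1$, valid for all $u > 0$. Observe first that the statement is entirely general: it invokes neither the structure of \pkde\ nor membership of $Q_t$ in $\mathcal{M}_\varepsilon$, only that $P$ and $Q_t$ are densities with respect to the common base measure. In fact it is a ``localized'' form of Gibbs' inequality, and specializing to $B = \mathcal{X}$ recovers $\text{KL}(P,Q_t) \ge 0$.

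Concretely, I would argue as follows. By the standing convention that all densities share the support $\mathcal{X}$, the ratio $Q_t/P$ is a well-defined positive function $P$-almost everywhere on $B$, so I may apply $\log u \le u - 1$ pointwise with $u = Q_t(x)/P(x)$; multiplying by $-1$ and rearranging yields
\begin{align}
\log\bracket{\frac{P(x)}{Q_t(x)}} \;\ge\; 1 - \frac{Q_t(x)}{P(x)}, \qquad x \in B .
\end{align}
Integrating both sides over $B$ against $dP$, the right-hand side equals $\int_B dP - \int_B dQ_t$ --- here using $dP = P\,dx$ with respect to the base measure, so that $\int_B (Q_t/P)\,dP = \int_B dQ_t$ --- while the left-hand side is $\int_B \log(P/Q_t)\,dP$. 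Rearranging the resulting inequality gives exactly $\int_B dQ_t \ge \int_B dP - \int_B \log(P/Q_t)\,dP$, which is the Lemma.

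I do not anticipate any real obstacle: the only subtlety worth a remark is the behaviour of the integrand where $P$ or $Q_t$ vanishes, but the common-support assumption excludes this, and in any case a subregion of $B$ on which $P = 0$ contributes nothing since every integral here is taken against $P$. With the Lemma available, Theorem~\ref{first-mode-capture} should then follow by controlling the restricted divergence on its right-hand side via the per-iteration KL drop of Theorem~\ref{kl-upper} in the high boosting regime, combined with the geometric-sum estimate for $\sum_t \theta_t(\epsilon)$ already established in the proof of Theorem~\ref{privacy-theorem}.
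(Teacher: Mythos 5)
Your proof is correct and is essentially the paper's own argument: both rest on the single elementary inequality $\log u \leq u-1$ (the paper states it as $1-x \leq \log(1/x)$), applied pointwise to the density ratio and integrated against $dP$ over $B$, followed by a rearrangement. No further comment is needed.
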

\begin{proof}
By first noting that for any region $B$, \begin{align} \int_B (dP - dQ_t) = \int_B \bracket{1 - \frac{dQ_t}{dP}}dP \end{align} we then use the inequality $1 - x \leq \log(1/x)$ to get \begin{align} \int_B (dP - dQ_t) = \int_B \bracket{1 - \frac{dP}{dQ_t}}dP \leq \int_B \log \bracket{\frac{dP}{dQ_t}}dP  = \int_B \log \bracket{\frac{P}{Q_t}}dP \end{align}
Re-arranging the above inequality gives us the bound.
\end{proof}
Lemma \ref{first-region-bound} allows us to understand the
relationship between two distributions $P$ and $Q_t$ in terms regions
they capture. The general goal is to show that for a given region $B$
(which includes the highly dense mode regions), the amount of mass
captured by the model $\int_B dQ_t$, is lower bounded by the target
mass $\int_B dP$, and some small quantity. The inequality in Lemma
\ref{first-region-bound} comments on this precisely with the small
difference being a term that looks familiar to the KL-divergence -
rather one that is bound to the specific region $B$. Though, this term
can be understood to be small since by Theorem \ref{kl-upper}, we know
that the global KL decreases, we give further refinements to show the
importance of privacy parameters $\epsilon$. We show that the term
$\int_B \log(P/Q_t) dP$ can be decomposed in different ways, leading
to our two Theorems to prove.
\begin{lemma}
\label{ball-kl-bound}
\begin{align}
\int_B \log\bracket{\frac{P}{Q_t}}dP \leq \int_B \log\bracket{\frac{P}{Q_0}}dP - \Delta + \frac{\epsilon}{2}\bracket{1 - \int_B dP}.
\end{align}
where $\Delta = KL(P,Q_0) - KL(P, Q_t)$
\end{lemma}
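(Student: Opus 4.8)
The plan is to reduce everything to the explicit exponential-family representation of $Q_t$ recorded in \eqref{defEXP1} together with the pointwise two-sided bound of Lemma \ref{privbound}. Set $g := \ip{\theta(\epsilon)}{c} - \phi(\theta(\epsilon))$, so that $Q_t = Q_0 \cdot \exp(g)$ and hence, pointwise on $\mathcal{X}$,
\[
\log\!\left(\frac{P}{Q_t}\right) = \log\!\left(\frac{P}{Q_0}\right) - g .
\]
Integrating this identity against $P$ over $B$ gives $\int_B \log(P/Q_t)\,dP = \int_B \log(P/Q_0)\,dP - \int_B g\,dP$, so the whole lemma reduces to the single inequality $\int_B g\,dP \geq \Delta - \tfrac{\epsilon}{2}\bracket{1 - \int_B dP}$.

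First I would integrate the same identity over the entire space to identify the global KL drop with an integral of $g$: $\int_{\mathcal{X}} g\,dP = \text{KL}(P,Q_0) - \text{KL}(P,Q_t) = \Delta$. Then I split $\mathcal{X}$ into $B$ and its complement, obtaining $\int_B g\,dP = \Delta - \int_{\mathcal{X}\setminus B} g\,dP$. The only analytic input needed is the upper half of Lemma \ref{privbound}, namely $g \leq \epsilon/2$ everywhere (this is exactly the membership $Q_T\in\mathcal{M}_\varepsilon$ of Theorem \ref{privacy-theorem} before exponentiation); applying it on $\mathcal{X}\setminus B$ yields $\int_{\mathcal{X}\setminus B} g\,dP \leq \tfrac{\epsilon}{2}\int_{\mathcal{X}\setminus B} dP = \tfrac{\epsilon}{2}\bracket{1 - \int_B dP}$. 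Substituting this into the previous display gives $\int_B g\,dP \geq \Delta - \tfrac{\epsilon}{2}\bracket{1 - \int_B dP}$, and plugging back into the first paragraph's identity proves the claim.

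I do not expect a genuine obstacle; the one point requiring care is the direction of the estimate. We must apply the upper bound $g\le\epsilon/2$ on the complement $\mathcal{X}\setminus B$ (not its matching lower bound), since the goal is to show that the quantity $\int_B g\,dP$ subtracted from $\int_B\log(P/Q_0)\,dP$ is \emph{large}, i.e.\ that not too much of the ``budget'' $\Delta$ leaks outside $B$. It is worth noting that this lemma by itself uses nothing about the boosting regime; the high-boosting hypothesis of Theorem \ref{first-mode-capture} only enters downstream, when $\Delta$ is lower-bounded via the per-iteration KL drop of Theorem \ref{kl-upper} (exactly as in the proof of Theorem \ref{kl-upper-lower}), after which Lemma \ref{first-region-bound} converts the resulting control on $\int_B\log(P/Q_t)\,dP$ into the stated mode-capture bound.
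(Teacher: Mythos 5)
Your proposal is correct and is essentially the paper's own argument, merely reorganized: both rest on the exponential-family identity $Q_t = Q_0\exp\bracket{\ip{\theta(\epsilon)}{c}-\phi(\theta(\epsilon))}$, the split of $\mathcal{X}$ into $B$ and $B^c$, and the upper half of Lemma \ref{privbound} applied on $B^c$ (the paper routes the bookkeeping through $\text{KL}(P,Q_t)=\text{KL}(P,Q_0)-\Delta$ rather than through $\int_{\mathcal{X}} g\,dP=\Delta$, which is the same computation). Your closing remark is also accurate: the lemma itself needs no boosting hypothesis, and your clean handling of the integrand even sidesteps a typo in the paper's displayed derivation, where an $\exp$ is spuriously left inside the integral after the logarithm is expanded.
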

\begin{proof}
We decompose the space $\mathcal{X}$ into $B$ and the complement $B^c$ to get \begin{align} \int_{B} \log\bracket{\frac{P}{Q_t}}dP &= \int_{\mathcal{X}} \log\bracket{\frac{P}{Q_t}}dP - \int_{B^c} \log\bracket{\frac{P}{Q_t}}dP\\
&= \text{KL}(P,Q_t) - \int_{B^c} \log\bracket{\frac{P}{Q_t}}dP\\
&\leq \text{KL}(P,Q_0) - \Delta - \int_{B^c}
\log\bracket{\frac{P}{Q_t}}dP, \end{align} where we used Theorem
\ref{kl-upper}, and letting $\theta = \theta(\epsilon)$ for brevity, we also have \begin{align} \int_{B^c} \log\bracket{\frac{P}{Q_t}}dP &=
  \int_{B^c} \log\bracket{\frac{P}{Q_0 \exp\bracket{\ip{\theta}{c} -
        \phi(\theta)}}}dP \\ &= \int_{B^c} \log\bracket{\frac{P}{Q_0}}
  dP -  \int_{B^c} \exp\bracket{\ip{\theta}{c} - \phi(\theta)}dP \\
  &\geq \int_{B^c} \log\bracket{\frac{P}{Q_0}} dP -  \int_{B^c}
  \frac{\epsilon}{2}dP \\ &= \int_{B^c} \log\bracket{\frac{P}{Q_0}} dP
  -  \frac{\epsilon}{2} \bracket{1 - \int_B dP} \end{align}
Combining these inequalities together gives us: \begin{align}  \int_{B} \log\bracket{\frac{P}{Q_t}}dP &\leq \text{KL}(P,Q_0) - \Delta - \bracket{\int_{B^c} \log\bracket{\frac{P}{Q_0}} dP -  \frac{\epsilon}{2} \bracket{1 - \int_B dP}} \\ &= \int_{\mathcal{X}} \log\bracket{\frac{P}{Q_0}} dP - \int_{B^c} \log\bracket{\frac{P}{Q_0}} dP -  \Delta + \frac{\epsilon}{2} \bracket{1 - \int_B dP} \\ &= \int_B \log\bracket{\frac{P}{Q_0}}dP - \Delta + \frac{\epsilon}{2}\bracket{1 - \int_B dP}\end{align}
\end{proof}
We are now in a position to prove Theorem \ref{first-mode-capture}.
Using Lemma \ref{ball-kl-bound} into the inequality in Lemma \ref{first-region-bound} yields
\begin{align} \int_B dQ_t &\geq \int_B dP - \bracket{\int_B \log\bracket{\frac{P}{Q_0}}dP - \Delta + \frac{\epsilon}{2}\bracket{1 - \int_B dP}} \\ &= \bracket{1 + \frac{\epsilon}{2}}\int_B dP  - \frac{\epsilon}{2} - \int_B \log\bracket{\frac{P}{Q_0}} + \Delta. \end{align}
Reorganising and using the Theorem's notations, we get
\begin{eqnarray}
\textsc{m}(B,Q) & \geq & \textsc{m}(B,P) - KL(P, Q_0; B) + \frac{\epsilon}{2}\cdot J(P,Q;B,\varepsilon),\label{eqfmc22}
\end{eqnarray}
where we recall that $J(P,Q;B,\varepsilon) \defeq \textsc{m}(B,P) + \frac{2\Delta(Q)}{\epsilon} - 1$. Theorem \ref{kl-upper-lower} says that we have in the high boosting regime $2\Delta(Q_T)/\varepsilon \geq (\gamma_P + \gamma_Q)/2 - \theta_T(\epsilon) \cdot (\gamma_P + \gamma_Q)/2$. Letting $\overline{\gamma}\defeq (\gamma_P + \gamma_Q)/2$ and $K \defeq 4\log 2$, we have from \pkde~in the high boosting regime:
\begin{eqnarray}
\frac{2\Delta(Q)}{\varepsilon} & \geq & \overline{\gamma} \cdot \left(1 - \left(\frac{1}{1+\frac{K}{\varepsilon}}\right)^T\right) \nonumber\\
& \geq & \overline{\gamma} \cdot \left(1 - \frac{1}{1+\frac{TK}{\varepsilon}}\right)  \nonumber\\
& & = \overline{\gamma} \cdot \frac{TK}{TK+\varepsilon}.
\end{eqnarray}
To have $J(P,Q;B,\varepsilon) \geq - (2/\varepsilon)\cdot\alpha \textsc{m}(B,P)$, it is thus sufficient that 
\begin{eqnarray}
\textsc{m}(B,P) & \geq & \frac{1}{1+\frac{2\alpha}{\varepsilon}} \cdot \left(1 - \overline{\gamma} \cdot \frac{TK}{TK+\varepsilon}\right) \nonumber\\
& & = \varepsilon \cdot \frac{\varepsilon + (1-\overline{\gamma})TK}{(\varepsilon + 2\alpha)(\varepsilon + TK)}.
\end{eqnarray}
In this case, we check that we have from \eqref{eqfmc22}
\begin{eqnarray}
\textsc{m}(B,Q) & \geq & (1 - \alpha)\textsc{m}(B,P) - KL(P, Q_0; B) ,
\end{eqnarray}
as claimed.

\subsection{Additional formal results}\label{proof_nonTrivial} 

One might ask what such a strong model of privacy as integral
privacy allows
to keep from the accuracy standpoint in general. Perhaps paradoxically
at first sight, it is
not hard to show that integral privacy can bring approximation
guarantees on learning: \textit{if} we learn $Q_\varepsilon$ within an
$\varepsilon$-mollifier $\mathcal{M}$ (hence, we get 
$\varepsilon$-integral privacy for sampling from $Q_\varepsilon$), \textit{then}
each
time \textit{some} $Q_\varepsilon$ in $\mathcal{M}$ accurately fits $P$, we are guaranteed
that the one \textit{we learn} also accurately fits $P$ --- albeit
eventually more moderately ---. We let $Q_{\epsilon}(;.)$ denote the density
learned, where $.$ is the dataset argument.
\begin{lemma}\label{nonTrivial}
Suppose $\exists$ $\varepsilon$-mollifier $\mathcal{M}$
s.t. $Q_{\epsilon} \in \mathcal{M}$, then $(\exists P, D', \delta : \text{KL}(P,Q_{\epsilon}(;D')) \leq \delta)
  \Rightarrow (\forall D, \text{KL}(P,Q_{\epsilon}(;D)) \leq \delta + \epsilon)$.
\end{lemma}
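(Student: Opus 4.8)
The plan is to exploit the mollifier inequality \eqref{ipriv} directly, turning it into a pointwise comparison of log-densities and then integrating against $P$. First I would fix an arbitrary dataset $D$ and the dataset $D'$ supplied by the hypothesis. Since $Q_{\epsilon}(\cdot;D) \in \mathcal{M}$ and $Q_{\epsilon}(\cdot;D') \in \mathcal{M}$ with $\mathcal{M}$ an $\varepsilon$-mollifier, Definition \ref{defMOL} applied with $Q = Q_{\epsilon}(\cdot;D')$ and $Q' = Q_{\epsilon}(\cdot;D)$ gives, for every $x \in \mathcal{X}$,
\begin{eqnarray}
Q_{\epsilon}(x;D') & \leq & \exp(\varepsilon) \cdot Q_{\epsilon}(x;D), \qquad \text{equivalently} \qquad \frac{1}{Q_{\epsilon}(x;D)} \leq \frac{\exp(\varepsilon)}{Q_{\epsilon}(x;D')}.\nonumber
\end{eqnarray}

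Next I would multiply by $P(x)$ and take logarithms, which yields the pointwise bound $\log\bracket{P(x)/Q_{\epsilon}(x;D)} \leq \log\bracket{P(x)/Q_{\epsilon}(x;D')} + \varepsilon$ for all $x$ (the base-measure convention from Section \ref{sec:def} lets us treat these as ordinary density ratios). Integrating both sides with respect to $P$ and using the definition $\text{KL}(P,Q) = \int_{\mathcal{X}} \log(P/Q)\,dP$ together with $\int_{\mathcal{X}} dP = 1$, we obtain
\begin{eqnarray}
\text{KL}(P,Q_{\epsilon}(\cdot;D)) & \leq & \text{KL}(P,Q_{\epsilon}(\cdot;D')) + \varepsilon \;\leq\; \delta + \varepsilon,\nonumber
\end{eqnarray}
where the last step invokes the hypothesis $\text{KL}(P,Q_{\epsilon}(\cdot;D')) \leq \delta$. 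Since $D$ was arbitrary, this establishes the claim.

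I do not expect a genuine obstacle here: the argument is a one-line consequence of the mollifier property, so the only points requiring care are (i) applying \eqref{ipriv} in the correct direction (it is the lower bound $Q_{\epsilon}(x;D) \geq \exp(-\varepsilon)Q_{\epsilon}(x;D')$ that controls the reciprocal, hence the $+\varepsilon$ slack rather than $-\varepsilon$), and (ii) noting explicitly that nothing in the argument uses $D \approx D'$ or any size relation between $D$ and $D'$ — this is exactly what makes the statement an \emph{integral}-privacy-flavoured robustness guarantee rather than a differential-privacy one. A remark worth adding afterwards is that the slack $\varepsilon$ is additive and independent of how far apart $D$ and $D'$ are, which is the qualitative content of the lemma.
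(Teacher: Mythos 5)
Your argument is correct and is essentially the paper's own proof: the paper decomposes $\log(P/Q_{\epsilon}(;D))$ as $\log(P/Q_{\epsilon}(;D'))$ plus a log-ratio of the two learned densities, bounds that ratio by $\epsilon$ via the mollifier property, and integrates against $P$, which is exactly your pointwise-inequality-then-integrate route in a slightly different order. No gap; your remarks on the direction of \eqref{ipriv} and on the absence of any $D \approx D'$ requirement match the paper's intent.
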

\begin{proof}
The proof is straightforward; we give it for completeness: for any dataset $D$, we have
\begin{align} 
\text{KL}(P,Q_{\epsilon}(;D)) &= \int_{\mathcal{X}} \log\bracket{\frac{P}{Q_{\epsilon}(;D) }}dP\\ &= \int_{\mathcal{X}} \log\bracket{\frac{P}{Q_{\epsilon}(;D') }}dP + \int_{\mathcal{X}} \log\bracket{\frac{Q_{\epsilon}(;D)}{Q_{\epsilon}(;D') }}dP\\ &\leq \int_{\mathcal{X}} \log\bracket{\frac{P}{Q_{\epsilon}(;D') }}dP + \epsilon \cdot \int_{\mathcal{X}} dP\\ &= \text{KL}(P,Q_{\epsilon}(;D')) + \epsilon \\ &\leq \delta + \epsilon,
\end{align}
from which we derive the statement of Lemma \ref{nonTrivial} assuming $\mathcal{A}$ is $\epsilon$-IP (the inequalities follow from the Lemma's assumption).
\end{proof}
In the jargon of (computational) information geometry \cite{bnnBV}, we can summarize Lemma \ref{nonTrivial} as saying that if there exists an eligible\footnote{Within the chosen $\varepsilon$-mollifier.} density in a small KL-ball relatively to $P$, we are guaranteed to find a density also in a small KL-ball relatively to $P$. This result is obviously good when the premises hold true, but it does not tell the full story when they do not. In fact, when there exists an eligible density outside a big KL-ball relatively to $P$, it is not hard to show using the same arguments as for the Lemma that we \textit{cannot} find a good one, and this is not a feature of \pkde: this would hold regardless of the algorithm. This limitation is intrinsic to the likelihood ratio constraint of differential privacy in \eqref{constDP} and not to the neighborhing constraint that we alleviate in integral privacy. It is therefore also a limitation of classical $\varepsilon$-differential privacy, as the following Lemma shows. In the context of $\varepsilon$-DP, we assume that all input datasets have the same size, say $m$.
\begin{lemma}\label{nonTrivial2}
Let $\mathcal{A}$ denote an algorithm learning an $\varepsilon$-differentially private density. Denote $D\sim P$ an input of the algorithm and $\mathcal{Q}_{\epsilon}(D)$ the set of all densities that can be the output of $\mathcal{A}$ on input $D$, taking in considerations all internal randomisations of $\mathcal{A}$. Suppose there exists an input $D'$ for which one of these densities is far from the target: $\exists D', \exists Q \in  \mathcal{Q}_{\epsilon}(D'): \text{KL}(P,Q(;D')) \geq \Delta$ for some "big" $\Delta > 0$. Then the output $Q$ of $\mathcal{A}$ obtained from \textbf{any} input $D\sim P$ satisfies: $\text{KL}(P,Q(; D)) \geq \Delta - m \varepsilon$.
\end{lemma}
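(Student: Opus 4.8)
The plan is to run the same short computation that proves Lemma \ref{nonTrivial}, with the single-observation mollifier ratio $\exp(\varepsilon)$ replaced by the group-privacy ratio $\exp(m\varepsilon)$ that $\varepsilon$-DP grants between two datasets of common size $m$. First I would fix the internal randomness for which the hypothesis holds and write $Q(;D)$ for the density $\mathcal{A}$ then outputs on input $D$ (so $Q(;D) \in \mathcal{Q}_\varepsilon(D)$ for every $D$, and $\text{KL}(P, Q(;D')) \geq \Delta$ by assumption). Reading ``$\mathcal{A}$ learns an $\varepsilon$-differentially private density'' as the statement that the learned density, viewed as a function of the input, obeys the DP mass inequality $\int_S dQ(;D_1) \leq \exp(\varepsilon) \int_S dQ(;D_2)$ for all measurable $S$ whenever $D_1 \approx D_2$, I would let $S$ shrink to a point to obtain the pointwise bound $Q(;D_1)(x) \leq \exp(\varepsilon)\, Q(;D_2)(x)$ for a.e.\ $x$ --- the $\varepsilon$-DP counterpart of membership in an $\varepsilon$-mollifier.

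Next I would invoke group privacy: since every admissible input has size exactly $m$, any two datasets $D$ and $D'$ are joined by a chain of at most $m$ single-observation changes, and iterating the previous bound along that chain yields $Q(;D)(x) \leq \exp(m\varepsilon)\, Q(;D')(x)$, equivalently $\log\bracket{Q(;D')(x)/Q(;D)(x)} \geq -m\varepsilon$, for a.e.\ $x$ and for arbitrary equal-size $D, D'$ --- precisely inequality \eqref{ipriv} with $\varepsilon$ replaced by $m\varepsilon$ and the neighbour restriction dropped. The claim then follows from the decomposition
\begin{align}
\text{KL}(P, Q(;D)) &= \int_{\mathcal{X}} \log\bracket{\frac{P}{Q(;D')}} dP + \int_{\mathcal{X}} \log\bracket{\frac{Q(;D')}{Q(;D)}} dP \nonumber\\
&\geq \text{KL}(P, Q(;D')) - m\varepsilon \cdot \int_{\mathcal{X}} dP \;=\; \text{KL}(P, Q(;D')) - m\varepsilon \;\geq\; \Delta - m\varepsilon ,
\end{align}
and since the internal randomness was arbitrary this holds for the output of $\mathcal{A}$ on any input $D$ (the bound being informative exactly when the ``big'' $\Delta$ exceeds the group budget $m\varepsilon$).

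I expect the only real obstacle to be interpretive, not computational: making ``$\varepsilon$-DP learned density'' precise enough to license the pointwise density-ratio bound, and clarifying that $Q(;D)$ and $Q(;D')$ denote the same run of $\mathcal{A}$ on different inputs. Once that is settled, the argument is a one-line adaptation of the proof of Lemma \ref{nonTrivial}: the mollifier slack $\varepsilon$ becomes the group-privacy slack $m\varepsilon$, and the inequality points the other way because the premise says $Q(;D')$ is \emph{far} from $P$ rather than close to it.
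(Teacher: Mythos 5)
Your proposal is correct and follows essentially the same route as the paper's proof: both hinge on chaining at most $m$ single-observation changes from $D$ to $D'$, using the pointwise $\exp(\varepsilon)$ density-ratio bound for neighbouring inputs, and plugging the resulting $\exp(m\varepsilon)$ group bound into the KL decomposition $\text{KL}(P,Q(;D')) = \text{KL}(P,Q(;D)) + \int \log\bracket{Q(;D)/Q(;D')}dP$. The only cosmetic difference is that you collapse the chain into a single pointwise ratio before integrating, whereas the paper telescopes the $k+1 \leq m$ log-ratio integrals (each bounded by $\varepsilon$) directly inside the decomposition; the content is the same.
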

\begin{proof}
Denote $D$ the actual input of $\mathcal{A}$. There exists a sequence $\mathcal{D}$ of datasets of the same size, whose length is at most $m$, which transforms $D$ into $D'$ by repeatedly changing one observation in the current dataset: call it $\mathcal{D} = \{D, D_1, D_2, ..., D_k, D'\}$, with $k \leq m-1$. Denote $Q(; D'')$ any element of $\mathcal{Q}_{\epsilon}(D'')$ for $D'' \in \mathcal{D}$. Since $\mathcal{A}$ is $\varepsilon$-differentially private, we have:
\begin{align} 
\Delta & \leq \text{KL}(P,Q(;D')) \\
&= \int_{\mathcal{X}} \log\bracket{\frac{P}{Q(; D')} }dP\\ &= \int_{\mathcal{X}} \log\bracket{\frac{P}{Q(; D)}}dP + \int_{\mathcal{X}} \log\bracket{\frac{Q(;D)}{Q(;D_1) }}dP + \sum_{j=1}^{k-1} \int_{\mathcal{X}} \log\bracket{\frac{Q(;D_j)}{Q(;D_{j+1}) }}dP + \int_{\mathcal{X}} \log\bracket{\frac{Q(;D_k)}{Q(;D') }}dP \\ &= \text{KL}(P,Q(;D)) + \int_{\mathcal{X}} \log\bracket{\frac{Q(;D)}{Q(;D_1) }}dP + \sum_{j=1}^{k-1} \int_{\mathcal{X}} \log\bracket{\frac{Q(;D_j)}{Q(;D_{j+1}) }}dP + \int_{\mathcal{X}} \log\bracket{\frac{Q(;D_k)}{Q(;D') }}dP \\ &\leq \text{KL}(P,Q(;D)) + m \epsilon,
\end{align}
from which we derive the statement of Lemma \ref{nonTrivial2}.
\end{proof}

\newpage
\section{Additional experiments}\label{supp-exp}

We provide here additional results to the main file. Figure \ref{RG_NLL_DPBvsUS} provides NLL values for the random 1D Gaussian problem.
Figure \ref{random:DPB-vs-proposed} displays that picking $Q_0$ a standard Gaussian does not prevent to obtain good results --- and beat DPB --- when sampling random Gaussians.

\begin{figure*}
\centering
\begin{tabular}{cc||cc}\\ \hline \hline
\multicolumn{2}{c||}{DPB} & \multicolumn{2}{c}{\pkde} 
  \\ 
\hspace{\whep} \includegraphics[trim=20bp 20bp 12bp
20bp,clip,width=\whed]{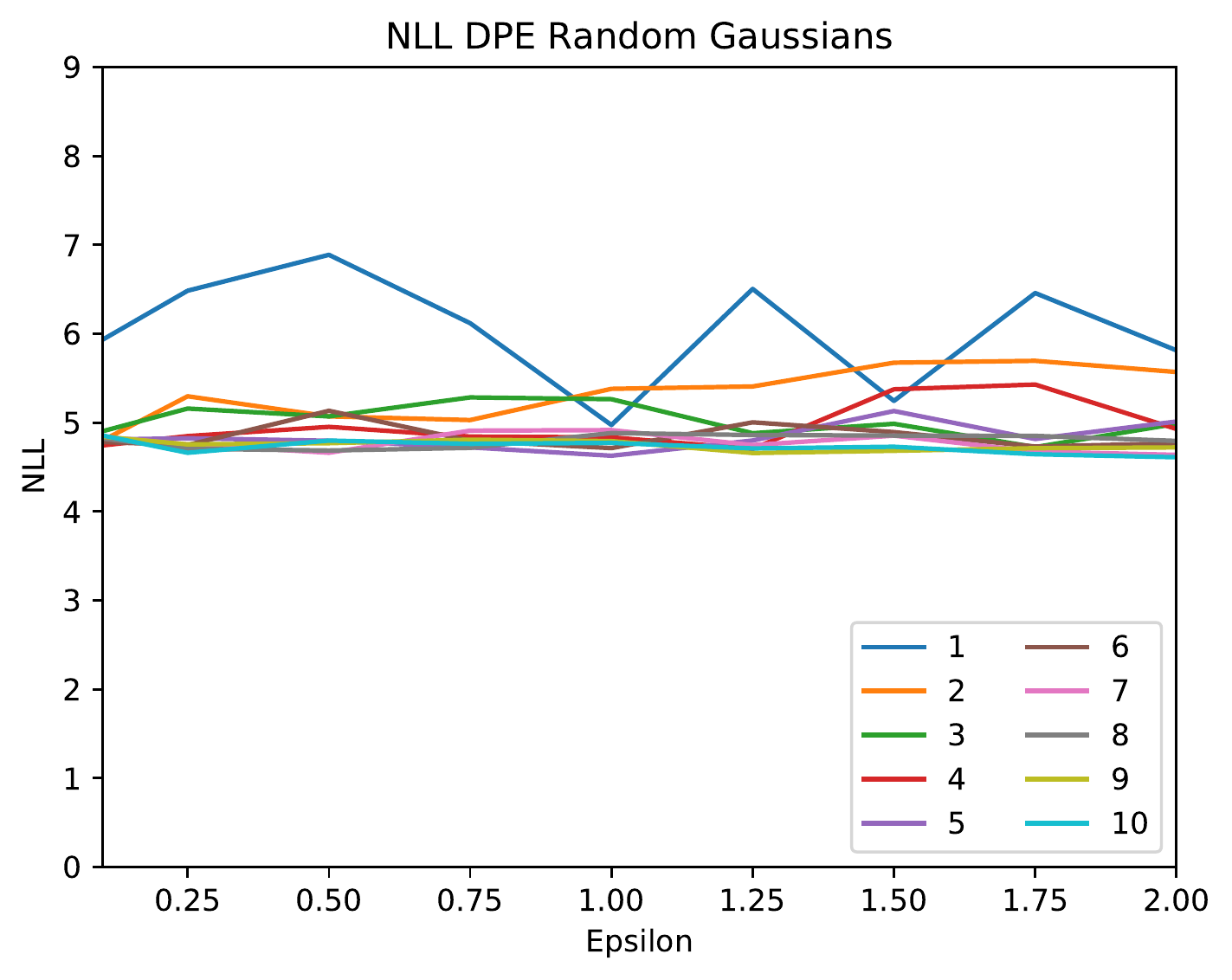}
\hspace{\whep} & \hspace{\whep} \includegraphics[trim=22bp 20bp 12bp
20bp,clip,width=\whed]{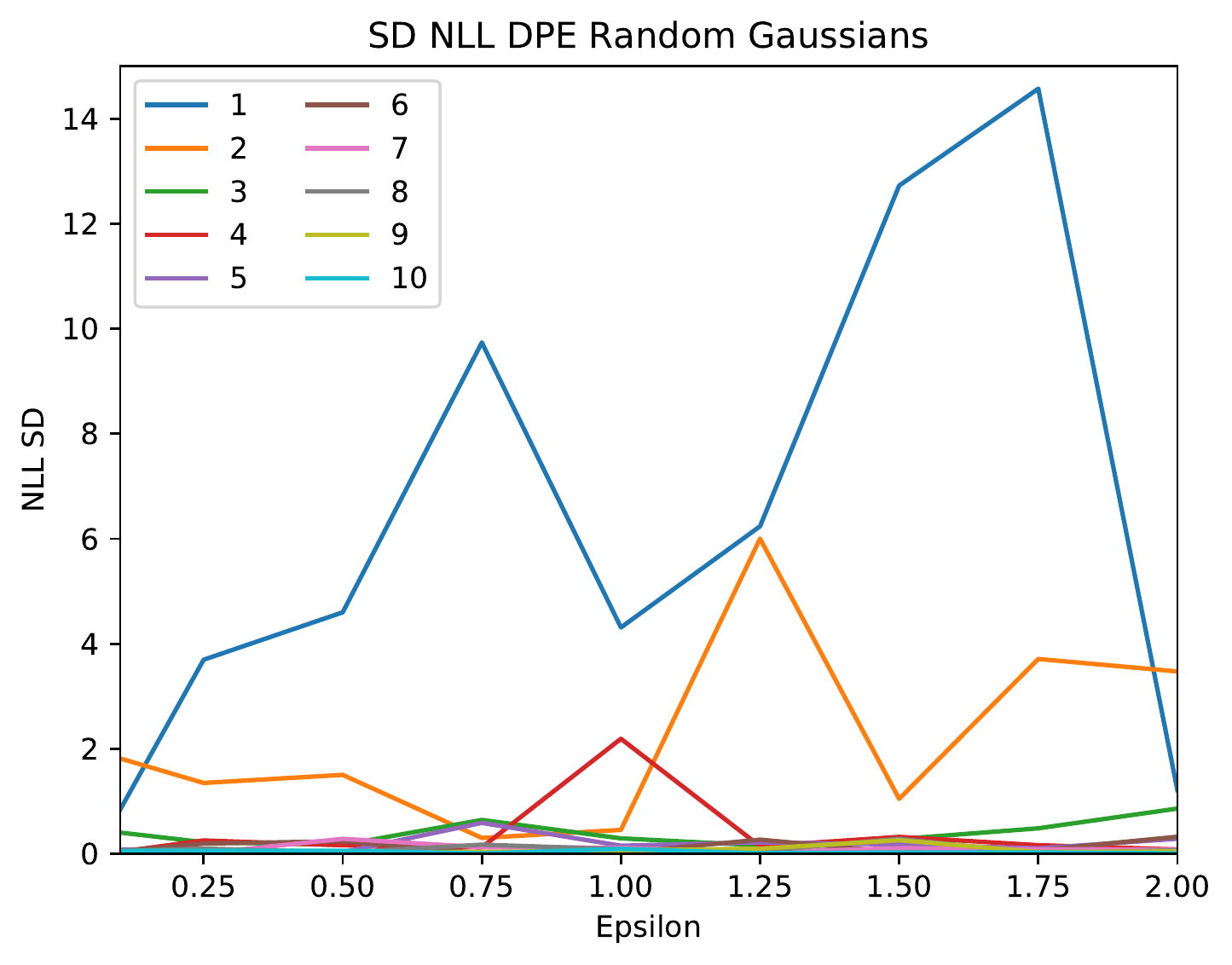} \hspace{\whep} & \hspace{\whep} \includegraphics[trim=20bp 20bp 8bp
20bp,clip,width=\whed]{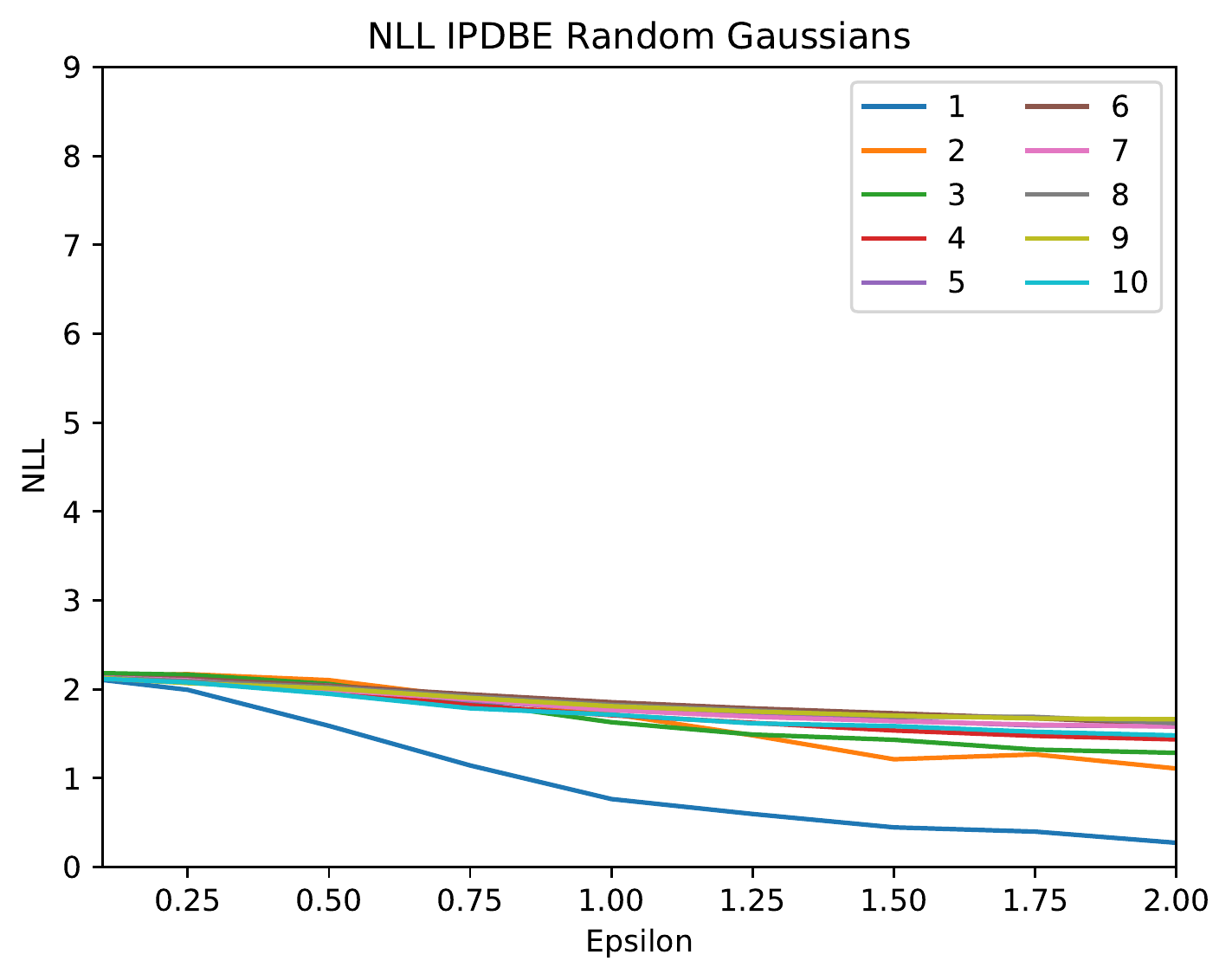} 
\hspace{\whep} & \hspace{\whep}\includegraphics[trim=20bp 20bp 12bp
22bp,clip,width=\whed]{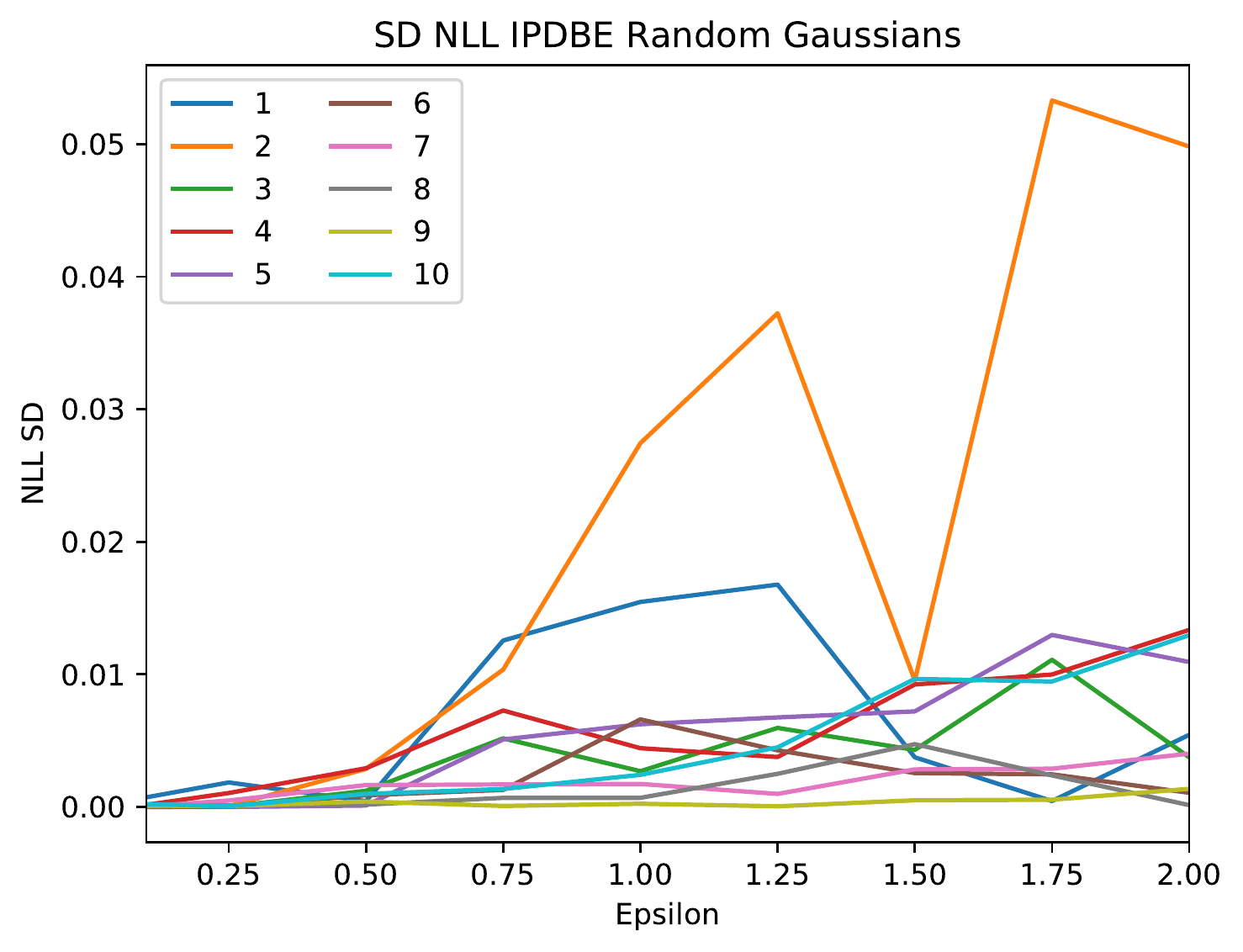} \hspace{\whep} \\
\hspace{\whep}  Mean = f($\varepsilon$) \hspace{\whep} & \hspace{\whep}
                                                        StDev
                                                        =
                                                        f($\varepsilon$)
                                                        \hspace{\whep}
               & \hspace{\whep} Mean = f($\varepsilon$) \hspace{\whep}
                                                      & \hspace{\whep}
                                                        StDev
                                                        =
                                                        f($\varepsilon$)
  \\ \hline\hline
\end{tabular}
\caption{NLL metrics (mean and standard deviation) on the 1D random Gaussian problem for DPB (left pane)
  and \pkde~(right pane), for a varying number of $m = 1,\ldots,10$
  random Gaussians. The lower the better on each metric. Remark
  the different scales for StDev (see text).}
\label{RG_NLL_DPBvsUS}
\vspace{-0.3cm}
\end{figure*}

\newcommand{\whee}{1.95cm}

\begin{figure}
\centering
\scalebox{.78}{\begin{tabular}{ccccccccc}
                 \nspp \includegraphics[width=\whee,height=\whee]{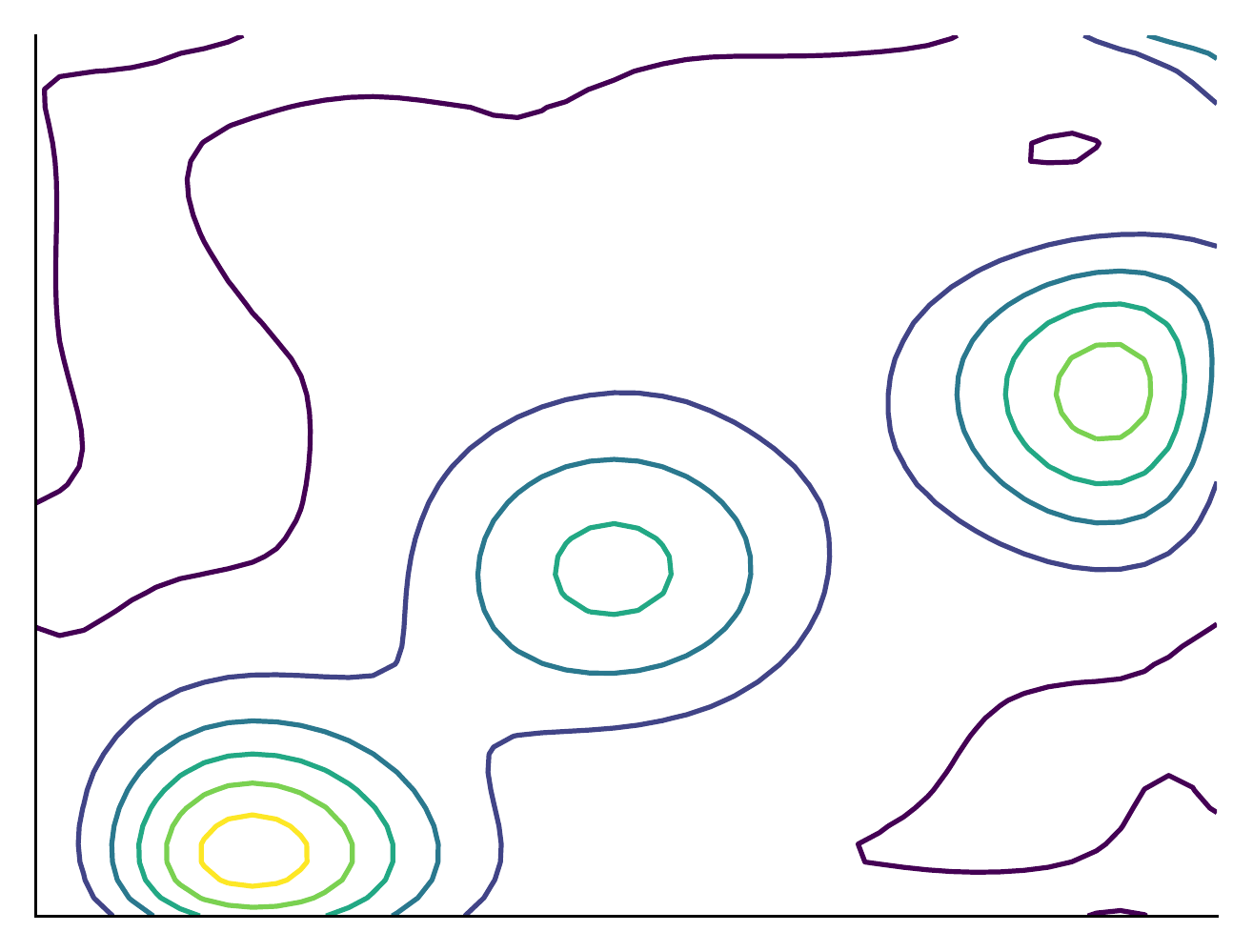}
                 \nsp & \nsp \includegraphics[width=\whee,height=\whee]{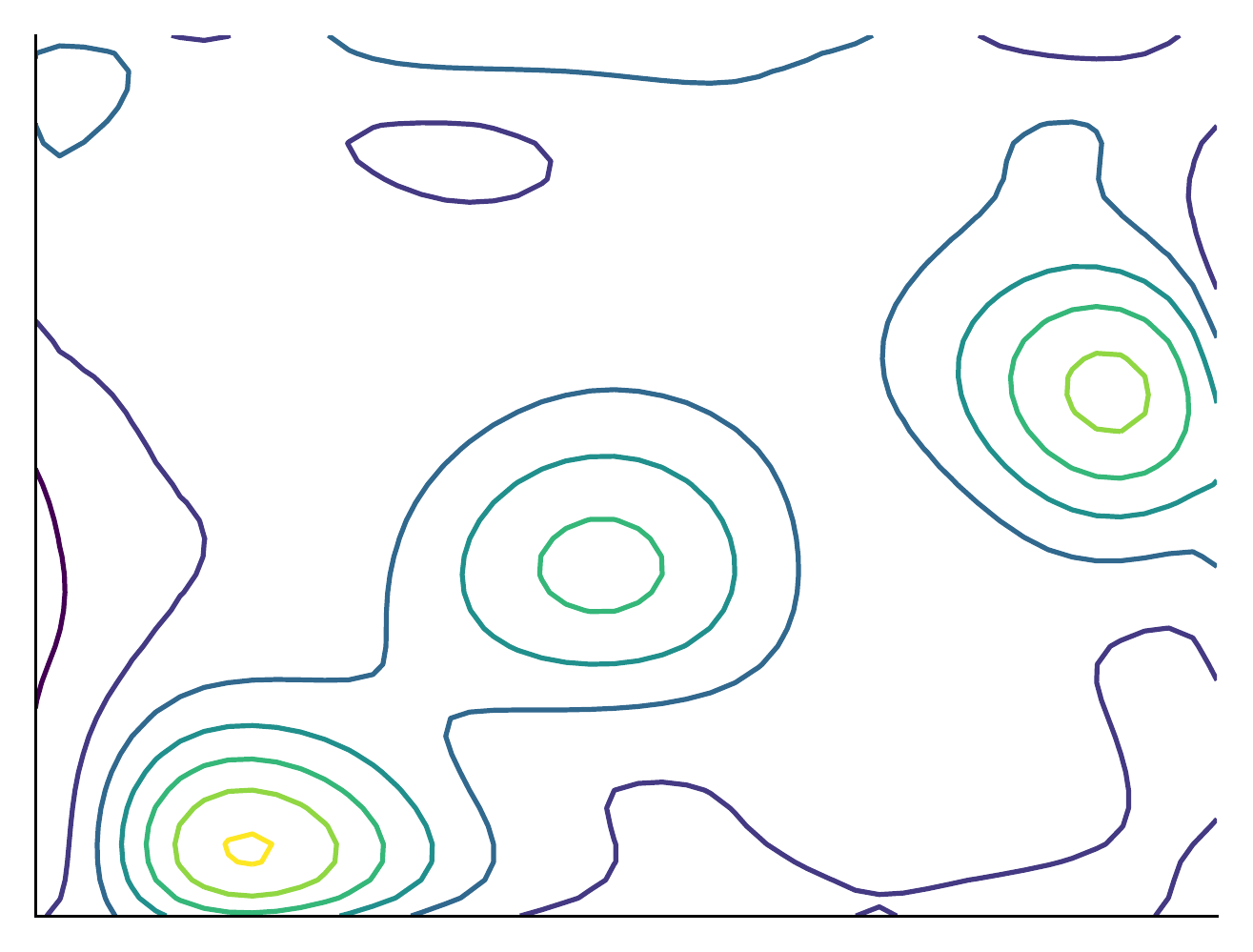}
                 \nsp & \nsp
                   \includegraphics[width=\whee,height=\whee]{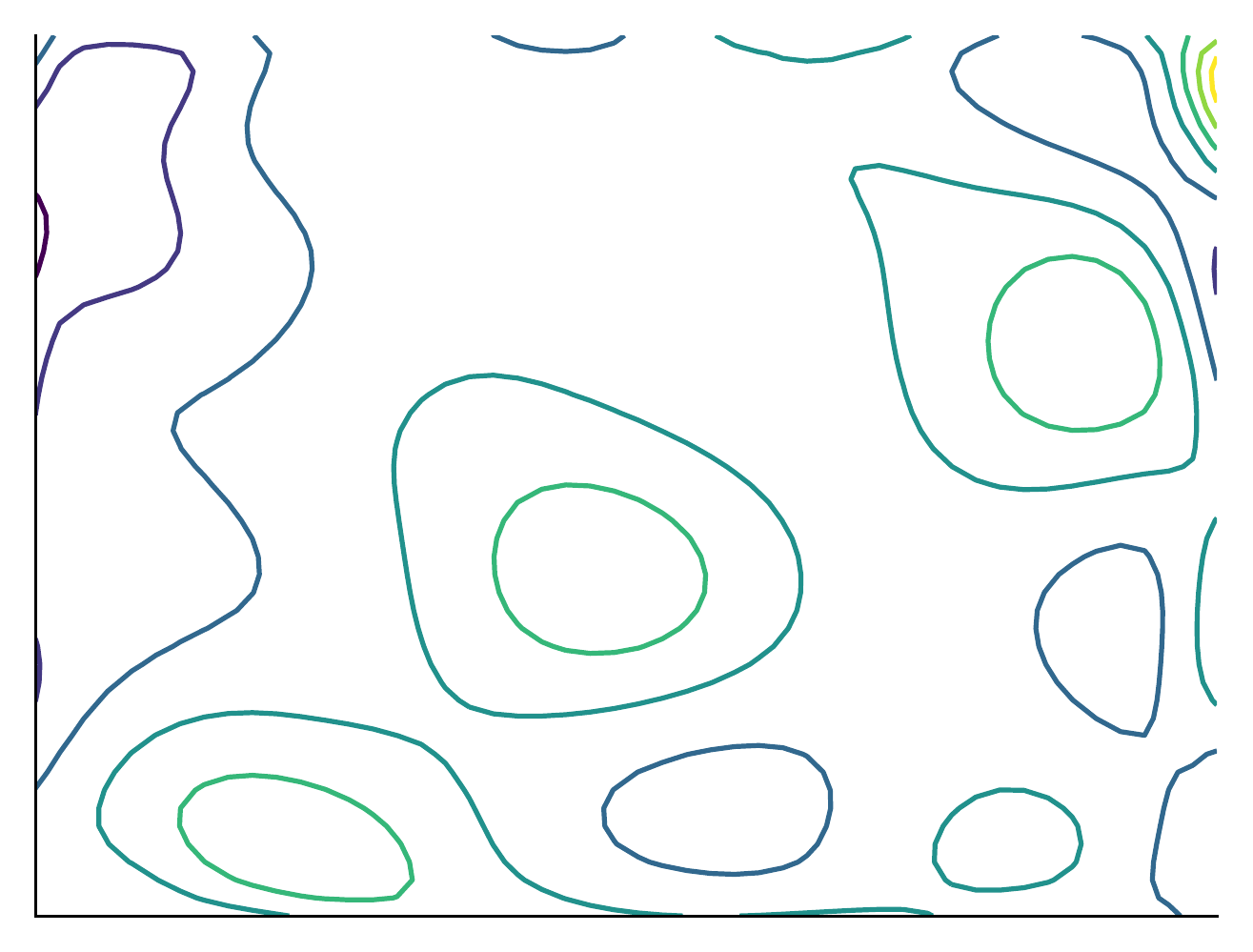}
                 \nsp & \nsp \includegraphics[width=\whee,height=\whee]{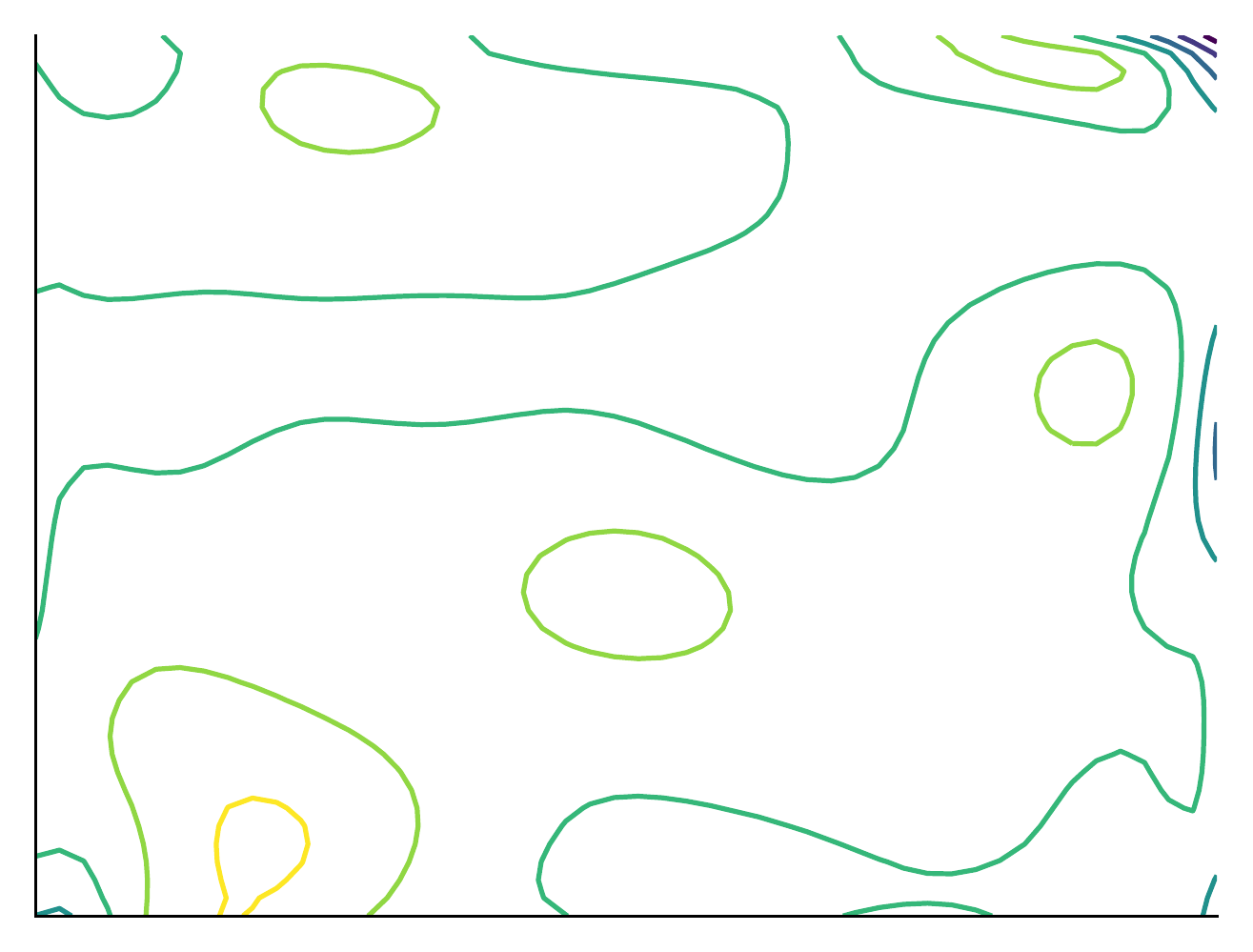}
                   \nsp & \nsp
                     \includegraphics[width=\whee,height=\whee]{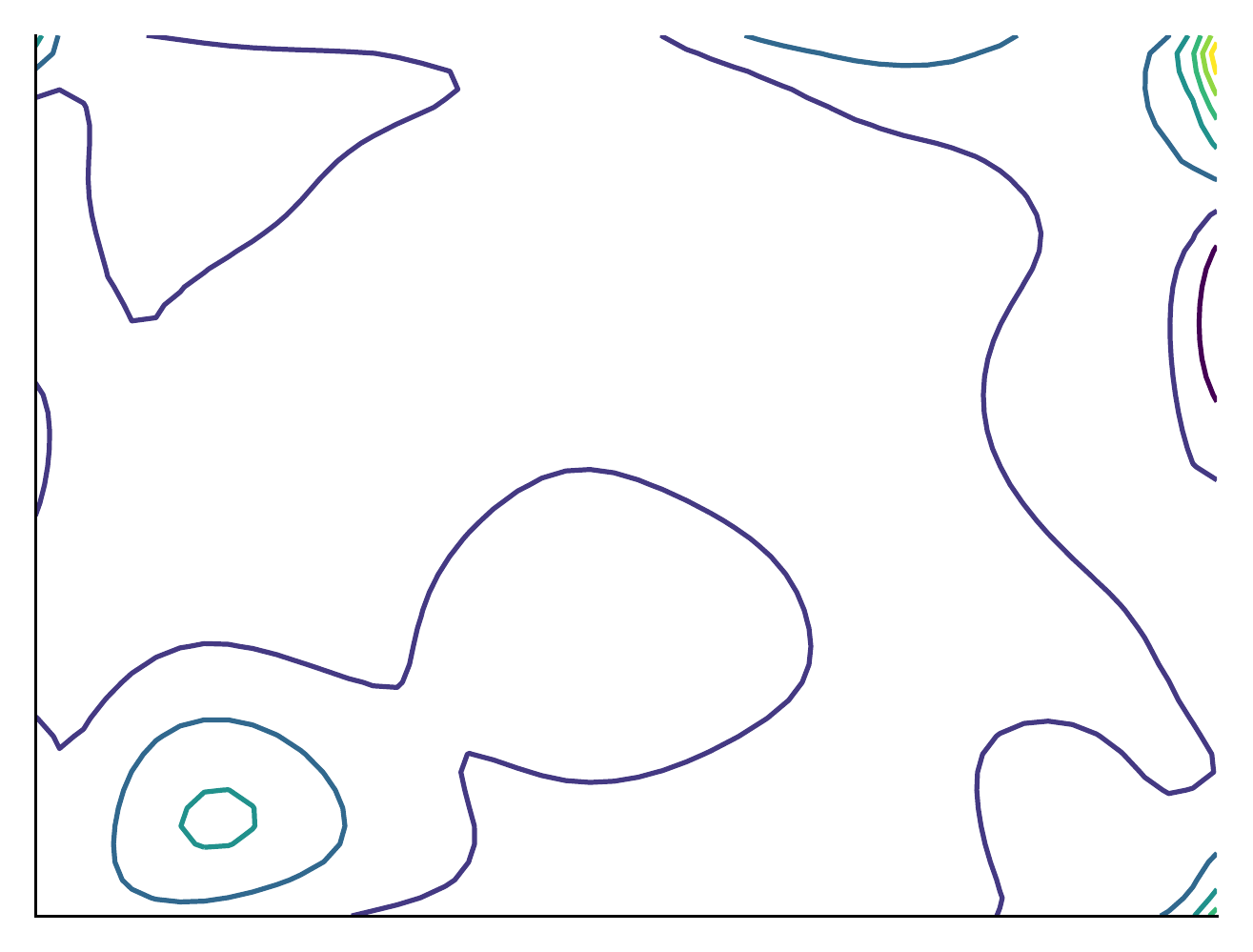}
                     \nsp & \nsp
                       \includegraphics[width=\whee,height=\whee]{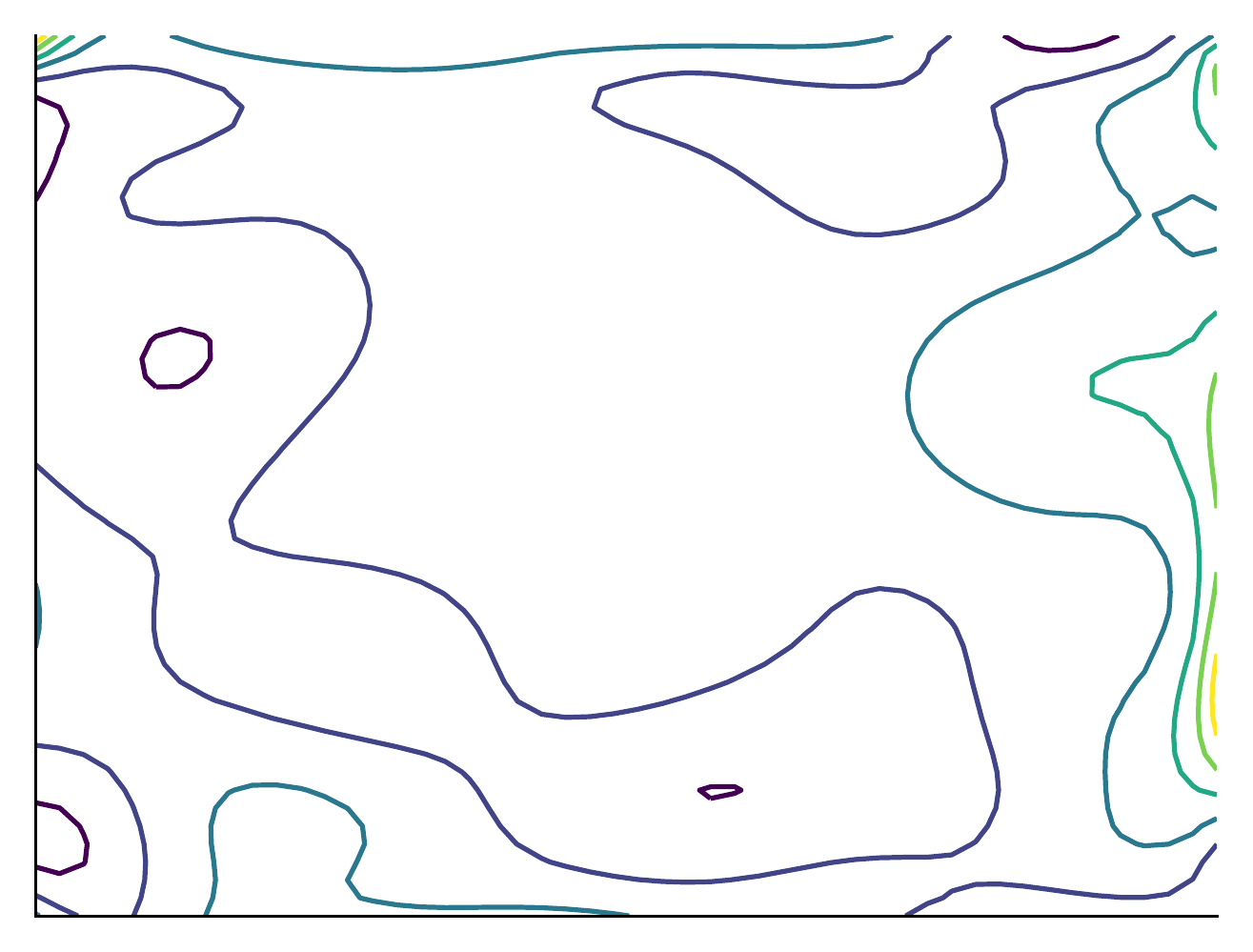}
                       \nsp & \nsp
                         \includegraphics[width=\whee,height=\whee]{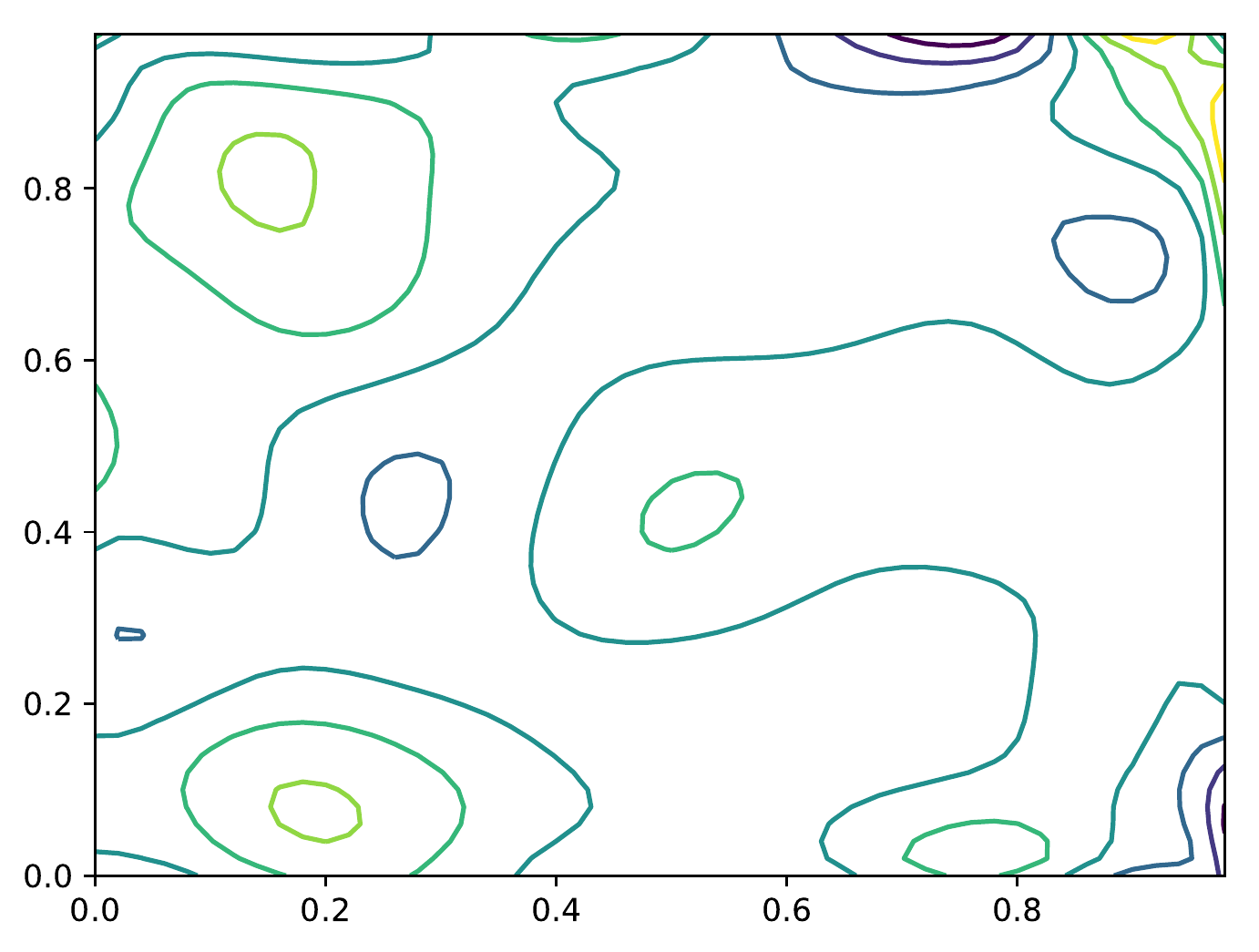}
                         \nsp & \nsp
                           \includegraphics[width=\whee,height=\whee]{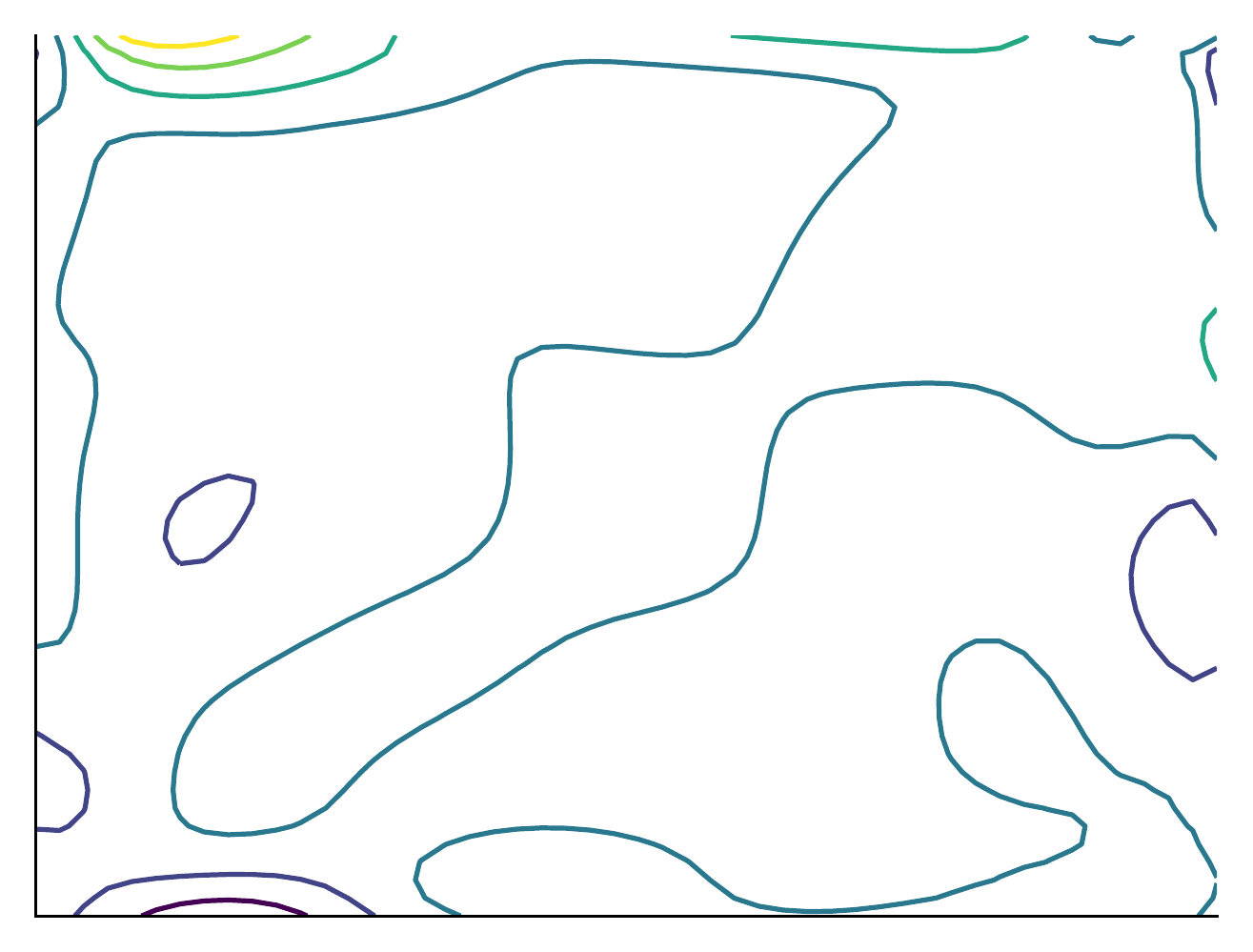}
                           \nsp & \nsp
                                  \includegraphics[width=\whee,height=\whee]{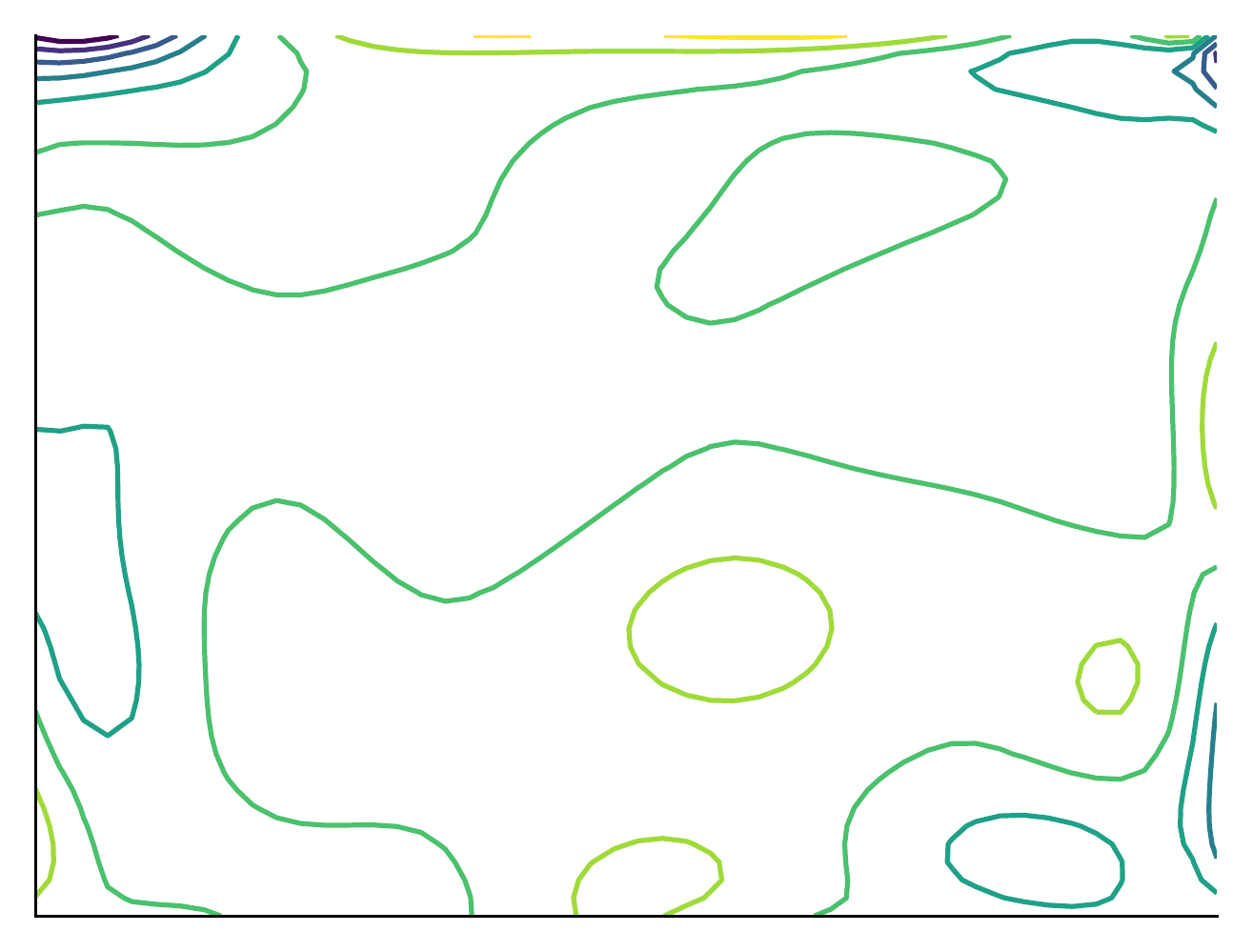}\\
                 \nspp \includegraphics[width=\whee,height=\whee]{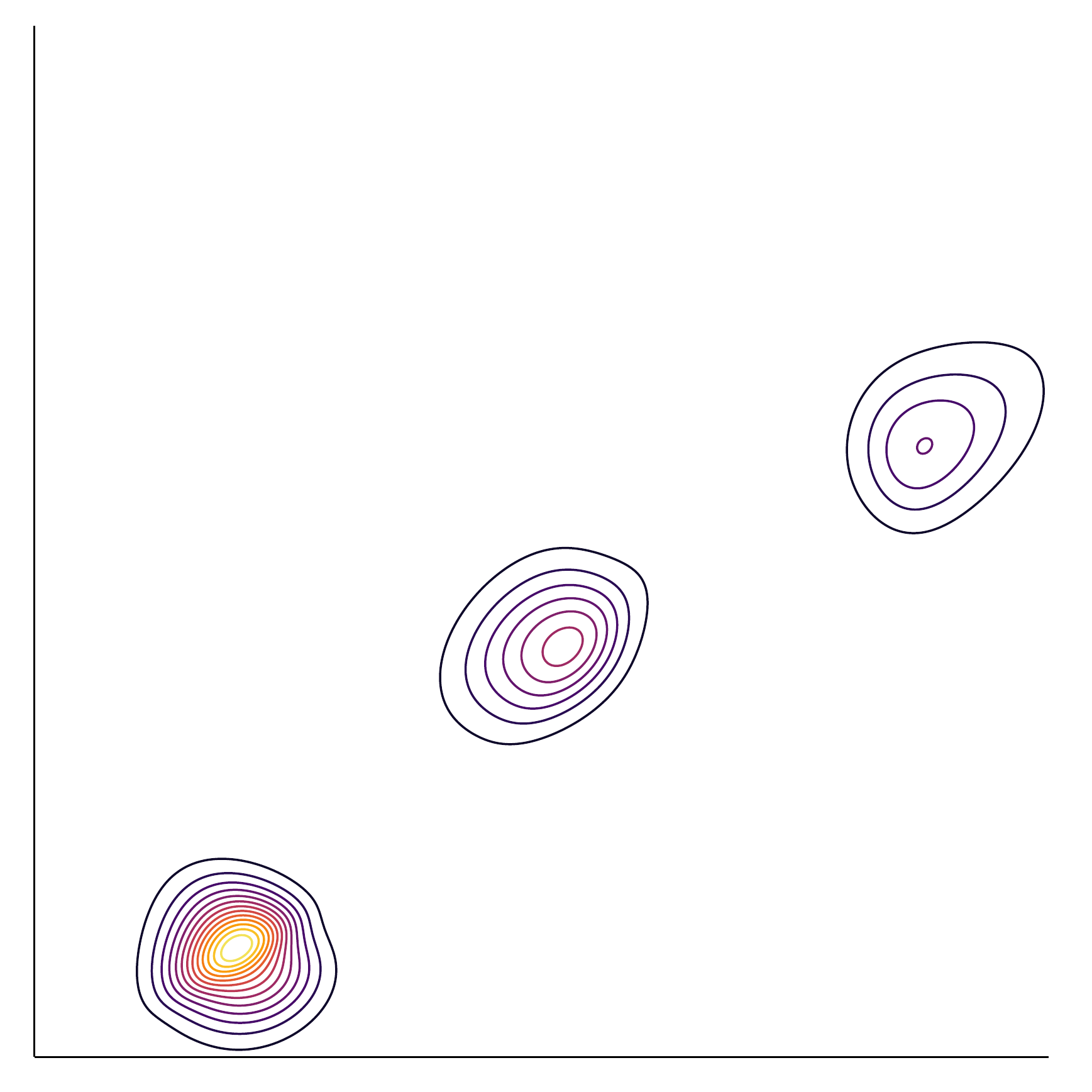}
                 \nsp & \nsp
                   \includegraphics[width=\whee,height=\whee]{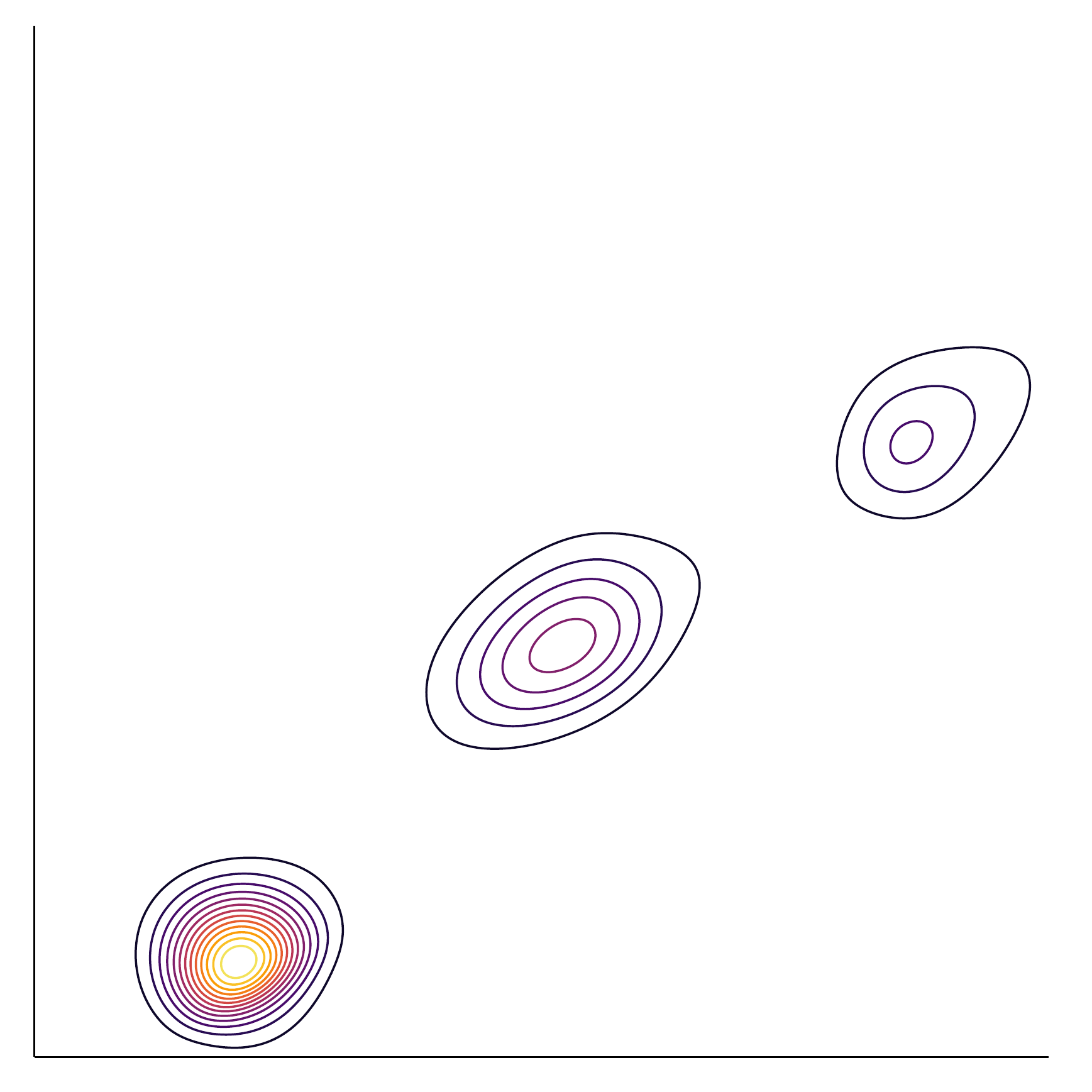}
                 \nsp &\nsp
                   \includegraphics[width=\whee,height=\whee]{Figs/random_gaussians_eps_2}
                 \nsp & \nsp \includegraphics[width=\whee,height=\whee]{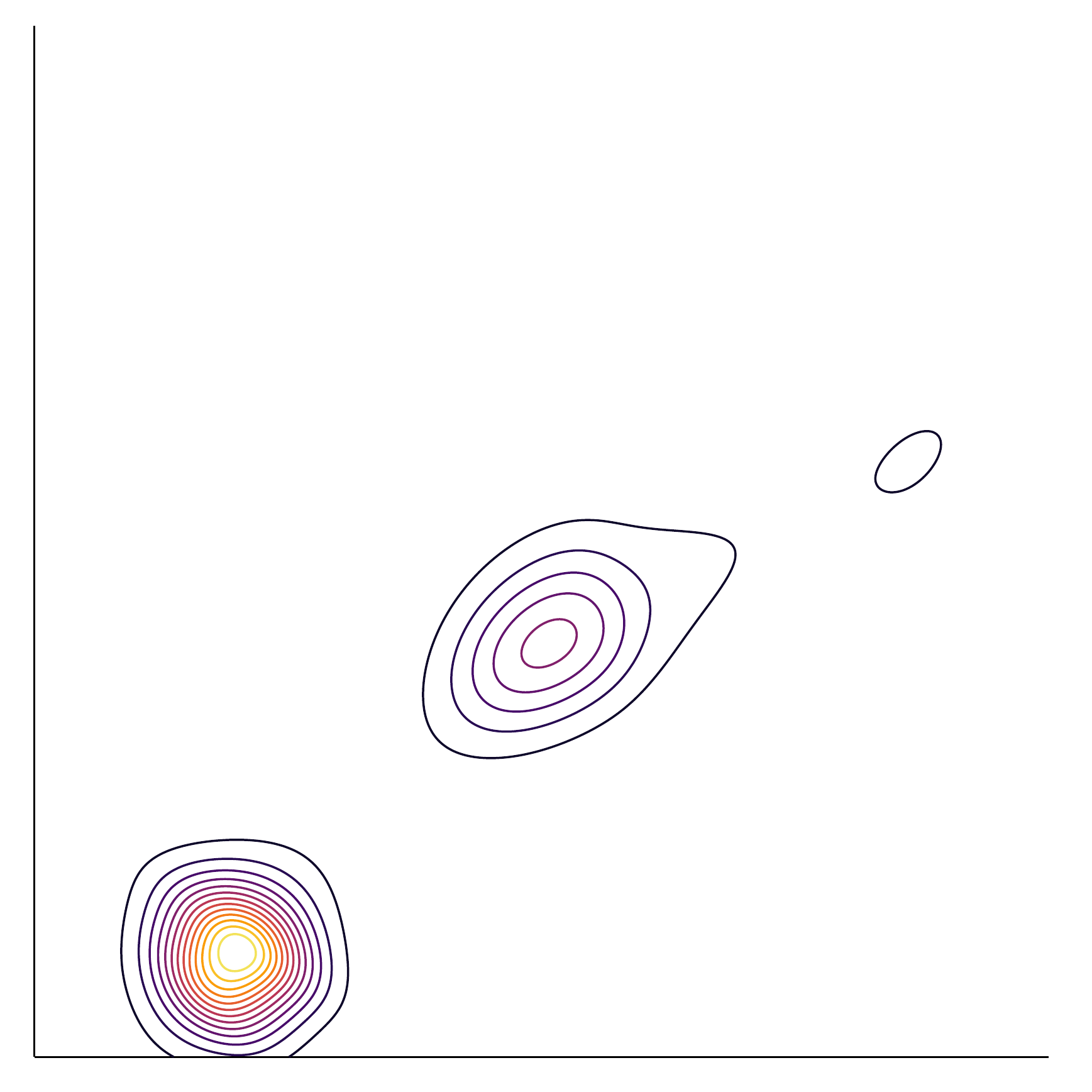}
                   \nsp & \nsp
                     \includegraphics[width=\whee,height=\whee]{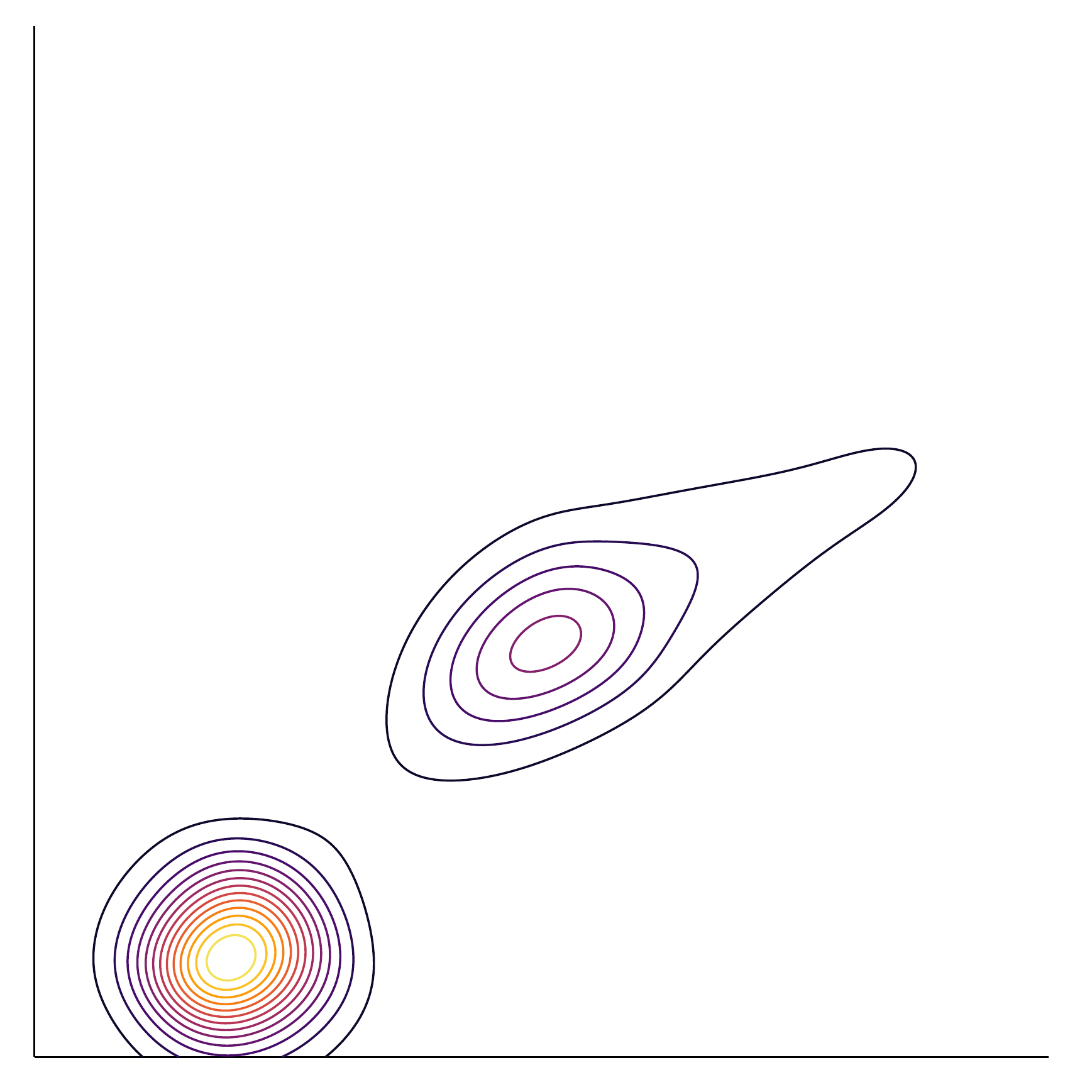}
                     \nsp & \nsp
                       \includegraphics[width=\whee,height=\whee]{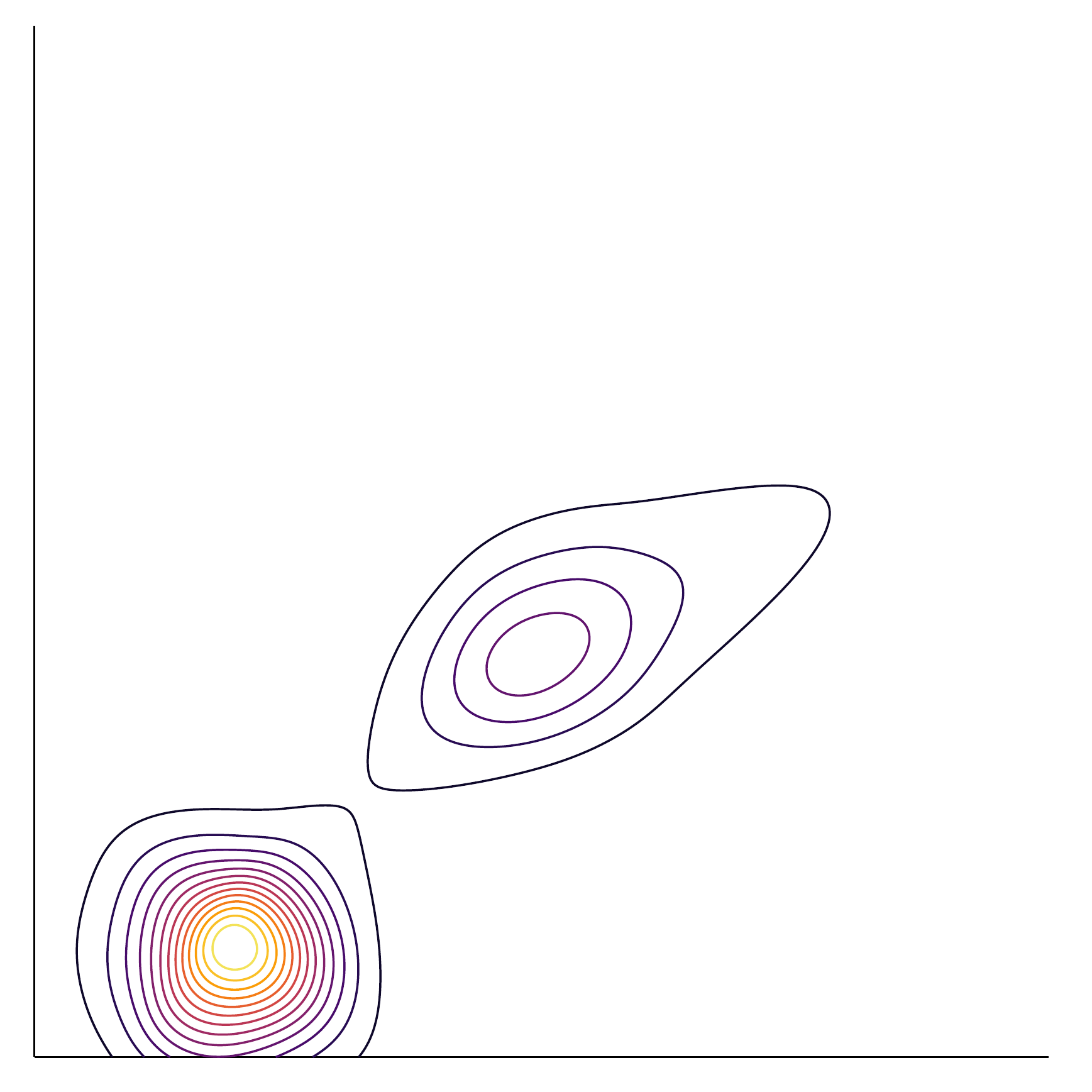}
                       \nsp & \nsp
                         \includegraphics[width=\whee,height=\whee]{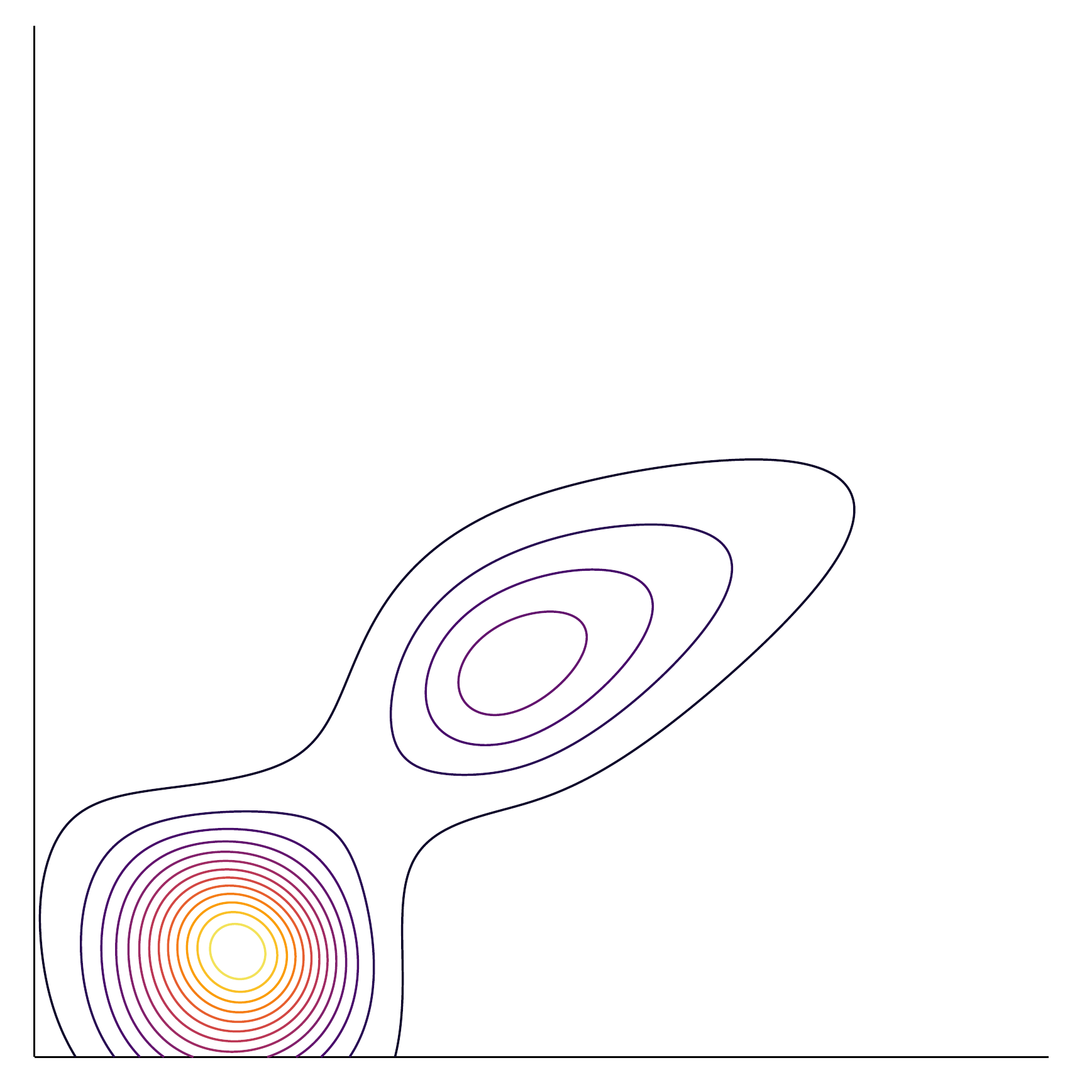}
                         \nsp & \nsp
                           \includegraphics[width=\whee,height=\whee]{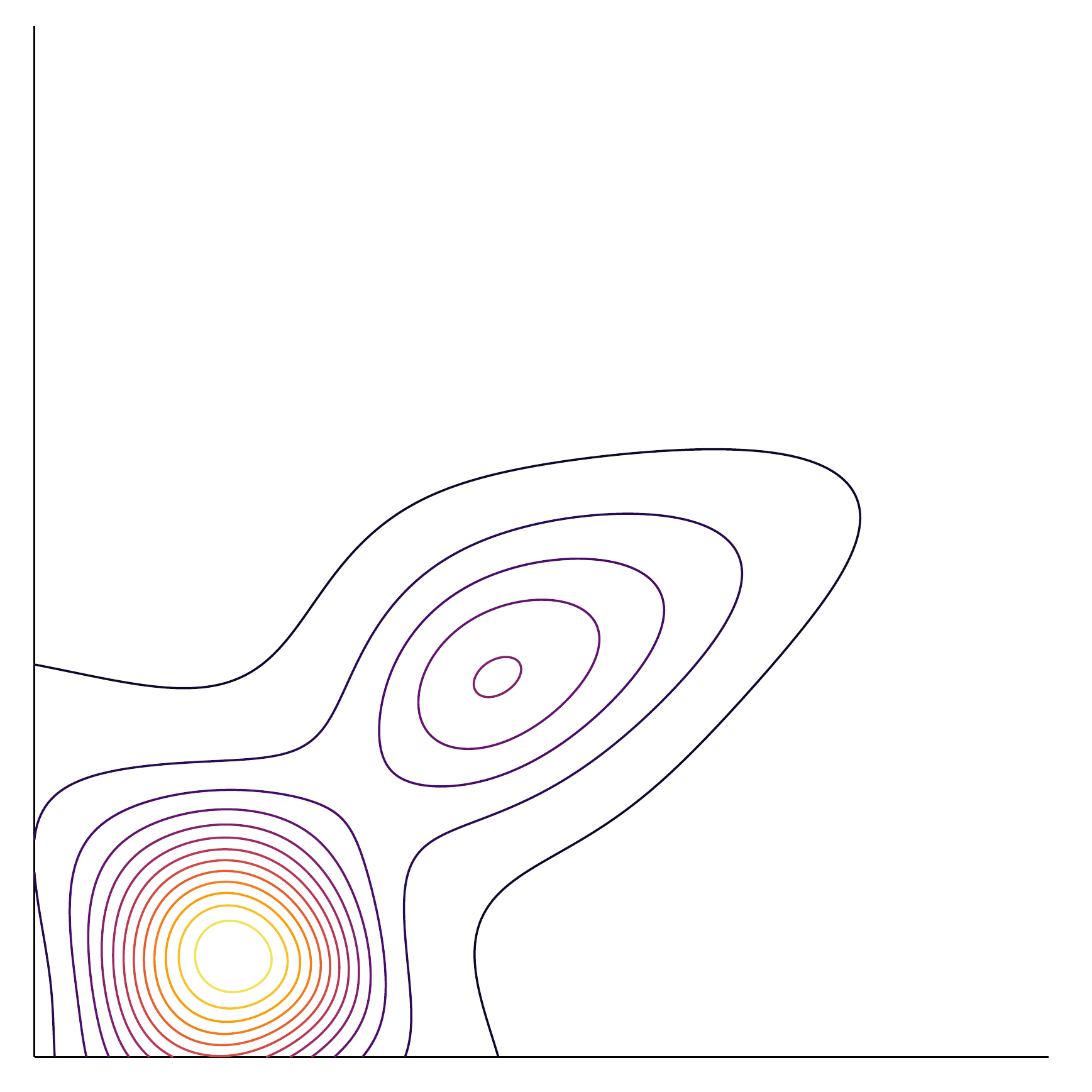}
                           \nsp & \nsp
                                  \includegraphics[width=\whee,height=\whee]{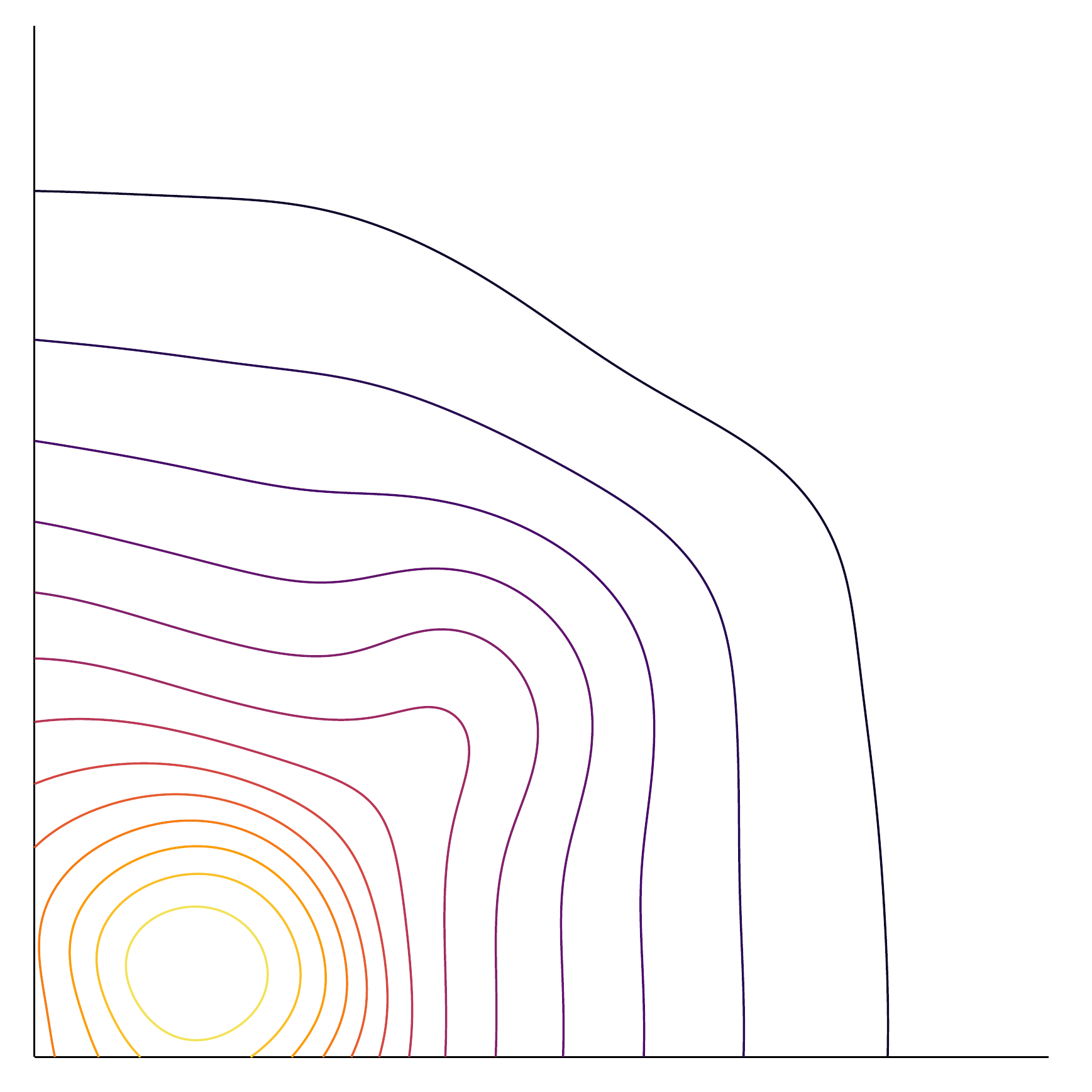}\\
                 \nspp $\epsilon = 10$
                 \nsp & \nsp
                   $\epsilon = 5$
                 \nsp & \nsp
                   $\epsilon = 2$
                 \nsp & \nsp $\epsilon = 1.5$
                   \nsp & \nsp
                     $\epsilon = 1$
                     \nsp & \nsp
                       $\epsilon = 0.75$
                       \nsp & \nsp
                         $\epsilon = 0.5$
                         \nsp & \nsp
                           $\epsilon = 0.25$
                           \nsp & \nsp
                                  $\epsilon = 0.1$
                 \end{tabular}
}
\caption{Randomly placed Gaussian convergence comparison for DPB (upper) against \pkde~ (lower).}
\label{random:DPB-vs-proposed}
\end{figure}

\end{document}